\documentclass[10pt]{article}
\usepackage[utf8]{inputenc}
\usepackage[english]{babel}
\usepackage{hyperref}
\usepackage{textcomp}
\usepackage[autostyle]{csquotes}

\usepackage[backend=biber,sorting=none,giveninits=true, maxbibnames=99]{biblatex}
\addbibresource{GEP_NNs.bib}

\AtBeginBibliography{\footnotesize}

\usepackage{csquotes}
\usepackage{amssymb}
\usepackage{amsthm}
\usepackage{amsfonts}
\usepackage{amssymb}
\usepackage{graphicx}
\usepackage{comment}
\usepackage{geometry}
\geometry{top=4cm,left=2cm,right=2cm,bottom=3cm,heightrounded,bindingoffset=5mm}
\usepackage{subfiles}
\usepackage{subfig}
\usepackage{mathtools}
\usepackage{pgfplots}
\pgfplotsset{/pgf/number format/use comma,compat=newest}
\usepackage{url}
\usepackage{tikz}
\usepackage{notations}
\usepackage{bbm}
\usepackage{authblk}
\usepackage{floatflt}
\usepackage{subfig}

\usetikzlibrary{arrows,decorations.pathmorphing,backgrounds,positioning,fit,petri}

\usepgfplotslibrary{fillbetween}

\newcommand{\E}{\mathbb{E}}
\newcommand{\R}{\mathbb{R}}

\newcommand{\V}{\mathbb{V}}

\newcommand{\cZ}{\mathcal{Z}}

\theoremstyle{plain} 
\newtheorem{theorem}{Theorem} 

\newtheorem{corollary}[theorem]{Corollary} 
\newtheorem{lemma}[theorem]{Lemma} 
\newtheorem{proposition}[theorem]{Proposition} 

\theoremstyle{definition}

\theoremstyle{remark} 
\newtheorem{remark}{Remark}

\newcommand\iid{\mathrel{\stackrel{\makebox[0pt]{\mbox{\normalfont\tiny iid}}}{\sim}}}

\newcommand\distreq{\mathrel{\stackrel{\makebox[0pt]{\mbox{\normalfont\tiny D}}}{=}}}

\title{Fundamental limits of overparametrized  \\ shallow neural networks for supervised learning}
\author{Francesco Camilli, Daria Tieplova and Jean Barbier}
\affil{\small\emph{The Abdus Salam International Center for Theoretical Physics, Trieste, Italy}\\
}

\begin{document}
\maketitle

{
\let\thefootnote\relax\footnotetext{\url{ {fcamilli,dtieplov,jbarbier}@ictp.it}
}

\begin{abstract}
    We carry out an information-theoretical analysis of a two-layer neural network trained from input-output pairs generated by a teacher network with matching architecture, in overparametrized regimes. Our results come in the form of bounds relating $i)$ the mutual information between training data and network weights, or $ii)$ the Bayes-optimal generalization error, to the same quantities but for a simpler (generalized) linear model for which explicit expressions are rigorously known. Our bounds, which are expressed in terms of the number of training samples, input dimension and number of hidden units, thus yield fundamental performance limits for any neural network (and actually any learning procedure) trained from limited data generated according to our two-layer teacher neural network model. The proof relies on rigorous tools from spin glasses and is guided by ``Gaussian equivalence principles'' lying at the core of numerous recent analyses of neural networks. With respect to the existing literature, which is either non-rigorous or restricted to the case of the learning of the readout weights only, our results are information-theoretic (i.e. are not specific to any learning algorithm) and, importantly, cover a setting where \emph{all} the network parameters are trained. 
\end{abstract}

\section{Introduction}
Artificial neural networks (NNs) are universal approximators \cite{Cybenko1989ApproximationBS,HORNIK1989359,Universality2017} with remarkable abilities for supervised learning tasks such as regression or classification. In particular, modern deep neural networks, originally inspired by multilayer perceptrons  \cite{Gardner-derrida-perceptron,Sompolinski_learning_examples,NelderGLM,MCCULLAGH1984285,JeanGLM_PNAS}, achieve exceptional performance in image classification or speech recognition \cite{Alzubaidi2021ReviewOD} just to name a few examples. However, despite the important activity revolving around them, their theoretical understanding remains rather poor. 

One reason for the lack of strong theoretical guarantees for realistic NN models is related to the complex interplay between at least three aspects, whose individual effects are hard to single out: their architecture, the structure inherent to the data sets on which they are trained, as well as the algorithms and optimization procedures used to do so. It is therefore of crucial interest to tackle well defined theoretical models which are rich enough to capture some of the features of real NNs while remaining theoretically tractable. In this work, we propose to analyse a teacher-student set-up from a Bayesian-optimal perspective, with random input data and dependent responses generated according to a rule based on a teacher NN. This setting has the advantage to disentangle the aforementioned three components of NN learning by allowing us to mostly focus on how the \emph{architecture} of the NN used for learning (and data generation), and how the amount of accessible data, influence the prediction performance. More precisely, we are going to show that when learning a complex rule linking \emph{unstructured} inputs to responses in the \emph{information-theoretic optimal way}, and this in an \emph{overparametrized regime}, then an explicit characterization of the prediction capabilities of the NN is possible. Our results being of an information-theoretic nature, they will not depend on a specific learning procedure. Moreover because the inputs will be structure-less, the conclusions drawn will essentially capture architecture-dependent features of the learning; in the present case, the effect of overparametrization. 

A key challenge one has to face when analysing NNs is the presence of non-linear activation functions, whose role is essential for the network expressivity. Models with linear activations cannot capture non-linearities, but they serve as a starting point for deeper understanding \cite{SompolinskyPRX2022}. The case of a narrow hidden layer was already studied more than thirty years ago \cite{schwarze1992generalization,schwarze1993learning,monasson1995learning} and more recently in \cite{aubin2018committee}. However, in the more challenging regimes where \emph{all layers are large and of comparable sizes} it was observed (for instance in \cite{montanari_surprises,MontanariMei-RF-regression,goldt2020hiddenmanifold,reeves_MarcGET,Yue2022GEP}) that certain NNs models behave like finely tuned linear models regardless of the activation type, given sufficient regularity. From this observations, a whole set of \enquote{Gaussian equivalence principles} (GEPs) have emerged as valuable tools for handling non-linear activations in both rigorous and more heuristic approaches. GEPs leverage a well-known fact in high-dimensional probability: suitably rescaled low-dimensional projections of high-dimensional vectors with weakly correlated components exhibit Gaussian behavior. Classical results \cite{sudakov1978proj,diaconis1984asymptotics} and recent developments \cite{reeves2017conditionalCLT,meckes2012approximation} support the validity of GEPs in various high-dimensional inference contexts, such as in the description of certain observables for shallow neural networks \cite{goldt2020hiddenmanifold,reeves_MarcGET,Yue2022GEP}. 

However, the extent to which GEPs apply to the \emph{information-theoretic study of NNs where all weights are learned} remains uncertain. Certain scaling regimes relating the number of data samples and network weights must cause GEPs to break down, as
NNs do not always behave like linear models \cite{ghorbani2020neural}. This paper aims to bridge this gap by means of rigorous mathematical physics techniques developed in the study of spin glasses. We demonstrate the existence of a scaling regime for two-layer networks where GEPs are rigorously applicable, with the number of data playing a central role. As a result, we establish the information-theoretical equivalence between a two-layer NN and a generalized linear model, that hence share the same optimal generalization error.

\paragraph{Notations} Bold notations are reserved for vectors and matrices. By default a vector $\bx$ is a column vector, and its transpose $\bx^\intercal$ is therefore a row vector. Thus the usual $L_2$ norm $\|\bx\|^2=\bx^\intercal \bx$ and $\bx\bx^\intercal$ is a rank-one projector. $\EE_A$ is an expectation with respect to the random variable $A$; $\EE$ is an expectation with respect to all random variables entering the ensuing expression. For a function $F$ of one argument we denote $F'$ its derivative. Notations like $i\le N$ always implicitly assume that the index $i$ starts at $1$. We often compactly write $\EE(\cdots)^2=\EE[(\cdots)^2]\ge (\EE(\cdots))^2=\EE^2(\cdots)$ and similarly for other functions, we denote equivalently $\EE[f(\cdots)]$ and $\EE f(\cdots)$. For any functions $f$ and $g$, $f=O(g)$ means that there exists a constant $K$ such that $|f|\leq K|g|$; in other words we simply denote $O(g):=O(|g|)$ where in the r.h.s. we use the standard big O notation. Hence, taken for instance a Gaussian r.v. $Z\sim\mathcal{N}(0,1)$, $f=O(Z)$ does \emph{not} imply $\E f =0$, but rather $|\E f|\leq \E|f|\leq K\E|Z|$.

\paragraph{Acknowledgements} The authors were funded by the European Union (ERC, CHORAL, project number 101039794). Views and
opinions expressed are however those of the author only and do not necessarily reflect those of the European Union or the European
Research Council. Neither the European Union nor the granting authority can be held responsible for them. We wish to thank Marco Mondelli for suggesting meaningful references, and Rosalba Pacelli and Pietro Rotondo for fruitful clarifications on \cite{SompolinskyPRX2022,rotondo2022statmechDNN}.

\section{Setting and main results}
\subsection{Bayesian-optimal learning of a shallow network in the teacher-student set-up}
We consider supervised learning with a Bayesian two-layer neural network  in a teacher-student setup, with matched architecture for teacher (i.e., data-generating) and student (i.e., trained) models. To be precise, let the dataset be $\mathcal{D}_n= \{(\bX_\mu,Y_\mu)\}_{\mu=1}^{n}$, with inputs $\bX_\mu\in\mathbb{R}^{d}$ and responses $Y_\mu\in\mathbb{R}$. The $Y_\mu$ could also be taken from $\mathbb{R}^K$ with $K$ independent of $d,p,n$ without much changes. The teacher network generates the responses according to the following rule:
\begin{align}
    \label{eq:channel}
    Y_\mu=f\Big(\frac{\ba^{*\intercal}}{\sqrt{p}}\varphi\Big(\frac{\bW^*\bX_\mu}{\sqrt{d}}\Big); \bA_\mu\Big)+\sqrt{\Delta}Z_\mu\,.
\end{align}
Here, $\ba^*\in\R^{p}$ and $\bW^*\in\R^{p\times d}$. The readout function $f$ can include stochasticity, modeled through its second argument $\bA_\mu$ which are independent and identically distributed (i.i.d.) random variables in $\R^k$ with $k$ fixed, whose common distribution is denoted by $P_A$. Whenever it is not specified, real functions are applied component-wise to vectors, such as the non linearities $f,\varphi$. We assume the following regularity hypotheses:
\begin{description}
    \item[A1)\label{Assum:phi}] The activation function $\varphi:\R\mapsto \R,\,\varphi\in C^3$ is $c$-Lipschitz for some absolute constant $c$, is an odd function, and has bounded second and third derivatives.
    \item[A2)\label{Assum:f}] The readout function $f$ as well as its first and second derivatives are $P_A$-almost surely bounded.
\end{description}

















We draw the independent inputs $\bX_\mu\iid\mathcal{N}(0,I_{d})$ from the standard Gaussian measure. Furthermore, Gaussian label noise $Z_\mu\iid\mathcal{N}(0,1)$ is added in \eqref{eq:channel}, whose variance is tuned by  $\Delta>0$. Introducing the output kernel
\begin{align}
    \label{eq:Pout}
    P_{\rm out}(y\mid x):=\int P_A(d\bA)\frac{1}{\sqrt{2\pi\Delta}}\exp\Big(
    -\frac{1}{2\Delta}(y-f(x;\bA))^2
    \Big)
    \,,
\end{align}
once can see that the random outputs are generated independently, conditionally on the teacher parameters $\btheta^*=(\ba^*,\bW^*)$ and the inputs, as
\begin{align}
    Y_\mu\sim P_{\rm out}\Big(\cdot\mid \frac{\ba^{*\intercal}}{\sqrt{p}}\varphi\Big(\frac{\bW^*\bX_\mu}{\sqrt{d}}\Big)\Big)
    \,.
\end{align}
The probability distribution for the weights of the teacher network are drawn from a centered Gaussian factorized prior distribution. Taking them with different variances is possible but it does not add much to the result, so we consider them all equal to one for our purposes: $a_i^*, W_{ij}^*\iid\mathcal{N}(0,1)$. The same Gaussian law is used as prior distribution for the Bayesian student neural network model. In empirical risk minimization, a Gaussian prior would induce a  $L^2$ norm regularization for the weights. 

In this paper we will instead deal with Bayesian learning in the so-called \emph{Bayes-optimal setting}, which is the proper framework to quantify the fundamental performance limits of neural networks. Concretely, the Bayes-optimal scenario corresponds to a realizable matched setting where the student network has exactly the same architecture as the teacher network used to generate the responses in the data $\mathcal{D}_n$. 
The analysis in this setting therefore yields the best information-theoretically achievable student's generalization performance whenever optimally trained, namely, when using Bayesian learning based on the posterior distribution of the student's parameters. As a consequence,
\begin{quote}
\emph{our results lay down fundamental upper bounds for the performance of any neural networks, Bayesian or not, trained in any possible manner (including empirical risk minimization rather than Bayesian learning) or, more generally, for any learning procedure for the supervised task at hand. Moreover, both the readout and internal hidden layer are trained, with a number of hidden units that can grow proportionally to the input dimension.}
\end{quote}
To the best of our knowledge, this is the first rigorous result of this kind in this challenging scaling regime.

More formally, a student network is said to be Bayes-optimal if it employs the same output kernel $P_{\rm out}$ as used by the teacher network, or equivalently same number of layers and layers widths, readout $f$ and activation $\varphi$, label noise variance $\Delta$, as well as same prior law for its weights. Bayes-optimal learning is then based on the Bayes posterior of the network parameters $\btheta=(\ba,\bW)$ which reads 
\begin{align}\label{eq:posterior}
    dP(\btheta\mid\mathcal{D}_n)= \frac 1{ \cZ(\mathcal{D}_n)}
    \prod_{\mu=1}^{n}P_{\rm out}\Big(
    Y_\mu\mid \frac{\ba^{\intercal}}{\sqrt{p}}\varphi\Big(\frac{\bW \bX_\mu}{\sqrt{d}}\Big)
    \Big)D\btheta
\end{align}where for brevity
\begin{align}
    D\btheta:=\prod_{i=1}^{p}\frac{da_i}{\sqrt{2\pi}}e^{-\frac{a_i^2}{2}}\prod_{i=1}^{p}\prod_{j=1}^{d}\frac{dW_{ij}}{\sqrt{2\pi}}e^{-\frac{W_{ij}^2}{2}}=:D\ba D\bW\,.
\end{align}
We will often use the notation $D$ for densities of objects whose entries are i.i.d. standard Gaussian variables. The normalization of the posterior distribution will be referred to as \emph{partition function}:
\begin{align}\label{eq:partition_function}
    \cZ(\mathcal{D}_n):=\int D\btheta\exp\Big(\sum_{\mu=1}^{n}u_{Y_\mu}(s_\mu)\Big)
\end{align}where we have introduced the two further definitions $u_y(x)=\log P_{\rm out}(y\mid x)$ and
\begin{align}
    s_{\mu}:=\frac{\ba^{\intercal}}{\sqrt{p}}\varphi\Big(\frac{\bW \bX_\mu}{\sqrt{d}}\Big)\,,\quad S_\mu:=\frac{\ba^{*\intercal}}{\sqrt{p}}\varphi\Big(\frac{\bW^* \bX_\mu}{\sqrt{d}}\Big)\,.\label{7}
\end{align}
Note that the partition function is random through the randomness of the dataset $\mathcal{D}_n$. Then, (optimal) Bayesian learning means that the predictor $\hat Y_{\rm Bayes}(\bX_{\rm new})$ of the response associated with a fresh input test sample corresponds to the mean of the Bayes posterior distribution of the response given the training data:
\begin{align}
\hat Y_{\rm Bayes}(\bX_{\rm new}):=\E[Y_{\rm new}\mid \mathcal{D}_n,\bX_{\rm new}]=\int dY \,Y\, P_{\rm out}\Big(Y\mid \frac{\ba^{\intercal}}{\sqrt{p}}\varphi\Big(\frac{\bW \bX_{\rm new}}{\sqrt{d}}\Big)\Big) dP(\btheta\mid\mathcal{D}_n)\,.
\end{align}

We will sometimes employ the language of statistical mechanics. In particular we interpret the posterior distribution as a Boltzmann-Gibbs measure over degrees of freedom which are the network weights. We shall denote the expectations w.r.t. the posterior with the so-called Gibbs brackets $\langle\cdot\rangle$. For future convenience we introduce also its replicated version: for a function $g$ dependent of $k$ copies $(\btheta_b)_{b\le k}$ of the parameters,
\begin{align}
    \langle g\rangle^{\otimes k}:=\frac{1}{\cZ(\mathcal{D}_n)^k}\int \prod_{b=1}^k D\btheta_b \prod_{\mu=1}^{n}P_{\rm out}\Big(
    Y_\mu\mid \frac{\ba^{\intercal}_b}{\sqrt{p}}\varphi\Big(\frac{\bW_b \bX_\mu}{\sqrt{d}}\Big)
    \Big)\,g((\btheta_b)_{b\le k})\,,
\end{align}
that, with a slight abuse of notation, will still be denoted by $\langle\cdot\rangle$. From the above definition we see that the replicated Boltzmann-Gibbs measure is factorized for a given realization of the dataset, interpreted as quenched randomness in the analogy with spin glasses \cite{mezard1987spin}. Hence, \emph{replicas}, namely i.i.d. samples from the posterior measure, are independent conditionally on $\mathcal{D}_n$. However, when computing so-called quenched averages $\E\langle \cdot\rangle$ a further expectation $\E$ is taken w.r.t. the quenched data which couples the replicas. 

One of the main object of interest is the \emph{free entropy} (i.e., log-partition function) per sample, which is nothing else than minus the Shannon entropy $H(\mathcal{D}_n)$ of the data distribution per sample:
\begin{align}\label{eq:free_entropy}
    \bar{f}_n:=\frac{1}{n}\E\log \cZ(\mathcal{D}_n)=-\frac{1}{n}H(\mathcal{D}_n)\,,
\end{align}
where the expectation $\E$ is w.r.t. to the training data $\mathcal{D}_n=\{(\bX_\mu,Y_\mu)\}_{\mu=1}^{n}$. The normalization by $n$ is natural given that the number of terms in the ``energy'' defined by the exponent in \eqref{eq:partition_function} is precisely $n$. The data has a joint law that can be written in terms of the output kernel
\begin{align}
    \label{eq:joint_law_dataset}
    dP(\mathcal{D}_n)&=\prod_{\mu=1}^{n}\Big(\prod_{j=1}^{d}\frac{dX_{\mu j}}{\sqrt{2\pi}}e^{-\frac{X_{\mu j}^2}{2}} \Big)dY_\mu \,\E_{\ba^*,\bW^*}\prod_{\mu=1}^{n}P_{\rm out}(Y_\mu \mid S_\mu)\nonumber \\
    &=: \prod_{\mu=1}^{n} D \bX_\mu dY_\mu \, \E_{\ba^*,\bW^*}\exp\Big(\sum_{\mu=1}^{n}u_{Y_\mu}(S_\mu)\Big)\,.
\end{align}
Two observations are in order. First, the samples, indexed by $\mu$, are not independent because the responses were all drawn from the teacher, even though the $\bX_\mu$'s are independently generated. Second, except for the presence of differentials on the quenched variables this expression is very similar to the partition function \eqref{eq:partition_function}. This is due to the Bayes-optimality of the student network and has some pragmatic consequences, such as the Nishimori identities which will be key for the proof, see Appendix \ref{sec:nishiID}. 

Finally, note that for the sake of simplicity we did not include trainable biases in the definition of our NN model. However, we believe that adding them would not change much to our analysis as long as, like the other trainable parameters, they are Gaussian distributed and the student network is again Bayes-optimal and uses same architecture and number of parameters as the teacher model.

\subsection{An information-theoretically equivalent generalized linear model}
We now introduce an another model known as generalized linear model (GLM) \cite{NelderGLM,MCCULLAGH1984285,JeanGLM_PNAS}, which can be represented as a one layer neural network and which is thus a generalization of the classical perceptron \cite{Gardner-derrida-perceptron,Sompolinski_learning_examples}. One particular instance of the GLM turns out to be deeply connected to the setting with shallow networks introduced in the previous section. In this model the responses are generated independently conditionally on the inputs as
\begin{align}
    \label{eq:GLMchannel}
    Y_\mu^\circ=f\Big(\rho\frac{\bv^{*\intercal}\bX_\mu}{\sqrt{d}}+\sqrt{\epsilon}\xi_\mu^*\,;\,\bA_\mu \Big)+\sqrt{\Delta}Z_\mu\,,\quad\text{or}\quad Y_\mu^\circ\sim P_{\rm out}\Big(\cdot\mid \rho\frac{\bv^{*\intercal}\bX_\mu}{\sqrt{d}}+\sqrt{\epsilon}\xi_\mu^*\Big)
\end{align}
where $\bv^*=(v_j^*)_{j\leq d}\in\R^{d}$, $v_j^*\iid\mathcal{N}(0,1)$, $\xi_\mu^*\iid\mathcal{N}(0,1)$ all independently and the rest is as before. With respect to the two-layer neural network, the non-linearity brought by the middle layer has been replaced by a linear model with an additional effective Gaussian noise $\xi^*_\mu$. $\rho$ and $\epsilon\geq0$ are two real parameters that will be specified later.
This connection between the Bayes-optimal learning of neural networks and this GLM was recently conjectured in \cite{cui2023optimal} based on the replica method and the application of Gaussian equivalences. Our results vindicate their conjecture but for different scaling regimes relating the diverging parameters $d,p,n$. We are going to further comment on this point later on.

All the above construction can be repeated for the generalized linear model. From now on, quantities characterized by a $^\circ$ superscript will refer to the GLM. For starters, we denote the dataset generated through \eqref{eq:GLMchannel} by $\mathcal{D}_n^\circ:=\{(\bX_\mu,Y_\mu^\circ)\}_{\mu=1}^n$. 
Let $$s_\mu^\circ=\rho\frac{\bv^{\intercal}\bX_\mu}{\sqrt{d}}+\sqrt{\epsilon}\xi_\mu\,, \qquad S_\mu^\circ=\rho\frac{\bv^{*\intercal}\bX_\mu}{\sqrt{d}}+\sqrt{\epsilon}\xi_\mu^*$$ and $D\bxi=\prod_\mu D\xi_\mu$. The expectation under the GLM posterior of any bounded test function $g$ of $k$ ``replicas'' (i.e., conditionally on the data i.i.d. copies) $(\bv_b,\bxi_b)_{b\le k}$ reads
\begin{align}
    \langle g\rangle^{\circ\,\otimes k}:=\frac{1}{\cZ^{\circ}(\mathcal{D}_n^\circ)^k}\int \prod_{b=1}^k D\bv_b D\bxi_b \prod_{\mu=1}^{n}P_{\rm out}\Big(
    Y_\mu^\circ\mid \rho\frac{\bv^{\intercal}_b\bX_\mu}{\sqrt{d}}+\sqrt{\epsilon}\xi_{b\mu }\Big)\,g((\bv_b,\bxi_b)_{b\le k})\,,
\end{align}
with $\cZ^{\circ}(\mathcal{D}_n^\circ)$ the GLM posterior normalization. As before, the free entropy reads
\begin{align}
    \bar{f}_n^\circ:=\frac{1}{n}\E\log\mathcal{Z}^\circ(\mathcal{D}_n^\circ)=\frac{1}{n}\E\log\int D\bv D\bxi\exp\Big(\sum_{\mu=1}^n u_{Y_\mu^\circ}(s_\mu^\circ)\Big)\,.
\end{align}
Finally, we write the distribution of the dataset, that is used for the quenched average in the above formula:
\begin{align}
    dP(\mathcal{D}_n^\circ)&=\prod_{\mu=1}^{n}\Big(\prod_{j=1}^{d}\frac{dX_{\mu j}}{\sqrt{2\pi}}e^{-\frac{X_{\mu j}^2}{2}} \Big)dY_\mu^\circ \,\E_{\bv^*,\bxi^*}\prod_{\mu=1}^{n}P_{\rm out}(Y_\mu^\circ \mid S^\circ_\mu)\nonumber \\
    &=: \prod_{\mu=1}^{n} D \bX_\mu dY_\mu \, \E_{\bv^*,\bxi^*}\exp\Big(\sum_{\mu=1}^{n}u_{Y^\circ_\mu}(S^\circ_\mu)\Big)
\end{align}
and the optimal Bayesian predictor is
\begin{align}
    \hat Y^\circ_{\rm Bayes}(\bX_{\rm new}):=\E[Y^\circ_{\rm new}\mid \mathcal{D}^\circ_n,\bX_{\rm new}]=\int dY \,Y\, P_{\rm out}\Big(Y\mid \rho\frac{\bv^{\intercal}\bX_{\rm new}}{\sqrt{d}}+\sqrt{\epsilon}\xi_{\mu}\Big) dP(\bv,\bxi\mid\mathcal{D}^\circ_n)\,.
\end{align}

\subsection{Results}
Our first theorem concerns the equivalence between the Bayes-optimal learning of the neural network and GLM models at the level of free entropy. Letting $Z\sim\mathcal{N}(0,1)$ we denote $\E_{\mathcal{N}(0,1)}g:=\E g(Z)$.
\begin{theorem}[Free entropy equivalence]
    \label{thm:equivalence} Let 
    \begin{align}
    \rho:=\E_{\mathcal{N}(0,1)} \varphi'  \quad \text{and} \quad \epsilon^2:=\E_{\mathcal{N}(0,1)}\varphi^2-\rho^2\,,    
    \end{align}
    and suppose \nameref{Assum:phi} and \nameref{Assum:f} hold. Then
    \begin{align}\label{boundFreeEntropy}
        |\bar{f}_n-\bar{f}^\circ_n|=O\Big(\sqrt{\Big(1+\frac{n}{d}\Big)\Big(\frac{n}{p}+\frac{n}{d^{3/2}}+\frac1{\sqrt d}\Big)}\Big)\,.
    \end{align}
\end{theorem}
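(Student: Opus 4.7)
My plan is to prove the bound via a smart-path interpolation combined with a Gaussian Equivalence argument, carried out so that Bayes-optimality and the Nishimori identities hold at every step of the path.

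Concretely, I would introduce an interpolated model indexed by $t\in[0,1]$ whose teacher signal is $S_\mu(t):=\sqrt{1-t}\,S_\mu+\sqrt{t}\,S_\mu^\circ$, where $S_\mu^\circ$ is built from independent $\bv^*\sim\mathcal N(0,I_d)$ and $\xi_\mu^*\sim\mathcal N(0,1)$, while the student posterior is placed on the joint set of parameters $(\ba,\bW,\bv,\bxi)$ with matching signal $s_\mu(t):=\sqrt{1-t}\,s_\mu+\sqrt t\,s_\mu^\circ$ and labels drawn from $P_{\rm out}(\cdot\mid S_\mu(t))$. At $t=0$ the GLM parameters factor out of the posterior and one recovers $\bar f_n$, and symmetrically at $t=1$ one recovers $\bar f_n^\circ$. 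Hence $\bar f_n-\bar f_n^\circ=-\int_0^1\partial_t \bar f_n(t)\,dt$ and the task reduces to controlling this derivative uniformly in $t$.

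The derivative is then computed by Gaussian integration by parts in the quenched Gaussians that generate $S_\mu(t)$, namely $\bX_\mu,\ba^*,\bv^*$ and $\xi_\mu^*$. Using the Nishimori identity to collapse teacher--replica pairings, the resulting expression involves only moments of the signal differences weighted by Gibbs averages of $u''_{Y_\mu}$ (and products thereof across samples). The leading contribution depends on the first two moments of $S_\mu$ and $S_\mu^\circ$, which are matched by the choices $\rho=\E_{\mathcal N(0,1)}\varphi'$ (Stein's lemma) and $\epsilon^2=\E\varphi^2-\rho^2$; so this leading contribution cancels and what remains is a third-order Taylor remainder. Since $\|u'''_y\|_\infty<\infty$ by \nameref{Assum:f} and $\varphi\in C^3$ is odd by \nameref{Assum:phi}, a Hermite/Edgeworth expansion of $\varphi$ around its linear part $\rho\,\bW^*\bX_\mu/\sqrt d$ identifies the residual with the higher Hermite components of $\varphi$: these produce per-sample errors of order $1/\sqrt d$ and aggregate third-order corrections of order $n/d^{3/2}$, while fluctuations of $\|\ba^*\|^2/p$ around $1$ together with cross-sample correlations induced by the shared readout $\ba^*$ account for the $n/p$ contribution.

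The main obstacle is exactly this cross-sample coupling. Because $\ba^*,\bW^*$ are common to all $n$ samples, the off-diagonal contributions in $\sum_{\mu\neq\nu}$ arising in $\partial_t\bar f_n(t)$ would naively cost an extra factor of $n$; avoiding this requires exploiting the Nishimori symmetry between planted and free replicas to recast them as controlled overlap deviations, and then bounding them uniformly in $t\in[0,1]$ using concentration of $\|\ba^*\|^2/p$ and of the spectrum of $\bW^*\bW^{*\intercal}/d$. The factor $\sqrt{1+n/d}$ in \eqref{boundFreeEntropy} and the square-root form of the whole bound are produced by a concluding Cauchy--Schwarz step that packages the double sum of contributions, the $n/d$ counting the off-diagonal $\mu\neq\nu$ terms. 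Uniformity in $t$ is what forces the full $C^3$-control on $\varphi$ and the almost-sure boundedness of $u'''$ imposed by \nameref{Assum:phi} and \nameref{Assum:f}.
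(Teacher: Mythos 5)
Your overall strategy coincides with the paper's: interpolate at the level of the pre-activations $S_{t\mu}=\sqrt{1-t}\,S_\mu+\sqrt{t}\,S^\circ_\mu$ with a Bayes-optimal student at every $t$, differentiate, integrate by parts in the quenched Gaussians, kill the replica term by Nishimori, and match first and second moments through $\rho=\E\varphi'$ and $\epsilon^2=\E\varphi^2-\rho^2$ so that only remainders survive. Your Hermite heuristic also correctly predicts that subtracting the linear component of $\varphi$ upgrades the off-diagonal rate from $n^2/d$ to $n^2/d^{3/2}$ (the paper achieves this via the orthogonalization $\alpha_{\mu\perp\nu}$ and Taylor expansion in its Approximation Lemma rather than a Hermite expansion, but these are interchangeable).

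There is, however, one genuine gap. After integration by parts the derivative takes the form $\frac1n\E\big[\log\cZ_t\cdot\sum_{\mu,\nu}U_{\mu\nu}\,R_{\mu\nu}\big]$ where $U_{\mu\nu}$ is conditionally centered and $R_{\mu\nu}$ is a small (but not mean-zero) remainder. A Cauchy--Schwarz against $\E(\log\cZ_t/n)^2=O(1)$ gives nothing: the step only works because one may first replace $\log\cZ_t/n$ by its centered version $f_n-\E_{\setminus\ba^*}f_n$ (using the conditional centering of $U_{\mu\nu}$) and then invoke a \emph{concentration bound for the free entropy itself}, $\E_{\ba^*}\mathbb V_{\setminus\ba^*}(\log\cZ_t/n)\le C(1/d+1/n)$, uniformly in $t$. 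This is precisely where the factor $(1+n/d)=n(1/n+1/d)$ in the final bound comes from; your proposal attributes it instead to combinatorial counting of $\mu\neq\nu$ pairs, and the concentration you do invoke (of $\Vert\ba^*\Vert^2/p$ and of the spectrum of $\bW^*\bW^{*\intercal}/d$) is not a substitute. Establishing this free-entropy concentration is a substantial piece of the paper (a Poincar\'e--Nash argument in $\bX,\bxi^*,\bZ$, a bounded-difference argument in $\bA$, and a separate Poincar\'e--Nash argument in $\bW^*,\bv^*$ after averaging, with the $\ba^*$ average deliberately kept outside the variance), and without it the proof does not close. A secondary, smaller imprecision: your appeal to $\|u'''_y\|_\infty<\infty$ goes beyond Assumption A2, which only controls $f$ and its first two derivatives; the paper never needs third derivatives of $u_y$, only the conditional mean-zero and second-moment properties of $u'_y$ and of $U_{\mu\nu}=\delta_{\mu\nu}u''+u'u'$.
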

From the previous statement we can identify the scaling regime for which the equivalence holds, namely, when the right hand side of \eqref{boundFreeEntropy} goes to $0$. From now on we shall denote by $$\widetilde\lim \, g_{d,p,n}:=\lim_{i\to \infty} \, g_{d_i,p_i,n_i}$$ where $(d_i,p_i,n_i)_i$ is any sequence of triplets of integers such that 
\begin{align}
\lim_{i\to \infty}\Big(1+\frac{n_i}{d_i}\Big)\Big(\frac{n_i}{p_i}+\frac{n_i}{d_i^{3/2}}+\frac1{\sqrt d_i}\Big)=0\,.  \label{thermoLimit}  
\end{align}

As a corollary we have the matching of the mutual informations between the dataset $\mathcal{D}_n$ and the network weights in the same limit. For the two-layer neural network the mutual information is related to the free entropy through the following expression ($H$ is the Shannon entropy)
\begin{align}\label{equiv1}
    \frac{1}{n}I_n(\btheta^*;\mathcal{D}_n)=\frac{1}{n}H(\mathcal{D}_n)-\frac{1}{n}H(\mathcal{D}_n\mid \btheta^*)=-\bar{f}_n+\E\log P_{\rm out}\Big(Y_1\mid\frac{\ba^{*\intercal}}{\sqrt{p}}\varphi\Big(\frac{\bW^*\bX_1}{\sqrt{d}}\Big)\Big)\, ,
\end{align}
whereas for the equivalent GLM we have
\begin{align}
    \begin{split}\label{equiv2}
        \frac{1}{n}I_n^\circ(\btheta^{\circ*};\mathcal{D}_n^{\circ})&=-\bar{f}^\circ_n+\E\log P_{\rm out}\Big(Y_1\mid\rho\frac{\bv^{*\intercal}\bX_1}{\sqrt{d}}+\sqrt{\epsilon}\xi_1^*\Big)\,.
    \end{split}
\end{align}
Considering that the teacher weights and inputs are Gaussian we have in law
\begin{align}
    \frac{\ba^{*\intercal}}{\sqrt{p}}\varphi\Big(\frac{\bW^*\bX_1}{\sqrt{d}}\Big)\distreq Z\sqrt{\frac{1}{p}\Big\Vert\varphi\Big(\frac{\bW^*\bX_1}{\sqrt{d}}\Big)\Big\Vert^2}
\end{align}
with $Z\sim\mathcal{N}(0,1)$ and $\| \cdot \|$ is the standard $L^2$ norm for vectors. Therefore, it is clear that the randomness in $\bW^*$ and $\bX_1$ will disappear in the limit. A similar equality holds for the GLM:
\begin{align}
    \rho\frac{\bv^{*\intercal}\bX_1}{\sqrt{d}}+\sqrt{\epsilon}\xi_1^*\distreq Z\sqrt{\rho^2\frac{\Vert\bX_1\Vert^2}{d}+\epsilon}\, .
\end{align}
Our goal is now to show that the arguments under square root both tend to $\rho^2+\epsilon=\E_{\mathcal{N}(0,1)}\varphi^2$ in the limit, and that we can plug this result inside the last terms in \eqref{equiv1} and \eqref{equiv2}, with a control on the error we make. To this end, define
\begin{align}
    S_d(t)=\sqrt{t\rho^2\Big(\frac{\Vert\bX_1\Vert^2}{d}-1\Big)+\epsilon+\rho^2}\,,\quad\text{or}\quad S_d(t)=\sqrt{t\Big(\frac{1}{p}\Big\Vert\varphi\Big(\frac{\bW^*\bX_1}{\sqrt{d}}\Big)\Big\Vert^2-\E_{\mathcal{N}(0,1)}\varphi^2\Big)+\E_{\mathcal{N}(0,1)}\varphi^2}
\end{align}
and
\begin{align}
    \Psi(t):=\E\int dY P_{\rm out}(Y\mid Z S_d(t))\log P_{\rm out}(Y\mid Z S_d(t))\,.
\end{align}
Using the properties later described in Lemma \ref{lem:propertiesPout} and the definition of $P_{\rm out}$ in \eqref{eq:Pout}, under assumptions (\nameref{Assum:phi}-(\nameref{Assum:f} one can readily verify that
\begin{align}
    |\dot{\Psi}(t)|\leq C(f)\E |Z| |\dot{S}_d(t)|\,,
\end{align}where $C(f)$ is a constant depending on $f$. From this bound, by the fundamental theorem of integral calculus, we have
\begin{align}
    |\Psi(1)-\Psi(0)|\leq C(f)\int_0^1 dt\,\E |\dot{S}_d(t)|\leq C(f)\begin{cases}
        \frac{\E\vert\Vert\varphi(\bW^*\bX_1/{\sqrt{d}})\Vert^2/p-\E_{\mathcal{N}(0,1)}\varphi^2\vert}{\sqrt{\E_{\mathcal{N}(0,1)}\varphi^2}}\quad &\text{for the 2-layer NN}\\
        \rho^2\frac{\E\vert\Vert\bX_1\Vert^2/d-1\vert}{\sqrt{\E_{\mathcal{N}(0,1)}\varphi^2}}\quad &\text{for the GLM}
    \end{cases}\,.
\end{align}
The remainder for the two-layer NN is $O(p^{-1/2}+d^{-1/4})$ whereas for the GLM we have $O(d^{-1/2})$. Finally, thanks to the previous argument
\begin{align}
    \frac{1}{n}I_n(\btheta^*;\mathcal{D}_n)&=-\bar{f}_n+\Psi(\E_{\mathcal{N}(0,1)}\varphi^2)+O(p^{-1/2}+d^{-1/4})\,,\\
    \frac{1}{n}I_n^\circ(\btheta^{\circ*};\mathcal{D}_n^{\circ})&=-\bar{f}_n^\circ+\Psi(\E_{\mathcal{N}(0,1)}\varphi^2)+O(d^{-1/2})\,,
\end{align}
where 
\begin{align}
    \Psi(\E_{\mathcal{N}(0,1)}\varphi^2):=\Psi(0)=\E\int dY P_{\rm out}(Y\mid Z\sqrt{\E_{\mathcal{N}(0,1)}\varphi^2})\log P_{\rm out}(Y\mid Z\sqrt{\E_{\mathcal{N}(0,1)}\varphi^2})\,.
\end{align}
Hence, we have just proved the information theoretical equivalence:
\begin{corollary}[Mutual information equivalence]
Under the same hypothesis of Theorem \ref{thm:equivalence} the following holds:
\begin{align}
    \Big|\frac{1}{n}I_n(\btheta^*;\mathcal{D}_n)-\frac{1}{n}I_n^\circ(\btheta^{\circ*};\mathcal{D}_n^{\circ})\Big|=O\Big(\sqrt{\Big(1+\frac{n}{d}\Big)\Big(\frac{n}{p}+\frac{n}{d^{3/2}}+\frac1{\sqrt{d}}\Big)}\Big)\,.
\end{align}
\end{corollary}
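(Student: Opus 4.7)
The plan is to reduce the corollary to Theorem \ref{thm:equivalence} by isolating two single-sample log-likelihood terms and showing they both converge to the same Gaussian value. Since each dataset is i.i.d.\ conditional on its teacher parameters, the Shannon entropy splits cleanly as $\frac1n I_n(\btheta^*;\mathcal{D}_n) = -\bar f_n + \E \log P_{\rm out}(Y_1 \mid S_1)$, and analogously $\frac1n I_n^\circ(\btheta^{\circ*};\mathcal{D}_n^\circ) = -\bar f_n^\circ + \E \log P_{\rm out}(Y_1 \mid S_1^\circ)$. Theorem \ref{thm:equivalence} directly handles the free-entropy difference, so the only remaining work is to show that both conditional-entropy terms coincide up to an error smaller than the Theorem \ref{thm:equivalence} rate.

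The key observation is that in both models the conditioning scalar is, in law, a product $Z\sqrt V$ with $Z\sim\mathcal{N}(0,1)$ and a random variance $V$: for the shallow network $V_{\rm NN} = \|\varphi(\bW^*\bX_1/\sqrt d)\|^2/p$, and for the GLM $V_{\rm GLM} = \rho^2\|\bX_1\|^2/d + \epsilon$. Both concentrate on the common value $\E_{\mathcal{N}(0,1)}\varphi^2 = \rho^2 + \epsilon$. To convert variance concentration into log-likelihood closeness, I would introduce a linear interpolant $S_d(t)$ between $\sqrt V$ and $\sqrt{\E_{\mathcal{N}(0,1)}\varphi^2}$ and define $\Psi(t) := \E \int dY\, P_{\rm out}(Y \mid Z S_d(t))\log P_{\rm out}(Y \mid Z S_d(t))$. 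Under assumption \nameref{Assum:f} and the Gaussian-convolution form \eqref{eq:Pout}, $u_y = \log P_{\rm out}$ has a uniformly bounded first derivative in its second argument, so differentiation under the integral gives a pointwise bound $|\dot\Psi(t)| \leq C(f)\, \E|Z|\,|\dot S_d(t)|$. Integrating via the fundamental theorem of calculus reduces the problem to estimating $\E|V - \E_{\mathcal{N}(0,1)}\varphi^2|$, the denominator $\sqrt{\E_{\mathcal{N}(0,1)}\varphi^2}$ being harmless.

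The concentration of $V$ itself is standard. For the GLM, $V_{\rm GLM}$ is an affine function of $\|\bX_1\|^2/d$, so it fluctuates as $O(d^{-1/2})$ by elementary $\chi^2$ estimates. For the two-layer network, I would split $V_{\rm NN}-\E_{\mathcal{N}(0,1)}\varphi^2$ into (a) the row-wise mean deviation $\frac1p\sum_i \varphi((\bW^* \bX_1)_i/\sqrt d)^2 - \E_{\bW^*}[\varphi((\bW^* \bX_1)_1/\sqrt d)^2]$, controlled by a Poincar\'e/Hoeffding estimate in $\bW^*$ of order $O(p^{-1/2})$ using that $\varphi$ is Lipschitz, and (b) the deviation of $\E_{\bW^*}[\varphi((\bW^* \bX_1)_1/\sqrt d)^2]$ from $\E_{\mathcal{N}(0,1)}\varphi^2$, which stems only from $\|\bX_1\|^2/d - 1$ and is $O(d^{-1/2})$, weakened to the $O(d^{-1/4})$ once one takes absolute values and uses Cauchy--Schwarz.

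Putting the two steps together yields the desired bound: the log-likelihood remainders are $O(d^{-1/2})$ for the GLM and $O(p^{-1/2}+d^{-1/4})$ for the network, both of which are absorbed by the Theorem \ref{thm:equivalence} rate in the regime \eqref{thermoLimit}. The substantive difficulty sits entirely inside Theorem \ref{thm:equivalence}; for the corollary itself, the only point to watch is that the bound on $|\dot\Psi(t)|$ must hold uniformly in $t\in[0,1]$, which is immediate because $S_d(t)$ stays bounded away from singularities of $u_y$ thanks to the Gaussian smoothing in \eqref{eq:Pout}.
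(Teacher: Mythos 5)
Your proposal is correct and follows essentially the same route as the paper: the decomposition $\tfrac1n I_n=-\bar f_n+\E\log P_{\rm out}(Y_1\mid S_1)$, the rewriting of the conditioning scalar as $Z\sqrt{V}$, the interpolation $S_d(t)$ with the functional $\Psi(t)$ and the bound $|\dot\Psi(t)|\le C(f)\,\E|Z|\,|\dot S_d(t)|$, and the concentration estimates giving remainders $O(d^{-1/2})$ for the GLM and $O(p^{-1/2}+d^{-1/4})$ for the network, both absorbed into the rate of Theorem \ref{thm:equivalence}. Nothing essential is missing.
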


The performance of the neural network is quantified using the generalization error on test data using the square loss. The Bayes-optimal generalization error thus reads
\begin{align}
    \mathcal{E}_n:=\E\big(
    Y_{\rm new}-\E[Y_{\rm new}\mid \mathcal{D}_n,\bX_{\rm new}]
    \big)^2\,.
\end{align}
The outer expectation is intended w.r.t. the training data set and the test sample which is generated independently from the training data according to the same teacher model. The GLM Bayes-optimal generalization error $\mathcal{E}^\circ_n$ is defined similarly but considering the GLM teacher-student setting described in the previous section.

As a consequence of the proof technique of Theorem \ref{thm:equivalence} it is possible to show the following equivalence at the level of the generalization error, proven in Section \ref{sec:gen_error}.
\begin{theorem}[Generalization error equivalence]\label{prop:gen_error}
    Under the same hypothesis of Theorem \ref{thm:equivalence} the following holds:
    \begin{align}
        \widetilde\lim\,|\mathcal{E}_n-\mathcal{E}^\circ_n|=0\,,
    \end{align}
    i.e., the shallow neural network and noisy GLM settings lead to the same Bayes-optimal generalization error in any high-dimensional limit such that \eqref{thermoLimit} holds.
\end{theorem}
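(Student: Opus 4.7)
The plan is to reduce Theorem~\ref{prop:gen_error} to the free-entropy bound of Theorem~\ref{thm:equivalence} by expressing the generalization error as a $\lambda$-derivative at $0^+$ of a perturbed free entropy, and then exploiting concavity in $\lambda$ to transfer convergence of values into convergence of derivatives. First we would observe that, since $Y_{\rm new}=f(S^*_{\rm new};A_{\rm new})+\sqrt{\Delta}\,Z_{\rm new}$ with $Z_{\rm new}$ independent of $(\mathcal{D}_n,\bX_{\rm new})$, one has the clean-signal decomposition $\mathcal{E}_n=\Delta+\mathcal{E}^f_n$ with $\mathcal{E}^f_n:=\E(f(S^*_{\rm new};A_{\rm new})-\E[f(S_{\rm new};A_{\rm new})\mid\mathcal{D}_n,\bX_{\rm new}])^2$, and similarly $\mathcal{E}^\circ_n=\Delta+\mathcal{E}^{f,\circ}_n$; it therefore suffices to prove $|\mathcal{E}^f_n-\mathcal{E}^{f,\circ}_n|\to 0$.

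Next we would introduce an auxiliary Gaussian side-channel that observes the clean signal at $n$ fresh test inputs: let $\bX'_\mu\iid\mathcal{N}(0,I_d)$ for $\mu\le n$ be drawn independently of $\mathcal{D}_n$, and for $\lambda\ge 0$ augment the data with $\tilde Y'_\mu=\sqrt{\lambda}\,f(S'^{*}_\mu;A'_\mu)+\tilde Z'_\mu$, where $\tilde Z'_\mu\iid\mathcal{N}(0,1)$, $A'_\mu\iid P_A$ and $S'^{*}_\mu:=\frac{\ba^{*\intercal}}{\sqrt{p}}\varphi(\bW^*\bX'_\mu/\sqrt{d})$. Denote by $\bar{F}_n(\lambda)$ the per-training-sample free entropy of the Bayes-optimal student given the augmented dataset, and define $\bar{F}^\circ_n(\lambda)$ analogously for the GLM, with $S'^{*}_\mu$ replaced by its GLM counterpart. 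Gaussian integration by parts (I-MMSE) applied to the $\tilde Z'_\mu$ will give
\begin{align*}
\partial_\lambda\bar{F}_n(\lambda)\big|_{\lambda=0^+}=\frac{1}{2n}\sum_{\mu=1}^n\E\,\mathrm{Var}\bigl(f(S'^{*}_\mu;A'_\mu)\mid\mathcal{D}_n,\bX'_\mu\bigr)=\tfrac{1}{2}\mathcal{E}^f_n,
\end{align*}
and identically $\partial_\lambda\bar{F}^\circ_n(0^+)=\tfrac{1}{2}\mathcal{E}^{f,\circ}_n$, by exchangeability of the iid test inputs.

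The third step is to extend Theorem~\ref{thm:equivalence} to the augmented models. The extra log-likelihood contributed per test sample is $\log\tilde P_\lambda(\tilde Y'_\mu\mid s'_\mu)$ with $\tilde P_\lambda(\tilde y\mid s):=\int dA\,P_A(A)\,\mathcal{N}(\tilde y;\sqrt{\lambda}f(s;A),1)$; under \nameref{Assum:f} this log-likelihood and its first two $s$-derivatives are uniformly bounded for $\lambda$ in any compact set $[0,\lambda_0]$. Since the test pre-activations $s'_\mu$ share exactly the functional form of the training $s_\mu$ (only the output kernel changes), the Gaussian-equivalence/spin-glass interpolation used to prove Theorem~\ref{thm:equivalence} should carry over verbatim to the augmented model with $2n$ observations, giving
\begin{align*}
|\bar{F}_n(\lambda)-\bar{F}^\circ_n(\lambda)|=O\Bigl(\sqrt{(1+n/d)(n/p+n/d^{3/2}+1/\sqrt{d})}\Bigr)
\end{align*}
uniformly for $\lambda\in[0,\lambda_0]$, which vanishes under \eqref{thermoLimit}. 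Finally, both $\bar{F}_n$ and $\bar{F}^\circ_n$ are concave in $\lambda$ (their $\lambda$-derivative is, up to the factor $1/(2n)$, a sum of Bayes-MMSEs of Gaussian channels with SNR $\lambda$, hence nonincreasing); the standard lemma that pointwise convergence of concave functions upgrades to convergence of one-sided derivatives at differentiability points of the limit, together with continuity of the Gaussian-channel MMSE at $\lambda=0^+$, then yields $\partial_\lambda\bar{F}_n(0^+)-\partial_\lambda\bar{F}^\circ_n(0^+)\to 0$, hence $|\mathcal{E}^f_n-\mathcal{E}^{f,\circ}_n|\to 0$, completing the proof. The main obstacle is this third step: the interpolation argument behind Theorem~\ref{thm:equivalence} has to be re-examined with an inhomogeneous output kernel ($P_{\rm out}$ on the training samples, $\tilde P_\lambda$ on the test samples), keeping all regularity constants continuous in $\lambda$ and updating the combinatorial counting from $n$ to $2n$ observations; everything else is a standard assembly of I-MMSE and convexity.
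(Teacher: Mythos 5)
Your plan is essentially the paper's proof: augment the data with a Gaussian side channel of signal-to-noise ratio $\lambda$ on fresh test inputs, identify the generalization error with the $\lambda$-derivative of the perturbed free entropy via I-MMSE, extend the interpolation bound of Theorem \ref{thm:equivalence} to the augmented (inhomogeneous-kernel) model, and use concavity in $\lambda$ to pass convergence of values to convergence of derivatives. The cosmetic differences (you observe the clean signal $f(S';A')$ and peel off $\Delta$ by hand, whereas the paper observes $\sqrt{\lambda}Y'_\nu+\tilde Z'_\nu$ with $Y'_\nu\sim P_{\rm out}(\cdot\mid S_\nu)$ so that the I-MMSE directly produces $\mathcal{E}_n$) are immaterial.

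The one step you dismiss as standard is in fact the delicate one. You fix a batch of $n$ extra test samples and invoke ``continuity of the Gaussian-channel MMSE at $\lambda=0^+$'' to conclude $\partial_\lambda\bar F_n(0^+)-\partial_\lambda\bar F_n^\circ(0^+)\to0$. But Griffiths' lemma only gives convergence of derivatives at differentiability points of the limiting concave function, and $\lambda=0$ is a boundary point at which the limiting MMSE can jump (the paper explicitly warns that the exchange of limit and derivative ``fails when the GLM presents a phase transition''). Moreover, for $\lambda>0$ your quantity is the error of predicting $Y'$ given $\mathcal{D}_n$ \emph{plus} the extra observations, which only coincides with $\mathcal{E}_n$ at $\lambda=0$. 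This is precisely why the paper parametrizes the side information by $n\varepsilon$ samples and sends $\lambda,\varepsilon\to0$ jointly, deferring to \cite{JeanGLM_PNAS} for the commutation of these limits with $n\to\infty$. Your argument goes through once you add this $\varepsilon$-device (or otherwise justify right-continuity of the limiting MMSE at $\lambda=0$); as written, that link in the chain is asserted rather than proved. By contrast, your third step (re-running the interpolation with the rescaled readout $\sqrt{\lambda}f$ and noise variance $\lambda\Delta+1$ on the test block) is indeed routine, exactly as the paper claims.
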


Our results combined with those of \cite{JeanGLM_PNAS} for the GLM provide explicit rigorous formulas for the mutual information and Bayes-optimal generalization error for the Bayesian neural network studied in this paper.

A remark is in order. The previous theorem states that the Bayes-optimal generalization error of a two-layer NN, which is trained on the dataset $\mathcal{D}_n$ generated by a teacher two-layer NN with same architecture, equals that of a noisy GLM trained on the dataset $\mathcal{D}_n^\circ$ generated by a matched teacher GLM. However, we cannot deduce from this that the two-layer NN trained using the GLM teacher dataset $\mathcal{D}_n^\circ$, or viceversa, achieves the Bayes-optimal generalization error. It would be interesting to investigate this aspect in the future.

\subsection{Related works}

\begin{figure}[t]
\centering
\begin{minipage}{0.11\textwidth}
    \begin{tikzpicture}[scale=.5]
\foreach \x in {1,...,7}
    \node[circle,draw=black,fill=black] (input6\x) at (0,\x-4) {};

\node[circle,draw=black,fill=black] (output6) at (2,0) {};

\foreach \i in {1,...,7}
    \draw[red,very thick] (input6\i) -- (output6);

\end{tikzpicture}
\caption*{(1)}\label{GLM_fig}
\end{minipage}
\begin{minipage}{0.17\textwidth}
\begin{tikzpicture}[scale=.5]
    
\foreach \x in {1,...,7}
\node[circle,draw=black,fill=black] (input\x) at (4,\x-4) {};

    \foreach \x in {1,2}
        \node[circle,draw=black,fill=black] (hidden\x) at (6,\x-1.5) {};
    
    \node[circle,draw=black,fill=black] (output) at (8,0) {};

    \foreach \i in {1,...,7}
        \foreach \j in {1,2}
            \draw[red,very thick] (input\i) -- (hidden\j);

    \foreach \j in {1,2}
        \draw[red, very thick] (hidden\j) -- (output);
   \end{tikzpicture}
   \caption*{(2)}\label{narrow}
   \end{minipage}
\begin{minipage}{0.17\textwidth}
\begin{tikzpicture}[scale=.5][scale=.5][scale=.5][scale=.5]
    \foreach \x in {1,2}
        \node[circle,draw=black,fill=black] (input2\x) at (10,\x-1.5) {};
    
    \foreach \x in {1,...,7}
        \node[circle,draw=black,fill=black] (hidden2\x) at (12,\x-4) {};
    
    \node[circle,draw=black,fill=black] (output2) at (14,0) {};

    \foreach \i in {1,2}
        \foreach \j in {1,...,7}
            \draw[red,very thick] (input2\i) -- (hidden2\j);

    \foreach \j in {1,...,7}
        \draw[red, very thick] (hidden2\j) -- (output2);
   
   \end{tikzpicture}
\caption*{(3)}\label{small_input}
   \end{minipage}
   \begin{minipage}{0.17\textwidth}
\begin{tikzpicture}[scale=.5][scale=.5][scale=.5]
    \foreach \x in {1,2}
        \node[circle,draw=black,fill=black] (input3\x) at (16,\x-1.5) {};
    
    \foreach \x in {1,...,7}
        \node[circle,draw=black,fill=black] (hidden3\x) at (18,\x-4) {};
    
    \node[circle,draw=black,fill=black] (output3) at (20,0) {};

    \foreach \i in {1,2}
        \foreach \j in {1,...,7}
            \draw[blue,very thick] (input3\i) -- (hidden3\j);

    \foreach \j in {1,...,7}
        \draw[red, very thick] (hidden3\j) -- (output3);

    \end{tikzpicture}
\caption*{(4)}\label{small_input_frozen}
   \end{minipage}
   \begin{minipage}{0.17\textwidth}
\begin{tikzpicture}[scale=.5][scale=.5]
\foreach \x in {1,...,7}
    \node[circle,draw=black,fill=black] (input4\x) at (22,\x-4) {};

\foreach \x in {1,...,7}
    \node[circle,draw=black,fill=black] (hidden4\x) at (24,\x-4) {};

\node[circle,draw=black,fill=black] (output4) at (26,0) {};

\foreach \i in {1,...,7}
    \foreach \j in {1,...,7}
        \draw[blue,very thick] (input4\i) -- (hidden4\j);

\foreach \j in {1,...,7}
    \draw[red, very thick] (hidden4\j) -- (output4); 
    \end{tikzpicture}
\caption*{(5)}\label{big_frozen}
   \end{minipage}
\begin{minipage}{0.17\textwidth}
\begin{tikzpicture}[scale=.5]
\foreach \x in {1,...,7}
    \node[circle,draw=black,fill=black] (input5\x) at (28,\x-4) {};

\foreach \x in {1,...,7}
    \node[circle,draw=black,fill=black] (hidden5\x) at (30,\x-4) {};

\node[circle,draw=black,fill=black] (output5) at (32,0) {};

\foreach \i in {1,...,7}
    \foreach \j in {1,...,7}
        \draw[red,very thick] (input5\i) -- (hidden5\j);

\foreach \j in {1,...,7}
    \draw[red, very thick] (hidden5\j) -- (output5);
    \end{tikzpicture}
\caption*{(6)}\label{big}
   \end{minipage}
\caption{Standard neural network architectures analyzed in the literature, classified according to how scale their layers widths (large diverging layers are depicted with many neurons, those which are much smaller or of fixed size have one or two neurons), and whether the internal weights are trainable (red edges) or completely fixed and not trained (blue edges). \textbf{(1)} represents a \emph{generalized linear model} (also called \emph{perceptron}), \textbf{(2)} corresponds to \textit{committee machines}, with large input size and small (finite) hidden layer; \textbf{(3)} and \textbf{(4)} are examples of \textit{mean field} regime, where the input size is small while the hidden layer is large; in particular model \textbf{(4)} corresponds to the \emph{random feature model};  \textbf{(5)} and \textbf{(6)} represent the challenging \textit{linear-width} regimes. Our results cover, e.g., models belonging to the regimes \textbf{(6)}, and \textbf{(3)} when $p\gg d \gg n \gg 1$.}
\label{fig:three_neural_networks}
\end{figure}

There exist by now a whole zoology of theoretical models for NNs studied in the literature and it becomes increasingly challenging to cover them whole. We provide here a partial classification of the main models divided according to how scale their internal widths compared to the inputs dimension, and whether the internal weights are trainable or not, see Figure \ref{fig:three_neural_networks}. For each class we provide a selection of relevant references without trying to be exhaustive.

\paragraph{Perceptrons and committee machines} The perceptron (and its generalization, i.e., the GLM) are linear classifiers with a non-linear readout. Committee machines can be viewed as two-layer neural networks with a narrow hidden layer and a single neuron output layer. These models have been studied in teacher-student set-ups and with online learning since the nineties \cite{barkai1992broken,Zippelius_committee_92,schwarze1993learning,schwarze1992generalization,schwarze1993generalization,monasson1995weight,Saad_Solla_committee_95,mato1992generalization,engel2001statistical,JeanGLM_PNAS,aubin2018committee,baldassi2019properties,goldt2020dynamics,goldt2020hiddenmanifold,reeves_MarcGET}. Despite their rich phenomenology with a so-called specialization phase transition where the model realizes the in-put-output rule is non-linear, these machines cannot capture the features of realistic data: they project high-dimensional data in a comparatively too low-dimensional space. The relevant regime for these models is $n,d\to\infty$ with $n/d\to\alpha\in(0,\infty)$ and $p=O(1)$. Despite the fact that all the weights among layers have to be learnt, w.r.t. our setting, the middle layer remains finite while the input dimension and number of data points diverge together. Note that, at first sight, it might be surprising that our result does not imply equivalence when $p$ is finite. However, for any $p>1$ such equivalences are not expected because a committee machine with two or more hidden units is not, in general, a linear classifier and can represent more complex non-linear relations. On the contrary, GEP-type of results are expected to provide reductions towards (generalized) linear models.

\paragraph{Mean-field regime} In the series of works \cite{nitanda2017stochastic,sirignano2020mean,araujo2019mean,nguyen2019mean,nguyen2020rigorous,mean-field-landscape-2-layer,mean-field-2-layer-kernal-limit,Chizat-Bach-2018,Rostkoff-Eijnded-23,Marco-Diyuan22,Marco-Montanari-AOS-MF,Marco-Alex-MF-piecewise,pmlr-shevchenko20a} the authors study the stochastic gradient descent (SGD) dynamics of multilayer neural networks. In contrast with the committee machine, here also the hidden layer can diverge in size (see \cite{mean-field-2-layer-kernal-limit}). This projection of a relatively low-dimensional signal into a very high-dimensional space has a regularizing property on the risk landscape. In particular, it causes the merging of possible multiple minima of the finite $p$ risk. SGD is then able to reach a near optimum with controllable guarantees. 

However, mean-field analyses of SGD dynamics differ from the information-theoretical one. Indeed, SGD produces a ``one-shot estimator'', which is in general outperformed by the Bayes-optimal one. Also, online learning is generally considered while our analysis consider (optimal) learning from a large fixed data set. Furthermore, for the information-theoretical equivalence in Theorem \eqref{thm:equivalence} to be valid, we need to control the size of the training set w.r.t. the network size, and in principle we can send both $d,p\to\infty$ with $d/p$ finite, as long as the training set is not too big ($n\ll p$).

\paragraph{Frozen hidden weights: neural tangent kernel, random features and lazy training} Neural tangent kernel (NTK) \cite{jacot-gabriel-hongler-2021} is a linearization of, say, a two-layer neural network, which reduces its training to a linear regression on the readout weights. As specified in \cite{mean-field-2-layer-kernal-limit}, NTK describes well the neural network performance at the initial stage of learning using SGD when the network weights are virtually frozen around their initialization. In a similar fashion, in random feature models and lazy training regimes \cite{RF-original,NIPS2017_a96b65a7,RF-rudi-rosasco,Bach-2017,li2018learning,allen2019convergence,du2018gradient,du2019gradient,lee2020wide,arora2019exact,huang2020dynamics,MontanariMei-RF-regression,ghorbani2019linearized, montanari2020interpolation,gerace2020GET_ICML,d2020double,geiger2020perspective,dhifallah2020precise,bordelon2020spectrum} the internal weights of the network are quenched, i.e., fixed once and for all. Other results \cite{neal2012bayesian,matthews2018gaussian,lee2018deep,novak2019bayesian,eldan2021non,bracale2021largewidth} show that large-width NNs with purely random weights behave as gaussian processes. Finally, recent works based on random matrix theory consider linear-width NNs but again under the assumption that only the last layer is learned, while internal ones are random 
\cite{NIPS2_GEP,Couillet_RMT_NNs,liao2018spectrum,benigni2021eigenvalue,peche2019note,fan2020spectra}.

Even though some of the above results, and more recently \cite{Yue2022GEP}, extend to extensive width input and hidden layers, as well as extensive number of data, i.e. $d,p,n\to\infty$ with $d,p,n$ all proportional, they hold in a setting that is fundamentally different from ours. In fact, we address the learning of \emph{all} the parameters in the network, considering all of them as \emph{annealed} variables, from a statistical mechanics perspective. Moreover, it is worth stressing again that we study the Bayes-optimal generalization error, and not the one coming from ERM. In this regard, it was shown in \cite{Yue_Lenka_ERM} that ERM with hinge or logistic losses can reach generalization errors that are close to Bayes-optimal in GLMs. In addition, as long as a suitable (though not convex) loss is taken into account, ERM can yield Bayes-optimal performances. However, this holds for GLMs, which have no hidden layer and thus only $p$ parameters to be learnt.

\paragraph{Linear-width regimes} A line of recent works \cite{SompolinskyPRX2022,rotondo2022statmechDNN,cui2023optimal} deals with the full training of the network as we do here. \cite{SompolinskyPRX2022} in particular carries out a thorough study for linear neural networks. In \cite{cui2023optimal}, instead, the authors conjecture the Bayes-optimal limits in the extensive width and data regime $d,p,n\to\infty$ all proportionally. Their computations are based on a combination of the heuristic replica method from spin glasses and a Gaussian equivalence principle, that allows to treat the non-linear activations in an efficient way. Despite GEPs have been proved rigorously in other contexts (for instance \cite{reeves_MarcGET,Yue2022GEP}), it is not obvious that they are directly applicable to the extensive width and data regime when the full training of the network is carried out. Indeed, it this not clear to us whether our proof can be extended to the whole regime considered in \cite{cui2023optimal}; in particular, we cannot assess whether the equivalence results provided in the next section do hold in the fully proportional regime where $d=\Theta(n)$ and large (this is allowed by our bounds) \emph{and} $p=\Theta(n)$ (which is instead prevented by our bounds) as considered \cite{cui2023optimal}, in spite that we cannot prove it at the moment.

Gaussian equivalence principles are also present in random matrix theory literature \cite{RMTGaussian,RMTGaussian2,FAN-GEP,NIPS2_GEP} and find applications in the study of random features models.

\paragraph{Estimation in multi-layer generalized linear models} Finally, we emphasize the difference between the learning problem considered in our work and the inference problem discussed in \cite{three-layer-GLM_Jean_Marylou}, later extended in \cite{multilayerGLM}, where the authors consider the task of reconstructing a vector from observations obtained from a multi-layer GLM with fixed, known, weight matrices. We believe that the proof of the concentration of the free entropy in \cite{multilayerGLM}, which was a bottleneck for GLM extensions to the multi-layer setting initiated in \cite{three-layer-GLM_Jean_Marylou}, can be adapted to our learning problem, yielding the Bayes-optimal generalization error GLM reduction for a deep network.

\section{Proof of Theorem \ref{thm:equivalence}}

\subsection{The interpolating model}
Our proof is based on the interpolation method, introduced in the seminal papers \cite{interp_guerra_2002,Guerra_upper_bound}. This method is a very effective tool whenever a comparison between two high dimensional models is needed. The idea is that of introducing an interpolating model, for any $t\in[0,1]$, that at its ends $t=0$ and $t=1$ matches the two models to be compared. In analogy with \cite{JeanGLM_PNAS}, we shall interpolate at the level of the variables $s_\mu$, $S_\mu$:
\begin{align}
    S_{t\mu}&:=\sqrt{1-t}\frac{\mathbf{a}^{*\intercal}}{\sqrt{p}}\varphi\Big(\frac{\mathbf{W}^*\mathbf{X}_\mu}{\sqrt{d}}\Big)+\sqrt{t}\rho\frac{\mathbf{v}^{*\intercal}\mathbf{X}_\mu}{\sqrt{d}}+\sqrt{t\epsilon}\xi_\mu^*\,,\\
    s_{t\mu}&:=\sqrt{1-t}\frac{\mathbf{a}^\intercal}{\sqrt{p}}\varphi\Big(\frac{\mathbf{W}\mathbf{X}_\mu}{\sqrt{d}}\Big)+\sqrt{t}\rho\frac{\mathbf{v}^{\intercal}\mathbf{X}_\mu}{\sqrt{d}}+\sqrt{t\epsilon}\xi_\mu\,.
\end{align}
We thus introduce an interpolating teacher and interpolating student, such that the second is Bayes-optimal for any $t\in[0,1]$. This allows us to use the so-called Nishimori identities of Appendix~\ref{sec:nishiID} uniformly in $t$.

The interpolating teacher network shall produce the $\mu=1,\ldots,n$ conditionally indpendent responses
\begin{align}
    Y_{t\mu}\sim P_{\rm out}(\cdot\mid
    S_{t\mu})\,,
\end{align}
where the output kernel is unchanged since it depends only on $f$. Posterior means now read
\begin{align}
    \langle g \rangle_t:=\dfrac{1}{\cZ_t}\int D\mathbf{a}D\mathbf{W}D\mathbf{v}D\bxi \exp\Big[\sum_{\mu=1}^{n}u_{Y_{t\mu}}(s_{t\mu})\Big]g
\end{align}
for any observable $g$ depending on $\ba,\bW,\bv,\bxi$, where
\begin{align}
    \cZ_t=\int D\mathbf{a}D\mathbf{W}D\mathbf{v}D\bxi\exp\Big[\sum_{\mu=1}^{n}u_{Y_\mu}(s_{t\mu})\Big]\,.
\end{align}
In the following we drop the subscript $t$ to keep the notation light and simply use $\langle \cdot\rangle$. We also introduce the compact notation
\begin{align}\label{def:gibbs_brack}
    \E_{(t)}(\cdot):=\E_{\ba^*}\E_{\setminus \ba^*}(\cdot)=\E_{\ba^*}
    \mathbb{E}_{\mathbf{W}^*,\mathbf{v}^*,\bxi^*,\{\mathbf{X}_\mu\}}
    \int \prod_{\mu=1}^{n}dY_{t\mu} 
    e^{u_{Y_{t\mu}}(S_{t\mu})}(\cdot)\,.
\end{align}

The free entropy of this interpolating model is thus
\begin{align}
    \bar{f}_n(t):=\frac{1}{n}\mathbb{E}_{(t)}\log\cZ_t\,,
\end{align}
whence it can be verified that
\begin{align}
    \bar{f}_n(0)=\bar{f}_n\,,\quad \bar{f}_n(1)=\bar{f}_n^\circ\,.
\end{align}
Due to the identity $\bar{f}_n(1)-\bar{f}_n(0)=\int_0^1 \frac{d}{dt}{\bar{f}}_n(t) dt$, a sufficient condition to prove our Theorem \ref{thm:equivalence} is to show that $\frac{d}{dt}{\bar{f}}_n(t)$ is uniformly bounded by the same order as in the statement. A direct computation shows 
\begin{align}\label{eq:der_free_entropy}
    \begin{split}
        \frac{d}{dt}{\bar{f}}_n(t)=-A_1+A_2+A_3+B
    \end{split}
\end{align}
where
\begin{align}
    &A_1:=\frac{1}{2n}\E_{(t)}\log \cZ_t
    \sum_{\mu=1}^{n}u^{\prime}_{Y_{t\mu}}(S_{t\mu})\frac{\mathbf{a}^{*\intercal}}{\sqrt{(1-t)p}}\varphi\Big(\frac{\mathbf{W}^*\mathbf{X}_\mu}{\sqrt{d}}\Big)\, ,\\
    &A_2:=\frac{1}{2n}\E_{(t)}\log \cZ_t
    \sum_{\mu=1}^{n}u^{\prime}_{Y_{t\mu}}(S_{t\mu})\rho\frac{\mathbf{v}^{*\intercal}\mathbf{X}_\mu}{\sqrt{td}}\, ,\\
    &A_3:=\frac{1}{2n}\E_{(t)}\log \cZ_t
    \sum_{\mu=1}^{n}u^{\prime}_{Y_{t\mu}}(S_{t\mu})\sqrt{\frac{\epsilon}{t}}\xi_\mu^*\, ,\\
    \label{eq:B-def}
     &B:=\frac{1}{n}\E_{(t)}\Big\langle \sum_{\mu=1}^{n}u^{\prime}_{Y_{t\mu}}(s_{t\mu})\frac {ds_{t\mu}}{dt} \Big\rangle\,.
\end{align}
We will control each term individually. For that we will need a number of Lemmas wich we provide now.

\subsection{Lemmata}
We collect here some Lemmas that shall be used intensively in the following. For the convenience we postpone proofs to the Appendix.

\begin{lemma}[Properties of $P_{\rm out}$]\label{lem:propertiesPout}
Recall the definition $u_y(x):=\log P_{\rm out}(y\mid x)$. We denote $u'_y(x):=\partial_x u_y(x)$. Furthermore, let 
\begin{align}
    \label{eq:U_munu}
    U_{\mu\nu}:=\delta_{\mu\nu} u''_{Y_{t\mu}}(S_{t\mu})+u'_{Y_{t\mu}}(S_{t\mu})u'_{Y_{t\nu}}(S_{t\nu})\,.
\end{align}
Under Assumptions (\nameref{Assum:phi} and (\nameref{Assum:f} the following statements hold:
\begin{align}
    &\E[u'_{Y_{t\mu}}(S_{t\mu})\mid S_{t\mu}]=\E[U_{\mu\nu}\mid S_{t\mu},S_{t\nu}]=0\,,\\
    &\E[(u'_{Y_{t\mu}}(S_{t\mu}))^2\mid S_{t\mu}]\,,\,\E[U_{\mu\nu}^2\mid S_{t\mu},S_{t\nu}]\leq C(f)\,,
\end{align}for a positive constant $C(f)$ depending solely on the readout function. 
\end{lemma}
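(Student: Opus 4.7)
The proof splits naturally into a centering part and a moment-bound part, and in both the key observation is that $Y_{t\mu}$ conditional on $S_{t\mu}$ has density $P_{\rm out}(\cdot\mid S_{t\mu})$ defined in \eqref{eq:Pout}, which, thanks to \nameref{Assum:f} and $\Delta>0$, is a smooth mixture of Gaussians of common variance $\Delta$ and bounded means. Consequently every $\partial_x$ can be moved inside the $y$- and $\bA$-integrals by dominated convergence.

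For the centering identities I would use the score-function trick: exchanging $\partial_x$ and $\int dy$,
\begin{align*}
\E[u'_{Y_{t\mu}}(S_{t\mu})\mid S_{t\mu}] = \partial_x\!\!\int P_{\rm out}(y\mid x)\,dy\bigg|_{x=S_{t\mu}} = \partial_x 1 = 0.
\end{align*}
For $\mu\neq\nu$, the two factors $u'_{Y_{t\mu}}(S_{t\mu})$ and $u'_{Y_{t\nu}}(S_{t\nu})$ are conditionally independent given $(S_{t\mu},S_{t\nu})$, so the identity just proved gives $\E[U_{\mu\nu}\mid S_{t\mu},S_{t\nu}]=0$. On the diagonal I would use the algebraic identity $u''_y(x)+(u'_y(x))^2 = (\partial_x^2 P_{\rm out}(y\mid x))/P_{\rm out}(y\mid x)$, immediate from $u=\log P_{\rm out}$, and integrate against $P_{\rm out}(\cdot\mid x)$ to recover $\partial_x^2 1=0$ once again.

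For the $L^2$ bounds, I would make $u'$ and $u''$ explicit by differentiating \eqref{eq:Pout} twice in $x$, obtaining
\begin{align*}
u'_y(x) = \E_{\bA\mid y,x}\!\left[\frac{(y-f(x;\bA))\,f'(x;\bA)}{\Delta}\right],
\end{align*}
together with an analogous posterior-average formula for $u''_y(x)$ featuring $f',f''$ and $(y-f)^2/\Delta^2$. Applying Jensen inside the posterior over $\bA$ and then swapping the order of $Y$- and $\bA$-integration via the Bayes identity $P_{\rm out}(y\mid S)\,P(\bA\mid y,S)=P_A(d\bA)\,\mathcal{N}(y;f(S;\bA),\Delta)$ reduces each bound to centered Gaussian moments of $Y-f(S;\bA)$, which are explicit. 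Combined with $\|f\|_\infty,\|f'\|_\infty,\|f''\|_\infty<\infty$ from \nameref{Assum:f}, this yields
\begin{align*}
\E[(u'_Y(S))^2\mid S]\le \frac{\|f'\|_\infty^2}{\Delta},\qquad \E[(u''_Y(S))^2\mid S]\le C(f),
\end{align*}
with $C(f)$ a constant depending polynomially on $\|f'\|_\infty,\|f''\|_\infty$ and $1/\Delta$. The bound on $\E[U_{\mu\nu}^2\mid S_{t\mu},S_{t\nu}]$ then follows by the $L^2$ triangle inequality, and, in the off-diagonal case, by factorizing $U_{\mu\nu}^2$ into independent pieces given $(S_{t\mu},S_{t\nu})$.

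The proof is thus largely routine bookkeeping; the only mildly delicate point is justifying the interchange of $\partial_x$ and $\int dy$, which is standard given that $\exp(-(y-f)^2/(2\Delta))$ times any polynomial in $y-f$ admits a uniform-in-$x$ integrable dominating function once \nameref{Assum:f} and $\Delta>0$ are combined.
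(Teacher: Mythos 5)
Your proposal is correct and follows essentially the same route as the paper: the centering identities come from exchanging $\partial_x$ with $\int dy$ (together with the identity $u''_y+(u'_y)^2=P^{xx}_{\rm out}/P_{\rm out}$ and conditional independence for $\mu\neq\nu$), and the moment bounds come from representing $u'_y$, $u''_y$ as posterior averages over $\bA$ and invoking the boundedness of $f,f',f''$. The only cosmetic difference is that the paper bounds these posterior averages pointwise by $C(f)(|Z_\mu|^2+1)$ and then averages over the Gaussian noise, whereas you use Jensen plus the Bayes swap to reduce directly to exact Gaussian moments; both yield the same constant-type bound.
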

\begin{remark}
    It is worth to point out a simple observation that for $\mu=\nu$ we have $U_{\mu\mu}=P^{\prime\prime}_{\rm out}(Y_{t\mu}\mid S_{t\mu})/P_{\rm out}(Y_{t\mu})$, where $$P^{\prime}_{\rm out}(y\mid x):= \partial_x P_{\rm out}(y\mid x)\,, \qquad P^{\prime\prime}_{\rm out}(y\mid x):= \partial_x\partial_x P_{\rm out}(y\mid x)\,, $$
    from what immediately follows 
    \begin{align}\label{ineq:P_out_xx}
        \E\Big[\Big(\frac{P^{\prime\prime}_{\rm out}(Y_{t\mu}\mid S_{t\mu})}{P_{\rm out}(Y_{t\mu}\mid S_{t\mu})}\Big)^2\mid S_{t\mu}\Big]\leq C(f)\,.
    \end{align}
\end{remark}

The following lemma will play a crucial role, and it contains all the approximations due to the law of large numbers. We introduce here a convenient notation for the pre-activations:
\begin{align}
    \boldsymbol{\alpha}_\mu:=\frac{\bW^*\bX_\mu}{\sqrt{d}}\,.
\end{align}
Hence, conditionally on the inputs $(\bX_\mu)_{\mu\leq n}$, the $\boldsymbol{\alpha}$'s have covariance
\begin{align}
    \frac{1}{p}\E[\boldsymbol{\alpha}_\mu^\intercal \boldsymbol{\alpha}_\nu\mid \bX_\mu,\bX_\nu]:= \frac{1}{p}\E_{\bW^*}\frac{(\bW^*\bX_\mu)^\intercal}{\sqrt{d}}\frac{\bW^*\bX_\nu}{\sqrt{d}}=\frac{\bX_\mu^\intercal\bX_\nu}{d}\,.
\end{align}

\begin{lemma}[Approximations] \label{APPROX_LEMMA}
Let $\tilde\varphi$ be either $\varphi$ or the identity function. Under assumptions (\nameref{Assum:phi} and (\nameref{Assum:f} the following estimates hold for any choice of $\mu,\nu\le n$:
    \begin{align}
    \label{approxphiprime}
        &\E_{\bW^*}\varphi'(\alpha_{\mu i})= \rho+O\Big(\frac{\Vert\bX_\mu\Vert^2}{d}-1\Big)\,,\\
        \label{approxphi_square}
        &\E_{\bW^*}\varphi^2(\alpha_{\mu i})=\E_{\mathcal{N}(0,1)}\varphi^2+O\Big(\frac{\Vert\bX_\mu\Vert^2}{d}-1\Big)\,,\\
        \label{approxphi_phi}
        &\E_{\bW^*}\varphi(\alpha_{\mu i}) \tilde\varphi(\alpha_{\nu i})=\rho\E_{\mathcal{N}(0,1)}\tilde\varphi'\frac{\bX_\mu^\intercal\bX_\nu}{d}+O\Big(\frac{\bX_\mu^\intercal\bX_\nu}{d}\Big(\frac{\Vert\bX_\mu\Vert^2}{d}-1\Big)\Big)+O\Big(\Big(\frac{\bX_\mu^\intercal\bX_\nu}{\Vert\bX_\nu\Vert^2}\Big)^2\Big)+O\Big(\frac{(\bX_\mu^\intercal\bX_\nu)^2}{\Vert\bX_\nu\Vert^2d}\Big)\,,\\
        \label{approxphiprime_phiprime}
        &\E_{\bW^*}\varphi'(\alpha_{\mu i})\varphi'(\alpha_{\nu i})= \rho^2+O\Big(\frac{\Vert\bX_\mu\Vert^2}{d}-1\Big)+O\Big(\frac{\bX_\mu^\intercal\bX_\nu }{\Vert\bX_\nu\Vert^2}\Big)\,,\\
        \label{approxphi_square_phi_square}
        &\E_{\bW^*}\varphi^2(\alpha_{\mu i})\tilde \varphi^2(\alpha_{\nu i})=\E_{\mathcal{N}(0,1)} \varphi^2\E_{\mathcal{N}(0,1)}\tilde\varphi^2
        +O\Big(\frac{\Vert\bX_\mu\Vert^2}{d}-1\Big)+O\Big(\frac{\bX_\mu^\intercal\bX_\nu }{\Vert\bX_\nu\Vert^2}\Big)\,.
    \end{align}
\end{lemma}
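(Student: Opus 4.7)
The plan is to exploit that, conditionally on $\bX_\mu,\bX_\nu$, the rows $\bW^*_i$ of $\bW^*$ being i.i.d.\ standard Gaussian, each pair $(\alpha_{\mu i},\alpha_{\nu i})$ is a centered bivariate Gaussian with covariance $\frac{1}{d}\bigl(\begin{smallmatrix}\|\bX_\mu\|^2 & \bX_\mu^\intercal\bX_\nu\\ \bX_\mu^\intercal\bX_\nu & \|\bX_\nu\|^2\end{smallmatrix}\bigr)$. Every expectation $\E_{\bW^*}[\cdot]$ therefore reduces to a one- or two-dimensional Gaussian integral whose parameters are the deterministic quantities $a:=\|\bX_\mu\|^2/d$, $b:=\|\bX_\nu\|^2/d$ and $c:=\bX_\mu^\intercal\bX_\nu/d$. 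The whole lemma follows by Taylor-expanding around the ``typical'' values $a=b=1$, $c=0$, combined with Stein's integration-by-parts formula and the oddness of $\varphi$.

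For the single-index estimates \eqref{approxphiprime} and \eqref{approxphi_square} I would write $\alpha_{\mu i}=\sqrt{a}\,Z$ with $Z\sim\mathcal{N}(0,1)$. Differentiating in $a$ and integrating by parts gives $\tfrac{d}{da}\E\varphi'(\sqrt{a}Z)=\tfrac{1}{2}\E\varphi'''(\sqrt{a}Z)$ and $\tfrac{d}{da}\E\varphi^2(\sqrt{a}Z)=\E[\varphi'(\sqrt{a}Z)^2+\varphi(\sqrt{a}Z)\varphi''(\sqrt{a}Z)]$. Both right-hand sides are uniformly bounded under \nameref{Assum:phi}, so integrating from $1$ to $a$ yields the required $O(a-1)$ error.

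For the two-index identities \eqref{approxphi_phi}, \eqref{approxphiprime_phiprime} and \eqref{approxphi_square_phi_square} the key is the orthogonal decomposition $\bX_\mu=\lambda\bX_\nu+\bX_\mu^\perp$ with $\lambda:=\bX_\mu^\intercal\bX_\nu/\|\bX_\nu\|^2$ and $\bX_\mu^\perp\perp\bX_\nu$. Setting $\zeta_i:=\bW^{*\intercal}_i\bX_\mu^\perp/\sqrt{d}$ gives $\alpha_{\mu i}=\lambda\alpha_{\nu i}+\zeta_i$ with $\zeta_i$ independent of $\alpha_{\nu i}$ and of variance $a-c^2/b$. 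I then Taylor-expand the dependence on $\alpha_{\mu i}$ in the small quantity $\lambda\alpha_{\nu i}$ to second order. For \eqref{approxphi_phi}, $\varphi(\alpha_{\mu i})=\varphi(\zeta_i)+\lambda\alpha_{\nu i}\varphi'(\zeta_i)+\tfrac{1}{2}(\lambda\alpha_{\nu i})^2\varphi''(\tilde\zeta)$. Multiplying by $\tilde\varphi(\alpha_{\nu i})$ and taking $\E_{\bW^*}$, the zeroth-order term $\E\varphi(\zeta_i)\,\E\tilde\varphi(\alpha_{\nu i})$ vanishes by oddness; the first-order term factors by independence and, after one Stein step $\E[\alpha_{\nu i}\tilde\varphi(\alpha_{\nu i})]=b\,\E\tilde\varphi'(\sqrt{b}Z)$, equals $\lambda b\,\E\varphi'(\zeta_i)\,\E\tilde\varphi'(\sqrt{b}Z)$. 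Since $\lambda b=c=\bX_\mu^\intercal\bX_\nu/d$, applying the single-index estimates to each factor reproduces the leading term $\rho\,\E_{\mathcal N(0,1)}\tilde\varphi'\cdot c$. The quadratic remainder is controlled by $|\varphi''|\le K$ and the moments $\E|\alpha_{\nu i}|^k\le C_k b^{k/2}$, giving the corrections $O\bigl((\bX_\mu^\intercal\bX_\nu/\|\bX_\nu\|^2)^2\bigr)$ and $O\bigl((\bX_\mu^\intercal\bX_\nu)^2/(\|\bX_\nu\|^2 d)\bigr)$. Identity \eqref{approxphiprime_phiprime} follows by the same scheme with $\varphi'$ replacing $\varphi$ and $\varphi''$ acting as its linear Taylor coefficient; \eqref{approxphi_square_phi_square} is even simpler because $\varphi\varphi'$ is odd, so the $\lambda$-linear term vanishes outright and only the quadratic remainder has to be estimated.

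The underlying mathematics is classical Gaussian calculus (Stein, Price's theorem, Taylor), so the genuinely delicate step is the \emph{bookkeeping} required to express every remainder exclusively as a combination of the permitted error terms in the statement. Spurious contributions such as $c(b-1)$ naturally surface from the expansion and must be reabsorbed either by symmetrizing the roles of $\mu$ and $\nu$ (conditioning on $\bX_\mu$ rather than on $\bX_\nu$, which replaces $c/b$ by $c/a$), or by exploiting the oddness of both $\varphi$ and $\tilde\varphi$ to kill intermediate odd-order terms before they contribute. Once this bookkeeping is carried out, each of the five estimates reduces to a one-line computation.
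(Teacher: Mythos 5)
Your proposal follows essentially the same route as the paper's proof: the single-index estimates \eqref{approxphiprime}--\eqref{approxphi_square} via a one-parameter Gaussian interpolation of the variance towards $1$ (your heat-equation/Price derivative in $a$ is just a reparametrization of the paper's $s$-interpolation), and the two-index estimates via the identical orthogonal decomposition $\alpha_{\mu i}=\lambda\alpha_{\nu i}+\alpha_{\mu\perp\nu,i}$ followed by Taylor expansion, oddness of $\varphi$ to kill the zeroth order, and Stein's lemma for the first order. The bookkeeping caveats you flag (e.g.\ the $c(b-1)$ versus $c(a-1)$ labelling and the reabsorption of subleading remainders) are present in the paper's own argument as well and are harmless, so the proposal is correct.
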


The final key ingredient is the concentration of the free entropy, that we prove in Section \ref{sec:concentration}.
\begin{theorem}[Free entropy concentration]\label{th:Z_concentr}
Under assumptions (\nameref{Assum:phi} and (\nameref{Assum:f} there exists a non-negative constant $C(f,\varphi)$ such that
\begin{align}
    \mathbb{E}_{ \mathbf{a}^*}\mathbb{V}_{\setminus\ba^*}\Big(\dfrac{1}{n}\log \cZ_t\Big)
    =\mathbb{E}\Big(\dfrac{1}{n}\log \cZ_t-\mathbb{E}_{\setminus\ba^*}\dfrac{1}{n}\log \cZ_t\Big)^2
    \leq C(f,\varphi)\Big(\dfrac{1}{d}+\dfrac{1}{n}\Big)\,.
\end{align}
\end{theorem}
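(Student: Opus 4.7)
The plan is to bound the conditional variance $\mathbb{V}_{\setminus\ba^*}(F)$ of $F:=\frac{1}{n}\log\cZ_t$ by tensorization across the independent blocks of randomness (conditional on $\ba^*$), handling each Gaussian block via the Gaussian Poincar\'e inequality and the non-Gaussian block $(\bA_\mu)_\mu$ via Efron--Stein with bounded differences. Since the remaining sources $\bW^*,\bv^*,(\bX_\mu)_\mu,(\xi_\mu^*)_\mu,(Z_\mu)_\mu,(\bA_\mu)_\mu$ are mutually independent, subadditivity of variance yields $\mathbb{V}_{\setminus\ba^*}(F)\le\sum_{\text{blocks}}\mathbb{E}\,\mathbb{V}_{\text{block}}(F\mid\text{rest})$, and I will bound each contribution individually.

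For each sample-wise block ($\bA_\mu,Z_\mu,\xi_\mu^*,\bX_\mu$), the variable enters $\log\cZ_t$ only through the single summand $u_{Y_{t\mu}}(s_{t\mu})$. This gives $|\partial F|=O(1/n)$ component-wise for the Gaussian sample-wise variables (using the moment bounds on $\partial_y u,u',f'$ from Assumption~(\nameref{Assum:f}) and Lemma~\ref{lem:propertiesPout}), and $|F-F^{(\mu)}|=O(1/n)$ for $\bA_\mu$ by boundedness of $f$. For $\bX_\mu$ a short chain-rule computation together with Lemma~\ref{APPROX_LEMMA} shows $\sum_{j=1}^d(\partial_{X_{\mu j}}\log\cZ_t)^2=O(1)$ per sample (the $O(1/\sqrt d)$ size of $\partial_{X_{\mu j}}S_{t\mu}$ and $\partial_{X_{\mu j}}s_{t\mu}$ is compensated by summing over the $d$ coordinates). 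Aggregating across the $n$ samples yields a total contribution of order $O(1/n)$.

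The critical contribution comes from the \emph{global} teacher variables $\bW^*,\bv^*$, which couple all $n$ samples. Focusing on $\bv^*$ (the $\bW^*$ case is parallel, with extra factors $a_i^*\varphi'(\alpha_{\mu i})$ controlled via Lemma~\ref{APPROX_LEMMA}), Gaussian Poincar\'e gives
\[
\mathbb{E}\,\mathbb{V}_{\bv^*}(F\mid\cdot)\le\frac{t\rho^2}{n^2 d}\,\mathbb{E}\Bigl\|\sum_\mu c_\mu\bX_\mu\Bigr\|^2,\qquad c_\mu:=\langle\partial_y u_{Y_{t\mu}}(s_{t\mu})\rangle\,f'(S_{t\mu};\bA_\mu).
\]
The diagonal terms $\mathbb{E}[c_\mu^2\|\bX_\mu\|^2]=O(d)$ yield $O(1/n)$ after division, but the cross terms $\mathbb{E}[c_\mu c_\nu\bX_\mu^\intercal\bX_\nu]$ ($\mu\neq\nu$) would be only $O(\sqrt d)$ by a naive Cauchy--Schwarz, which is too weak for the claimed $O(1/d)$ rate. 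The key step is to apply Gaussian integration by parts (Stein's lemma) twice, successively on $X_{\mu j}$ and $X_{\nu j}$, to obtain
\[
\mathbb{E}[c_\mu c_\nu\bX_\mu^\intercal\bX_\nu]\;=\;\sum_{j=1}^d\mathbb{E}\bigl[\partial_{X_{\mu j}}\partial_{X_{\nu j}}(c_\mu c_\nu)\bigr].
\]
Both the direct derivatives $\partial_{X_{\mu j}}c_\mu$ and the indirect Gibbs-covariance derivatives $\partial_{X_{\nu j}}c_\mu=f'_\mu\,\mathrm{Cov}_{\langle\cdot\rangle}\bigl(\partial_y u_{Y_{t\mu}}(s_{t\mu}),\partial_{X_{\nu j}}u_{Y_{t\nu}}(s_{t\nu})\bigr)$ are of size $O(1/\sqrt d)$ in expectation, because the chain-rule factors $\partial_X S_t$ and $\partial_X s_t$ inherit the $1/\sqrt d$ prefactor of the interpolating signals. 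Hence each mixed second derivative is $O(1/d)$ per $j$; summing over $j$ gives $O(1)$ and summing over $\mu\neq\nu$ gives $O(n^2)$, producing a net $O(1/d)$ contribution after division by $n^2 d$.

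The main obstacle is the systematic bookkeeping of the four terms generated by expanding $\partial_{X_{\mu j}}\partial_{X_{\nu j}}(c_\mu c_\nu)$ (direct-direct, direct-indirect, indirect-indirect, and mixed second-order indirect) and verifying uniform $O(1/d)$ bounds in expectation; the analogous expansion for $\bW^*$ additionally involves derivatives $\partial_{X_{\mu j}}\varphi'(\alpha_{\mu i})=\varphi''(\alpha_{\mu i})W_{ij}^*/\sqrt d$ controlled via Assumption~(\nameref{Assum:phi}) and Lemma~\ref{APPROX_LEMMA}. One also needs moment bounds on $\partial_y u$ analogous to those of Lemma~\ref{lem:propertiesPout} for $u',u''$, which follow from Assumption~(\nameref{Assum:f}) and the Gaussian-channel form of $P_{\rm out}$. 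Combining the $O(1/n)$ sample-wise and $O(1/d)$ teacher-weight contributions then produces the claimed $C(f,\varphi)(1/d+1/n)$ bound.
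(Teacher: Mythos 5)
Your overall architecture --- splitting the conditional variance over the independent blocks of randomness, Gaussian Poincar\'e for the Gaussian blocks, Efron--Stein with bounded differences for $\bA$ --- matches the paper's, and your per-sample $O(1/n)$ bounds (for $\bZ,\bxi^*,\bX,\bA$) are essentially identical to the paper's terms $I_1,I_2,I_3$ and its $\bA$-lemma. The treatment of the global teacher weights $\bW^*,\bv^*$, however, is genuinely different. The paper does not apply Poincar\'e to $\log\cZ_t$ itself: using the exact telescoping (martingale) decomposition of the variance, it applies Poincar\'e to the partially averaged function $g=\E_{\psi}[\log\cZ_t]/n$, where $\E_\psi$ integrates out $\bX,\bZ,\bxi^*,\bA$. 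This has two consequences: the sum over $\mu$ in $\partial g/\partial W^*_{ij}$ collapses to a single representative term by exchangeability, and a single Gaussian integration by parts with respect to $X^j_\mu$ can be performed \emph{inside} the expectation, gaining the extra $1/\sqrt d$ that turns the naive $O(1)$ bound on $\sum_{i,j}\E(\partial g/\partial W^*_{ij})^2$ into $O(1/d)$; no $\mu\neq\nu$ cross terms ever appear. You instead apply Poincar\'e to the un-averaged $F$, which forces you to confront the cross terms $\E[c_\mu c_\nu\bX_\mu^\intercal\bX_\nu]$, and you eliminate them with a double Stein integration by parts in $X_{\mu j}$ and $X_{\nu j}$. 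Both methods extract the missing factor of $d$ from the same source --- Gaussian integration by parts over the inputs --- so your route is viable and yields the same $O(1/d)$ contribution; the paper's sequential conditioning simply makes the bookkeeping lighter (one integration by parts, a single term, no expansion of Gibbs covariances across samples), whereas yours requires verifying uniform $O(1/d)$ bounds for the four families of mixed second derivatives you list, together with the Cauchy--Schwarz/Jensen/Nishimori manipulations needed to control the Gibbs brackets of replica quantities (such as $\langle\Vert\ba\circ\varphi'(\bW\bX_\mu/\sqrt d)\bW\Vert^2/(pd)\rangle$) that those derivatives generate --- the same manipulations the paper performs for its single-IBP terms.
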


\subsection{Proof of Theorem \ref{thm:equivalence}}

We split the proof of Theorem \ref{thm:equivalence} into different Lemmas for the sake of readability. If not differently specified, all the following Lemmas hold under the same hypotheses of Theorem \ref{thm:equivalence}. The first one concerns the $B$ contribution to the derivative of the free entropy \eqref{eq:der_free_entropy}.
\begin{lemma}[$B$ term]
 $B=0$.
\end{lemma}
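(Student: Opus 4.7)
The plan is to exploit the Bayes-optimality of the interpolating model via the Nishimori identity, together with the zero-mean property of the score $u'_{Y_{t\mu}}$ recorded in Lemma~\ref{lem:propertiesPout}. The summand $u'_{Y_{t\mu}}(s_{t\mu})\frac{ds_{t\mu}}{dt}$ is a measurable function of a single replica $\btheta=(\ba,\bW,\bv,\bxi)$, of $\bX_\mu$, and of the label $Y_{t\mu}$ only, where $\frac{ds_{t\mu}}{dt}$ is obtained by differentiating the explicit definition of $s_{t\mu}$ in $t$. For every $t\in[0,1]$ the interpolating student uses the same channel $P_{\rm out}$, the same functional form $s_{t\mu}\leftrightarrow S_{t\mu}$, and the same i.i.d.\ standard Gaussian priors on each parameter block as the teacher; the Nishimori identity of Appendix~\ref{sec:nishiID} therefore applies to the whole augmented parameter $(\ba,\bW,\bv,\bxi)$ and allows to swap the replica with the teacher inside $\E_{(t)}\langle\cdot\rangle$:
\begin{align}
\E_{(t)}\Big\langle u'_{Y_{t\mu}}(s_{t\mu})\frac{ds_{t\mu}}{dt}\Big\rangle=\E_{(t)}\Big[u'_{Y_{t\mu}}(S_{t\mu})\frac{dS_{t\mu}}{dt}\Big].
\end{align}

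I would then tower-condition on every source of randomness other than $Y_{t\mu}$, namely on $(\ba^*,\bW^*,\bv^*,\bxi^*,\bX_\mu)$. Since $\frac{dS_{t\mu}}{dt}$ is deterministic given this conditioning and $Y_{t\mu}\sim P_{\rm out}(\cdot\mid S_{t\mu})$, Lemma~\ref{lem:propertiesPout} yields $\E[u'_{Y_{t\mu}}(S_{t\mu})\mid S_{t\mu}]=0$, which is itself just the restatement of $\int dY\,P_{\rm out}'(Y\mid S_{t\mu})=\partial_{S_{t\mu}}\int dY\,P_{\rm out}(Y\mid S_{t\mu})=0$. Each summand therefore vanishes exactly, giving $B=0$ with no remainder to track.

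The only point requiring care — hardly an obstacle — is verifying that Nishimori genuinely applies to the full augmented parameter $(\ba,\bW,\bv,\bxi)$ and not only to $(\ba,\bW)$ as in the original shallow-network model. This is automatic since the GLM-type auxiliary variables $\bv$ and $\bxi$ enter teacher and student identically at interpolation time $t$ and share matched factorized standard Gaussian priors, so the symmetric teacher-student structure the Nishimori argument hinges on is preserved verbatim. No quantitative approximation is needed, in sharp contrast with the bounds that will be required to control $A_1,A_2,A_3$.
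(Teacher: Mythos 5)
Your proposal is correct and follows essentially the same route as the paper: apply the Nishimori identity (valid uniformly in $t$ because the interpolating student is Bayes-optimal for the interpolating teacher) to replace $s_{t\mu}$ by $S_{t\mu}$, then condition on everything determining $S_{t\mu}$ and $\dot S_{t\mu}$ and invoke $\E[u'_{Y_{t\mu}}(S_{t\mu})\mid S_{t\mu}]=0$ from Lemma~\ref{lem:propertiesPout}. Your extra remark checking that the Nishimori identity applies to the augmented parameter $(\ba,\bW,\bv,\bxi)$ is a point the paper leaves implicit, and it is handled correctly.
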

\begin{proof}
The random variable inside the brackets in \eqref{eq:B-def} is a function of the data $Y_{t\mu}$ and of a sample from the posterior through $s_{t\mu}$. Hence we can use the Nishimori identities to get rid of the brackets, replacing $s_{t\mu}$ with the ground truth version $S_{t\mu}$ (from now on we denote with an upper dot $\dot S:=\frac{dS}{dt}$ the $t$-derivative):
\begin{align}
    B=\frac{1}{n}\E_{(t)}\sum_{\mu=1}^{n}u'_{Y_{t\mu}}(S_{t\mu})\dot{S}_{t\mu}=\frac{1}{n}\sum_{\mu=1}^{n} 
    \E_{(t)}\Big[\E_{(t)}[ u'_{Y_{t\mu}}(S_{t\mu})\mid S_{t\mu}]\dot{S}_{t\mu}\Big]
\end{align}
where we used the tower rule for expectations. The latter is identically zero thanks to Lemma \ref{lem:propertiesPout}.
\end{proof}

We split $A_1$ into two other contributions $A_1=A_{11}+A_{12}$ where
 \begin{align}
     &A_{11}:=\frac{1}{2n\sqrt{1-t}}\E_{(t)}\log \cZ_t
    \sum_{\mu=1}^{n}u^{\prime}_{Y_{t\mu}}(S_{t\mu})\Big(\frac{\mathbf{a}^{*\intercal}}{\sqrt{p}}\varphi\Big(\frac{\mathbf{W}^*\mathbf{X}_\mu}{\sqrt{d}}\Big)-\frac{{\rho}\mathbf{a}^{*\intercal} \mathbf{W}^* \mathbf{X}_\mu}{\sqrt{pd}}\Big)\,,\\
    &A_{12}:=\frac{1}{2n\sqrt{1-t}}\E_{(t)}\log \cZ_t
    \sum_{\mu=1}^{n}u^{\prime}_{Y_{t\mu}}(S_{t\mu})\frac{ {\rho}\mathbf{a}^{*\intercal} \mathbf{W}^* \mathbf{X}_\mu}{\sqrt{pd}}\,.
 \end{align}
Let us simplify these terms by Gaussian integration by parts. In $A_{12}$, integrating by parts w.r.t. $\mathbf{W}^*$ yields
\begin{align}
    A_{12}&=\frac{{\rho}}{2n}\E_{(t)}\log \cZ_t\sum_{\mu,\nu=1}^{n}U_{\mu\nu}\frac{\mathbf{a}^{*\intercal}\big(\mathbf{a}^*\circ\varphi'\big(\frac{\mathbf{W}^*\mathbf{X}_\nu}{\sqrt{d}}\big)\big)}{p}\frac{\mathbf{X}_\mu^{\intercal}\mathbf{X}_\nu}{d} \label{A12}
\end{align}
with $U_{\mu\nu}$ defined in Lemma \ref{lem:propertiesPout} and $\circ$ denotes the entry-wise (Hadamard) product. Concerning $A_{11}$, because of the non-linearity, we can only integrate by parts w.r.t. $\mathbf{a}^*$ and obtain
\begin{align}
\begin{split}
    A_{11}&=\frac{1}{2n}\E_{(t)}\log \cZ_t\sum_{\mu,\nu=1}^{n}U_{\mu\nu}\Big[ \frac{\varphi(\boldsymbol{\alpha}_\mu)^{\intercal}\varphi(\boldsymbol{\alpha}_\nu)-\rho\boldsymbol{\alpha}_\mu^\intercal\varphi(\boldsymbol{\alpha}_\nu)}{p}\Big]\,.
\end{split}
\end{align}
The off-diagonal $\mu\neq\nu$ and diagonal terms in the previous equations play two very different roles, and they shall be treated separately in the following.

\begin{lemma}[Off-diagonal part of $A_{11}$]\label{lem:offdiag1}
The following holds: 
    \begin{equation}\label{eq:off_diag_terms}
        \begin{split}
            A_{11}^{\rm off}:=\frac{1}{n}\E_{(t)}\log \cZ_t\sum_{\mu\neq\nu}U_{\mu\nu}\Big[\frac{\varphi(\boldsymbol{\alpha}_\mu)^\intercal\varphi(\boldsymbol{\alpha}_\nu)-\rho\boldsymbol{\alpha}_\mu^\intercal\varphi(\boldsymbol{\alpha}_\nu)}{p}\Big]=O\Big(\sqrt{\Big(1+\frac{n}{d}\Big)\Big(\frac{n}{p}+\frac{n}{d^{3/2}}\Big)}\Big)\,.
        \end{split}
    \end{equation}
\end{lemma}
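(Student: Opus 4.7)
The plan is to exploit two structural facts: the quantity $Q_{\mu\nu} := \tfrac{1}{p}\bigl[\varphi(\boldsymbol{\alpha}_\mu)^\intercal\varphi(\boldsymbol{\alpha}_\nu) - \rho\,\boldsymbol{\alpha}_\mu^\intercal\varphi(\boldsymbol{\alpha}_\nu)\bigr]$ depends only on $(\bW^*,\bX_\mu,\bX_\nu)$ and not on the labels, while $U_{\mu\nu}$ has zero mean over $Y_{t\mu},Y_{t\nu}$ conditional on $(S_{t\mu},S_{t\nu})$ when $\mu\neq\nu$ by Lemma~\ref{lem:propertiesPout}. Writing $W := \sum_{\mu\neq\nu}U_{\mu\nu}Q_{\mu\nu}$, we have $A_{11}^{\rm off}=\tfrac{1}{n}\E_{(t)}[\log\cZ_t\cdot W]$.

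First I would center using $M := \E_{\setminus\ba^*}\log\cZ_t$ (a function of $\ba^*$ alone): $\E_{(t)}[\log\cZ_t\cdot W] = \E_{(t)}[(\log\cZ_t - M)W] + \E_{\ba^*}[M\,\E_{\setminus\ba^*}[W\mid\ba^*]]$. Since $Q_{\mu\nu}$ does not depend on the labels, integrating the $Y$'s out against $P_{\rm out}$ first gives $\E_{\setminus\ba^*}[W\mid\ba^*]=0$, as each $U_{\mu\nu}$ with $\mu\neq\nu$ factorizes as $u'_{Y_{t\mu}}(S_{t\mu})u'_{Y_{t\nu}}(S_{t\nu})$ and each factor has vanishing $Y$-expectation. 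Only the centered part survives, so by Cauchy--Schwarz and Theorem~\ref{th:Z_concentr},
\[
|A_{11}^{\rm off}|^2 \le \frac{1}{n^2}\,\E[(\log\cZ_t - M)^2]\cdot \E[W^2] \le C(f,\varphi)\Bigl(\tfrac{1}{d}+\tfrac{1}{n}\Bigr)\,\E[W^2].
\]

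Next I would estimate $\E[W^2]$ by expanding the square and again integrating the $Y$-variables first. In the product $U_{\mu_1\nu_1}U_{\mu_2\nu_2}$ any index of $\{\mu_1,\nu_1,\mu_2,\nu_2\}$ appearing an odd number of times produces a factor $\E[u'_Y(S)\mid S]=0$ upon $Y$-integration. Because $\mu_i\neq\nu_i$, the only way every index appears with even multiplicity is $\{\mu_1,\nu_1\}=\{\mu_2,\nu_2\}$, leaving $O(n^2)$ surviving tuples. Each such term is controlled by $C(f)\,\E[|Q_{\mu\nu}Q_{\nu\mu}|]\le C(f)\,\E[Q_{\mu\nu}^2]$ (by Cauchy--Schwarz and symmetry) using $\E[U_{\mu\nu}^2\mid S]\le C(f)$ from Lemma~\ref{lem:propertiesPout}.

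Finally, decompose $\E[Q_{\mu\nu}^2] = \E[\V_{\bW^*}(Q_{\mu\nu})]+\E[(\E_{\bW^*}Q_{\mu\nu})^2]$. Since the rows of $\bW^*$ are i.i.d., $Q_{\mu\nu}$ is a mean of $p$ conditionally i.i.d. terms whose second moment is bounded (using oddness and Lipschitzness of $\varphi$ to compare against bivariate Gaussian fourth moments), giving a variance of order $1/p$. For the bias, Lemma~\ref{APPROX_LEMMA} applied with $\tilde\varphi=\varphi$ and $\tilde\varphi={\rm id}$ shows that the leading $\rho^2\bX_\mu^\intercal\bX_\nu/d$ contributions to $\E_{\bW^*}\varphi(\alpha_\mu)\varphi(\alpha_\nu)$ and $\rho\,\E_{\bW^*}\alpha_\mu\varphi(\alpha_\nu)$ cancel \emph{exactly}, leaving three remainder types each of $L^2(\bX)$-norm $O(1/d)$ thanks to $\E[(\|\bX_\mu\|^2/d-1)^2]=O(1/d)$ and the Gaussianity of $\bX_\mu^\intercal\bX_\nu\mid\bX_\nu$. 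Hence $\E[(\E_{\bW^*}Q_{\mu\nu})^2] = O(1/d^2)$ and combining everything,
\[
|A_{11}^{\rm off}| = O\!\left(\sqrt{\Bigl(\tfrac{1}{d}+\tfrac{1}{n}\Bigr)\,n^2\!\Bigl(\tfrac{1}{p}+\tfrac{1}{d^2}\Bigr)}\right) = O\!\left(\sqrt{\Bigl(1+\tfrac{n}{d}\Bigr)\!\Bigl(\tfrac{n}{p}+\tfrac{n}{d^{3/2}}\Bigr)}\right),
\]
using $n^2/d^3\le n^2/d^{5/2}$ and $n/d^2\le n/d^{3/2}$ for $d\ge 1$. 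The main technical points will be the combinatorial reduction of $\E[W^2]$ via the $Y$-conditioning, and the careful application of Lemma~\ref{APPROX_LEMMA} that realizes the exact cancellation of the $\rho^2\bX_\mu^\intercal\bX_\nu/d$ terms---this cancellation is the whole reason why the subtraction $\rho\,\boldsymbol{\alpha}_\mu^\intercal\varphi(\boldsymbol{\alpha}_\nu)$ appears inside $A_{11}$ in the first place.
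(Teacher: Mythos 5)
Your proposal is correct and follows essentially the same route as the paper: centering $\log\cZ_t$ via the conditional zero-mean of $U_{\mu\nu}$, Cauchy--Schwarz against $\mathbb{V}_{\setminus\ba^*}[f_n]$ controlled by Theorem \ref{th:Z_concentr}, the combinatorial collapse of the quadruple sum to the $\{\mu_1,\nu_1\}=\{\mu_2,\nu_2\}$ pairs, and the cancellation of the leading $\rho^2\bX_\mu^\intercal\bX_\nu/d$ terms via Lemma \ref{APPROX_LEMMA}. The only (harmless) difference is organizational: you use a bias--variance split of $Q_{\mu\nu}$ in $\bW^*$ rather than the paper's diagonal/off-diagonal split over hidden-unit indices $i,j$, and performing the cancellation before squaring gives you a marginally sharper $O(1/d^2)$ for the squared-bias term versus the paper's $O(1/d^{3/2})$, which still implies the stated rate.
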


\begin{proof}
    Let us start noticing that, for any smooth function $F(\boldsymbol{\alpha}_\mu,\boldsymbol{\alpha}_\nu)$ we have  
    \begin{align}
    \begin{split}
        \E_{\setminus\ba^*}U_{\mu\nu}F(\boldsymbol{\alpha}_\mu,\boldsymbol{\alpha}_\nu)=
        \E_{\setminus\ba^*}\big[\E_{\setminus\ba^*}[U_{\mu\nu}\mid \mathbf{W}^*,\bv^*,\boldsymbol{\xi}^*,\mathbf{X}]F(\boldsymbol{\alpha}_\mu,\boldsymbol{\alpha}_\nu)\big]=0
    \end{split}
    \end{align}
    thanks to Lemma \ref{lem:propertiesPout}. As a consequence, with $\ba^*$ fixed, we can modify $A_{11}^{\rm off}$ as follows without changing its value:
    \begin{align}
        A_{11}^{\rm off}=\E_{\setminus\ba^*}(f_n-\E_{\setminus\ba^*}f_n)\sum_{\mu\neq\nu}U_{\mu\nu}\Big[\frac{\varphi(\boldsymbol{\alpha}_\mu)^\intercal\varphi(\boldsymbol{\alpha}_\nu)-{\rho}\boldsymbol{\alpha}_\mu^\intercal\varphi(\boldsymbol{\alpha}_\nu)}{p}\Big]
    \end{align}
    with $f_n:=\log \cZ_t/{n}$. In the following we shall simply write $u'_\mu$ in place of $u'_{Y_{t\mu}}(S_{t\mu})$, and $\boldsymbol{\varphi}_\mu$ instead of $\varphi(\boldsymbol{\alpha}_\mu)$ for brevity. We are now in position to use Cauchy-Schwartz's identity:
    \begin{align}
        (A_{11}^{\rm off})^2\leq \mathbb{V}_{\setminus\ba^*}[f_n] \sum_{\mu\neq\nu}\sum_{\lambda\neq \eta} \E_{\setminus\ba^*} U_{\mu \nu} U_{\lambda \eta} \Big[\frac{\boldsymbol{\varphi}_\mu^\intercal\boldsymbol{\varphi}_\nu-{\rho}\boldsymbol{\alpha}_\mu^\intercal\boldsymbol{\varphi}_\nu}{p}\Big]\Big[\frac{\boldsymbol{\varphi}_\lambda^\intercal\boldsymbol{\varphi}_\eta-{\rho}\boldsymbol{\alpha}_\lambda^\intercal\boldsymbol{\varphi}_\eta}{p}\Big]\,.
    \end{align}
    Note that when all the four Greek indices are different from one another we get the highest combinatorial factor of $O(n^4)$. However, using the conditional independence of the responses and Lemma \ref{lem:propertiesPout}, the expectation sets them to $0$. Hence, the only contributions from the double sums come from $\mu=\lambda$ and $\nu=\eta$, or $\mu=\eta$ and $\nu=\lambda$, which gives twice the same quantity. Thus
    \begin{align}
        (A_{11}^{\rm off})^2\leq \mathbb{V}_{\setminus\ba^*}[f_n] \frac{2}{p^2}\sum_{\mu\neq\nu} \E_{\setminus\ba^*} (u'_\mu u'_\nu)^2 \sum_{i,j=1}^{p}\big[\varphi_{\mu i}\varphi_{\nu i}\varphi_{\mu j}\varphi_{\nu j}
        -2{\rho}\alpha_{\mu i}\varphi_{\nu i}\varphi_{\mu j}\varphi_{\nu j}+\rho^2\alpha_{\mu i}\varphi_{\nu i}\alpha_{\mu j}\varphi_{\nu j}\big]\,.\label{61}
    \end{align}
    The double sum on $i,j$ comes from the square of a scalar product. Lemma \ref{lem:propertiesPout} then allows to bound $\E[U_{\mu\nu}^2\mid S_{t\mu},S_{t\nu}]$ by a constant.
    
    Let us treat the off-diagonal terms ($i\neq j$) first. We call Lemma \ref{APPROX_LEMMA}, in particular \eqref{approxphi_phi}, to simplify the first term in \eqref{61}:
    \begin{align}
    \begin{split}
        &\E_{\setminus\ba^*}\sum_{i\neq j,1}^{p}\varphi({\alpha}_{\mu i})\varphi({\alpha}_{\nu i})\varphi({\alpha}_{\mu j})\varphi({\alpha}_{\nu j})= p(p-1)\E_{\bX_\mu,\bX_\nu}\big(\E_{\bW^*}[\varphi({\alpha}_{\mu 1})\varphi({\alpha}_{\nu 1})]\big)^2\\
        &= p(p-1) \E\Big[\rho^4 \Big(\frac{\bX_\mu^\intercal\bX_\nu}{d}\Big)^2+O\Big(\frac{\bX_\mu^\intercal\bX_\nu}{d}\Big(\frac{\bX_\mu^\intercal\bX_\nu}{\Vert \bX_\nu\Vert^2}\Big)^2\Big)+O\Big(\Big(\frac{\bX_\mu^\intercal\bX_\nu}{d}\Big)^2\frac{\bX_\mu^\intercal\bX_\nu}{\Vert \bX_\nu\Vert^2}\Big)+O\Big(\Big(\frac{\bX_\mu^\intercal\bX_\nu}{d}\Big)^2\Big(\frac{\Vert\bX_\nu\Vert^2}{d}-1\Big)\Big)\Big]\,.
    \end{split}
    \end{align}
    The first term in the square brackets corresponds to the square of the leading term in \eqref{approxphi_phi} with $\tilde\varphi=\varphi$. The other two terms are obtained as cross products between the leading term in \eqref{approxphi_phi} and the remainders.
    
    Exploiting the fact that the norm of a Gaussian vector concentrates with exponential speed, i.e.,
    \begin{align}
        \mathbb{P}\Big(\Big|\frac{\Vert\bX_\mu\Vert^2}{d}-1\Big|\geq h\Big)\leq\exp\Big(-\frac{dL h^2}{2}\Big)\,,\quad L>0\,,\forall h>0\,,
    \end{align}
    one can conclude that
    \begin{align}
        \E_{\setminus\ba^*}\sum_{i\neq j,1}^{p}\varphi({\alpha}_{\mu i}) \varphi({\alpha}_{\nu i})\varphi({\alpha}_{\mu j}) \varphi({\alpha}_{\nu j})=p(p-1)\Big[\frac{\rho^4}{d}+O\Big(\frac{1}{d^{3/2}}\Big)\Big]\,.
    \end{align}

    We now turn to the second term of \eqref{61}: using again a Gaussian integration by part for the second equality below followed by Lemma \ref{APPROX_LEMMA} we get
    \begin{align}
    \begin{split}
        {\rho}\E_{\bX_\mu,\bX_\nu}\E_{\bW^*}&\sum_{i\neq j}^{p}\alpha_{\mu i}\varphi_{\nu i}\varphi_{\mu j}\varphi_{\nu j}={\rho}p(p-1)\E_{\bX_\mu,\bX_\nu}\E_{\bW^*}[\alpha_{\mu 1}\varphi_{\nu 1}]\E_{\bW^*}[\varphi_{\mu 1}\varphi_{\nu 1}]\\
        &= {\rho}p(p-1)\E_{\bX_\mu,\bX_\nu}\frac{\bX_\mu^\intercal\bX_\nu}{d}\E_{\bW^*}[\varphi'_{\nu 1}]\E_{\bW^*}[\varphi_{\mu 1}\varphi_{\nu 1}]\\
        &= p(p-1)\E_{\bX_\mu,\bX_\nu} \Big[\rho^2\frac{\bX_\mu^\intercal\bX_\nu}{d}+O\Big(\frac{\bX_\mu^\intercal\bX_\nu}{d}\Big(\frac{\Vert\bX_\mu\Vert^2}{d}-1\Big)\Big)\Big]\\
        &\qquad\times \Big[\rho^2\frac{\bX_\mu^\intercal\bX_\nu}{d}+O\Big(\Big(\frac{\bX_\mu^\intercal\bX_\nu}{\Vert\bX_\nu\Vert^2}\Big)^2\Big)+O\Big(\frac{(\bX_\mu^\intercal\bX_\nu)^2}{\Vert\bX_\nu\Vert^2d}\Big)+O\Big(\frac{\bX_\mu^\intercal\bX_\nu}{d}\Big(\frac{\Vert\bX_\mu\Vert^2}{d}-1\Big)\Big)\Big]
    \end{split}
    \end{align}
    which shows that
    \begin{align}
        \rho\E_{\bX_\mu,\bX_\nu}\E_{\bW^*}&\sum_{i\neq j}^{p}\alpha_{\mu i}\varphi_{\nu i}\varphi_{\mu j}\varphi_{\nu j}=p(p-1)
        \Big[\frac{\rho^4 }{d}+O\Big(\frac{1}{d^{3/2}}\Big)\Big]\,.
    \end{align}
    Finally, for what concerns the off-diagonal terms $i\neq j$, we deal with the last term of \eqref{61}:
    \begin{align}
        \rho^2\sum_{i\neq j}^{p}\E_{\bX_\mu,\bX_\nu}\alpha_{\mu i}\varphi_{\nu i}\alpha_{\mu j}\varphi_{\nu j}&=\rho^2p(p-1)\E_{\bX_\mu,\bX_\nu}\E_{\bW^*}[\alpha_{\mu 1}\varphi_{\nu 1}]\E_{\bW^*}[\alpha_{\mu 1}\varphi_{\nu 1}]\nonumber \\
        &=p(p-1)\rho^2 \E_{\bX_\mu,\bX_\nu}\Big(\frac{\bX_\mu^\intercal\bX_\nu }{d}\Big)^2\E_{\bW^*}\varphi'_{\mu 1}\E_{\bW^*}\varphi'_{\nu 1}\nonumber\\
        &=p(p-1)\Big[\frac{\rho^4}{d}+O\Big(\frac{1}{d^{3/2}}\Big)\Big]
    \end{align}
    where we used integration by parts and the approximation Lemma \ref{APPROX_LEMMA}. From this computation we see that, remarkably, the leading orders of the off-diagonal terms $i\neq j$ in \eqref{61} cancel each other, leaving the more convenient rate $O(1/d^{3/2})$. More precisely, there exists an absolute constant $K$ such that
    \begin{align}
        (A_{11}^{\rm off})^2\leq \mathbb{V}_{\setminus\ba^*}[f_n] \frac{2K}{p^2}\sum_{\mu\neq\nu} \E_{\setminus\ba^*} \Big\{\sum_{i=1}^{p}\big[\varphi^2_{\mu i}\varphi^2_{\nu i}
        -2{\rho}\alpha_{\mu i}\varphi^2_{\nu i}\varphi_{\mu i}+\rho^2\alpha_{\mu i}^2\varphi_{\nu i}^2\big]+O\Big(\frac{p^2}{d^{3/2}}\Big)\Big\}\,.
    \end{align}
    From the previous bound we see that we cannot hope that the same cancellation occurs in the diagonal terms $i=j$. Using again the results from Lemma \ref{APPROX_LEMMA} one can show that
    \begin{align}
        (A_{11}^{\rm off})^2\leq \mathbb{V}_{\setminus\ba^*}[f_n] \frac{2K}{p^2}\sum_{\mu\neq\nu}\Big[O(p)+O\Big(\frac{p^2}{d^{3/2}}\Big)\Big]\,.
    \end{align}
    The statement is thus proved after we take care of the remaining expectation over $\ba^*$ using Theorem \ref{th:Z_concentr}:
    \begin{align}
        |\E_{\ba^*}A_{11}^{\rm off}|\leq\E_{\ba^*}\sqrt{{(A_{11}^{\rm off})^2}}\leq \sqrt{\E_{\ba^*}\mathbb{V}_{\setminus\ba^*}[f_n]O\Big(
        \frac{n^2}{p}+\frac{n^2}{d^{3/2}}
        \Big)}
        =O\Big(\sqrt{\frac{n}{p}+\frac{n}{d^{3/2}}+\frac{n^2}{dp}+\frac{n^2}{d^{5/2}}}\Big)\,.
    \end{align}
\end{proof}
\begin{remark}
The previous result is telling us that the number of data points $n$ can grow as fast as $o(d^{3/2})$ with the size of the input layer, but has to be much smaller than $p$, the size of the hidden layer. Furthermore, treating the difference $\varphi(\boldsymbol{\alpha}_\mu)^\intercal\varphi(\boldsymbol{\alpha}_\nu)- {\rho}\boldsymbol{\alpha}_\mu^\intercal\varphi(\boldsymbol{\alpha}_\nu)$ altogether is fundamental to obtain the scaling $n^2{d^{-3/2}}$ instead of $n^2d^{-1}$. We also stress that the pre-activations $\boldsymbol{\alpha}_\mu$ in the hidden layer have correlations among them that scale as $d^{-1/2}$. If $d$ is not big enough they cannot be considered weakly correlated.
\end{remark}

From Lemma \ref{lem:offdiag1} we thus infer that
\begin{align}
    A_{11}&=\frac{1}{2n}\E_{(t)}\log \cZ_t\sum_{\mu=1}^{n}\frac{P_{\rm out}''(Y_{t\mu} \mid S_{t\mu})}{P_{\rm out}(Y_{t\mu} \mid S_{t\mu})}\Big[ \frac{\Vert\varphi(\boldsymbol{\alpha}_\mu)\Vert^2-{\rho}\boldsymbol{\alpha}_\mu^\intercal\varphi(\boldsymbol{\alpha}_\mu)}{p}\Big]+O\Big(\sqrt{\Big(1+\frac{n}{d}\Big)\Big(\frac{n}{p}+\frac{n}{d^{3/2}}\Big)}\Big)\,.
\end{align}
For the term $A_3$ we use integration by parts  with respect to the variables Gaussian $\xi_{\mu}^*$:
\begin{align}
    A_3=\frac{\epsilon}{2n}\E_{(t)}\log \cZ_t
    \sum_{\mu=1}^{n}\Big((u^{\prime}_{Y_{t\mu}}(S_{t\mu}))^2+u^{\prime\prime}_{Y_{t\mu}}(S_{t\mu})\Big)
    =\frac{\epsilon}{2d}\E_{(t)}\log \cZ_t\sum_{\mu=1}^{n}\dfrac{P^{\prime\prime}_{\rm out}(Y_{t\mu}\mid S_{t\mu})}{P_{\rm out}(Y_{t\mu}\mid S_{t\mu})}\,.
\end{align}
Hence
\begin{multline}
    A_3-A_{11}=\frac{1}{2n}\E_{(t)}\log \cZ_t\sum_{\mu=1}^{n}\dfrac{P^{\prime\prime}_{\rm out}(Y_{t\mu}\mid S_{t\mu})}{P_{\rm out}(Y_{t\mu}\mid S_{t\mu})}\Big[\epsilon-\frac{\Vert\varphi(\boldsymbol{\alpha}_\mu)\Vert^2-{\rho}\boldsymbol{\alpha}_\mu^\intercal\varphi(\boldsymbol{\alpha}_\mu)}{p}\Big]
+O\Big(\sqrt{\Big(1+\frac{n}{d}\Big)\Big(\frac{n}{p}+\frac{n}{d^{3/2}}\Big)}\Big)\,.
\end{multline}

\begin{lemma}[$A_3-A_{11}^{\rm diag}$ term]\label{lem:epsilon_cancellation}
The following asymptotics holds:
\begin{align}
    A_3-A_{11}^{\rm diag}:=\frac{1}{2n}\E_{(t)}\log \cZ_t\sum_{\mu=1}^{n}\dfrac{P^{\prime\prime}_{\rm out}(Y_{t\mu}\mid S_{t\mu})}{P_{\rm out}(Y_{t\mu}\mid S_{t\mu})}\Big[\epsilon-\frac{\Vert\varphi(\boldsymbol{\alpha}_\mu)\Vert^2-{\rho}\boldsymbol{\alpha}_\mu^{\intercal}\varphi(\boldsymbol{\alpha}_\mu)}{p}\Big]
    =O\Big(\sqrt{\Big(\frac n d + 1\Big)\Big(\frac1p + \frac1{\sqrt d}\Big)}\Big)\,.
\end{align}
\end{lemma}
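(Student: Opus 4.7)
First, I would exploit the fact that the bracket $[\epsilon - g_\mu]$ with $g_\mu := (\Vert\varphi(\boldsymbol{\alpha}_\mu)\Vert^2 - \rho\boldsymbol{\alpha}_\mu^\intercal\varphi(\boldsymbol{\alpha}_\mu))/p$ depends only on $(\bW^*,\bX_\mu)$, whereas by Lemma \ref{lem:propertiesPout} and the subsequent Remark the factor $P''_{\rm out}(Y_{t\mu}\mid S_{t\mu})/P_{\rm out}(Y_{t\mu}\mid S_{t\mu})$ satisfies $\E[P''_{\rm out}/P_{\rm out}\mid S_{t\mu}]=0$ and $\E[(P''_{\rm out}/P_{\rm out})^2\mid S_{t\mu}]\le C(f)$. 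Mimicking the recentering step of the proof of Lemma \ref{lem:offdiag1}, I can therefore replace $\log\cZ_t$ inside $A_3 - A_{11}^{\rm diag}$ by $\log\cZ_t - \E_{\setminus\ba^*}\log\cZ_t$ at no cost. Setting $f_n:=\log\cZ_t/n$ and $R:=\tfrac{1}{2}\sum_\mu (P''_{\rm out}/P_{\rm out})[\epsilon-g_\mu]$, two successive applications of Cauchy--Schwarz (first w.r.t.\ $\E_{\setminus\ba^*}$, then w.r.t.\ $\E_{\ba^*}$) combined with Theorem \ref{th:Z_concentr} yield
\begin{align*}
|A_3-A_{11}^{\rm diag}|\;\le\;\sqrt{\E_{\ba^*}\mathbb{V}_{\setminus\ba^*}[f_n]\cdot \E R^2}\;=\;O\big(\sqrt{(1/d+1/n)\cdot \E R^2}\big).
\end{align*}

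Next I would bound $\E R^2$. Expanding the square produces a double sum over $\mu,\nu$. For $\mu\neq\nu$, the labels $Y_{t\mu}$ and $Y_{t\nu}$ are conditionally independent given everything else, and each factor $P''_{\rm out}/P_{\rm out}$ then integrates to zero against $P_{\rm out}$: these cross terms vanish. The remaining diagonal contribution is controlled by \eqref{ineq:P_out_xx}, giving $\E R^2 \le (C(f)/4)\sum_\mu \E(\epsilon-g_\mu)^2$. This mirrors the structure of the proof of Lemma \ref{lem:offdiag1}, except that it is now the off-diagonal $\mu\neq\nu$ part that vanishes rather than being the costly term.

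It remains to show $\E(\epsilon-g_\mu)^2 = O(1/p + 1/d)$. I would decompose $g_\mu - \epsilon = (g_\mu - \E_{\bW^*\mid\bX_\mu} g_\mu) + (\E_{\bW^*\mid\bX_\mu} g_\mu - \epsilon)$. Conditional on $\bX_\mu$, the $\alpha_{\mu i}$ are i.i.d.\ centered Gaussians, so the first piece is a normalized sum of $p$ i.i.d.\ centered random variables whose second moment is uniformly bounded thanks to the Lipschitzness of $\varphi$, contributing $O(1/p)$. For the bias, Stein's identity gives $\E_{\bW^*}[\alpha_{\mu i}\varphi(\alpha_{\mu i})] = (\Vert\bX_\mu\Vert^2/d)\,\E_{\bW^*}\varphi'(\alpha_{\mu i})$, and combining with \eqref{approxphiprime}--\eqref{approxphi_square} of Lemma \ref{APPROX_LEMMA} reduces $|\E_{\bW^*\mid\bX_\mu} g_\mu - \epsilon|$ to $O(|\Vert\bX_\mu\Vert^2/d - 1|(1 + \Vert\bX_\mu\Vert^2/d))$, whose squared expectation is $O(1/d)$ by Gaussian concentration of $\Vert\bX_\mu\Vert^2/d$. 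Summing over $\mu$ yields $\E R^2 = O(n/p + n/d)$, and plugging back produces $|A_3 - A_{11}^{\rm diag}| = O(\sqrt{(1 + n/d)(1/p + 1/d)})$, a fortiori the announced bound. The hard part is precisely this bias estimate: the natural candidate \eqref{approxphi_phi} of Lemma \ref{APPROX_LEMMA} is not directly usable for $\mu=\nu$ because its remainders degenerate to $O(1)$ on the diagonal, so one must first invoke Stein's identity on each $\alpha_{\mu i}$ and only then apply \eqref{approxphiprime}. Morally, this is the diagonal counterpart of the Gaussian equivalence principle: the $\epsilon$ piece of $A_3$ and the $(\Vert\varphi\Vert^2 - \rho\boldsymbol{\alpha}^\intercal\varphi)/p$ piece of $A_{11}^{\rm diag}$ cancel to leading order precisely because Stein links $\rho=\E\varphi'$ to $\E[Z\varphi(Z)]$, which is the key identity underlying the GEP.
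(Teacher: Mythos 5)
Your argument is correct and follows the paper's proof essentially step for step: the same recentering of $\log\cZ_t$ via the zero-mean property of $P''_{\rm out}/P_{\rm out}$ conditional on $S_{t\mu}$, the same Cauchy--Schwarz plus Theorem \ref{th:Z_concentr} combination, the same observation that only the diagonal $\mu=\nu$ terms survive in the expanded square, and the same use of Stein's identity together with Lemma \ref{APPROX_LEMMA} to handle $\boldsymbol{\alpha}_\mu^{\intercal}\varphi(\boldsymbol{\alpha}_\mu)$ (your remark that \eqref{approxphi_phi} degenerates on the diagonal is exactly why the paper proceeds this way). The only divergence is the final moment estimate: the paper computes $\E g_\mu$ and $\E g_\mu^2$ separately and picks up $O(d^{-1/2})$ from $\E\vert\Vert\bX_\mu\Vert^2/d-1\vert$, whereas your conditional variance--bias decomposition squares that quantity and gives the marginally sharper $\E(\epsilon-g_\mu)^2=O(p^{-1}+d^{-1})$, which a fortiori yields the stated bound.
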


\begin{proof}
    Define
    \begin{align}
        C:=\frac{1}{n}\E_{\setminus\ba^*}\log \cZ_t\sum_{\mu=1}^{n}\frac{P^{\prime\prime}_{\rm out}(Y_{t\mu}\mid S_{t\mu})}{P_{\rm out}(Y_{t\mu}\mid S_{t\mu})}\Big[\epsilon-\frac{\Vert\varphi(\boldsymbol{\alpha}_\mu)\Vert^2-\rho\boldsymbol{\alpha}_\mu^{\intercal}\varphi(\boldsymbol{\alpha}_\mu)}{p}\Big]\,.
    \end{align}
    First, thanks to Lemma \ref{lem:propertiesPout}
    \begin{align}
        \begin{split}\label{eq:diag_obs_second}
            \E_{\setminus\ba^*}\Big[\frac{P^{\prime\prime}_{\rm out}(Y_{t\mu}\mid S_{t\mu})}{P_{\rm out}(Y_{t\mu}\mid S_{t\mu})}\mid \bW^*,\bv^*,\bX_\mu,\xi^*_\mu\Big]=0\,,
        \end{split}
    \end{align}
    and this allows us to center the $f_n=\log \cZ_t/n$ with its mean without changing the value of $C$. After using Cauchy-Schwartz we have
    \begin{align}
        \begin{split}
            C^2&\leq\mathbb{V}_{\setminus\ba^*}[f_n]\sum_{\mu,\nu=1}^{n}\E_{\setminus\ba^*}\Big[\E_{\setminus\ba^*}\Big[
            \frac{P^{\prime\prime}_{\rm out}(Y_{t\mu}\mid S_{t\mu})}{P_{\rm out}(Y_{t\mu}\mid S_{t\mu})}
            \frac{P^{\prime\prime}_{\rm out}(Y_{t\nu}\mid S_{t\nu})}{P_{\rm out}(Y_{t\nu}\mid S_{t\nu})}
            \mid  \bW^*,\bv^*,\bX_\mu,\bX_\nu,\xi^*_\mu,\xi^*_\nu\Big]\\
            &\qquad \times\Big(\epsilon-\frac{\Vert\varphi(\boldsymbol{\alpha}_\mu)\Vert^2-\rho\boldsymbol{\alpha}_\mu^\intercal\varphi(\boldsymbol{\alpha}_\mu)}{p}\Big)\Big(\epsilon-\frac{\Vert\varphi(\boldsymbol{\alpha}_\nu)\Vert^2-\rho\boldsymbol{\alpha}_\nu^\intercal \varphi(\boldsymbol{\alpha}_\nu)}{p}\Big)
            \Big] \,.
        \end{split}
    \end{align}
    Thanks to the observation in \eqref{eq:diag_obs_second}, only the diagonal terms $\mu=\nu$ will survive in the double sum on the r.h.s. of the previous inequality. Furthermore recall \eqref{ineq:P_out_xx}. Hence the bound on $C^2$ becomes
    \begin{align}
        \begin{split}
            C^2&\leq\mathbb{V}_{\setminus\ba^*}[f_n]C(f)n\E_{\bX_1,\bW^*}
            \Big(\epsilon-\frac{\Vert\varphi(\boldsymbol{\alpha}_1)\Vert^2-\rho\boldsymbol{\alpha}_1^\intercal\varphi(\boldsymbol{\alpha}_1)}{p}\Big)^2 \,.
        \end{split}
    \end{align}
    Following an integration by part and Lemma \ref{APPROX_LEMMA} we have
    \begin{align}
    \begin{split}
         \E_{\bW^*}\alpha_{1 i}\varphi(\alpha_{1 i})&= \E_{\bW^*}\varphi'(\alpha_{1 i})\frac{\Vert\bX_1\Vert^2}{d}= \frac{\Vert\bX_1\Vert^2}{d}\Big(\E_{\mathcal{N}(0,1)}\varphi'+O\Big(\frac{\Vert\bX_1\Vert^2}{d}-1\Big)\Big)\\
        &=\E_{\mathcal{N}(0,1)}\varphi'+O\Big( \frac{\Vert\bX_1\Vert^2}{d}-1\Big)\,.
    \end{split}
    \end{align}
    From this and the approximation Lemma it follows that (letting $\E^2(\cdots)=(\E(\cdots))^2$)
    \begin{align}            \E_{\bX_{1}}\E_{\bW*}\Big[\frac{\Vert\varphi(\boldsymbol{\alpha}_1)\Vert^2-\rho\boldsymbol{\alpha}_1^\intercal\varphi(\boldsymbol{\alpha}_1)}{p}\Big]&=\E_{\mathcal{{N}}(0,1)}\varphi^2-\E^2_{\mathcal{{N}}(0,1)}\varphi'+O\Big(\E_{\bX_1}\Big|\frac{\Vert\bX_1\Vert^2}{d}-1\Big|\Big)\nonumber\\
    &=\epsilon+O(d^{-1/2})\,,\\
\E_{\bX_1}\E_{\bW*}\Big[\frac{\Vert\varphi(\boldsymbol{\alpha}_1)\Vert^2-\rho\boldsymbol{\alpha}_1^\intercal\varphi(\boldsymbol{\alpha}_1)}{p}\Big]^2&=\frac{1}{p}\E_{\bX_1}\E_{\bW^*}\big(
            \varphi^4(\alpha_{11})-2\rho\varphi^3(\alpha_{1 1})\alpha_{1 1}+\rho^2\alpha_{1 1}^2\varphi^2(\alpha_{11})
            \big)\nonumber\\
            &\hspace{-2cm}+\frac{p-1}{p}\E_{\bX_1}\big(\E_{\bW^*}^2\varphi^2(\alpha_{1 1})-2\rho\E_{\bW^*}\varphi^2(\alpha_{1 1})
            \E_{\bW^*}\varphi(\alpha_{11})\alpha_{11}+\rho^2\E_{\bW^*}^2\varphi(\alpha_{11})\alpha_{11}\big)\nonumber \\
&=\epsilon^2+O(p^{-1})+O(d^{-1/2})\,.
    \end{align}
    Hence, we finally have
    \begin{align}
        \begin{split}
            \E_{\bX_\mu,\bW^*}
            \Big(\epsilon-\frac{\Vert\varphi(\boldsymbol{\alpha}_\mu)\Vert^2-\rho\boldsymbol{\alpha}_\mu^\intercal\varphi(\boldsymbol{\alpha}_\mu)}{p}\Big)^2=O(p^{-1})+O(d^{-1/2})\,.
        \end{split}
    \end{align}
    Plugging this and the bound in Theorem \ref{th:Z_concentr} into the inequality for $C^2$ we readily get the statement.
\end{proof}

Now the remaining goal is to prove that $A_2-A_{12}\to0$. Using integration by parts in $A_2$ w.r.t. $\bv^*$ we obtain
\begin{align}
    \begin{split}
        A_2=\frac{\rho^2}{2n}\E_{(t)}\log \cZ_t\sum_{\mu,\nu=1}^{n}U_{\mu\nu}\frac{\bX_\mu^\intercal\bX_\nu }{d}\,.
    \end{split}
\end{align}
Recall also formula \eqref{A12} for $A_{12}$.

\begin{lemma}[$A_{12}-A_2$ term]
    The following asymptotics holds:
    \begin{align}\label{lemmalhs_star}
        A_{12}-A_2=\frac{\rho}{2n}\E_{(t)}\log \cZ_t\sum_{\mu,\nu=1}^{n}U_{\mu\nu}\frac{\bX_\mu^\intercal\bX_\nu }{d}\Big[\frac{\mathbf{a}^{*\intercal}(\mathbf{a}^*\circ\varphi'(\boldsymbol{\alpha}_\nu))}{p}-{\rho}\Big]=O\Big(\sqrt{\Big( 1+\frac nd\Big)\Big(\frac{n}{dp}+\frac{n}{d^{3/2}}\Big)} \Big)\,.
    \end{align}
\end{lemma}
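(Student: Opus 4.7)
The plan is to mirror the proof of Lemma~\ref{lem:offdiag1}. I introduce the shorthand $R_\nu:=\frac{\mathbf{a}^{*\intercal}(\mathbf{a}^*\circ\varphi'(\boldsymbol{\alpha}_\nu))}{p}-\rho$ and $f_n:=\frac{1}{n}\log\cZ_t$. Since Lemma~\ref{lem:propertiesPout} gives $\E_{\setminus\ba^*}[U_{\mu\nu}\mid S_{t\mu},S_{t\nu}]=0$ and the remaining factor $\frac{\bX_\mu^\intercal\bX_\nu}{d}R_\nu$ does not depend on the responses $Y_{t\mu},Y_{t\nu}$, I would first replace $f_n$ inside the expectation defining $A_{12}-A_2$ by its centered version $f_n-\E_{\setminus\ba^*}f_n$, without altering the value. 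Applying Cauchy--Schwartz inside $\E_{\setminus\ba^*}$ and again over $\E_{\ba^*}$ then yields
\begin{align*}
(A_{12}-A_2)^2\le \frac{\rho^2}{4}\,\E_{\ba^*}\mathbb{V}_{\setminus\ba^*}[f_n]\cdot \E\Big(\sum_{\mu,\nu=1}^{n}U_{\mu\nu}\frac{\bX_\mu^\intercal\bX_\nu}{d}R_\nu\Big)^{2},
\end{align*}
and Theorem~\ref{th:Z_concentr} bounds the first factor by $O(1/d+1/n)$, accounting for the $(1+n/d)$ piece of the claimed rate.

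Next I would analyse the algebraic second moment by expanding the square into a quadruple sum over $(\mu,\nu,\lambda,\eta)$. Because the responses $Y_{t\cdot}$ are conditionally independent given the pre-activations, and because $\E[U_{\mu\nu}\mid S_{t\mu},S_{t\nu}]=0$, only configurations in which every response index appears with even multiplicity survive: the fully diagonal $\mu=\nu=\lambda=\eta$ and the paired off-diagonals $\{\mu,\nu\}=\{\lambda,\eta\}$ with $\mu\neq\nu$. For the paired off-diagonals, I would use $\E[(u'_\mu)^2(u'_\nu)^2\mid S_{t\mu},S_{t\nu}]\le C(f)^2$ from Lemma~\ref{lem:propertiesPout} together with the crucial geometric identity $\E[(\bX_\mu^\intercal\bX_\nu/d)^2\mid\bX_\nu]=\|\bX_\nu\|^2/d^2$, which provides the $1/d$ suppression absent from the diagonal. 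Combined with $\E R_\nu^2=O(1/p+1/d)$---obtained by splitting $R_\nu$ into an $\ba^*$-fluctuation (mean zero, variance $\lesssim 1/p$ by independence of the $(a_i^*)^2$) and a $\bW^*,\bX_\nu$-fluctuation controlled through equation~\eqref{approxphiprime} of Lemma~\ref{APPROX_LEMMA} and the concentration of $\|\bX_\nu\|^2/d$---this delivers the $O(n^2/(dp)+n^2/d^2)$ contribution from paired off-diagonals, which is absorbed into the target $O(n^2/(dp)+n^2/d^{3/2})$. The fully diagonal $\mu=\nu=\lambda=\eta$ contribution is treated analogously using $\E[U_{\mu\mu}^2\mid S_{t\mu}]\le C(f)$ and the concentration of $\|\bX_\mu\|^2/d$ around $1$.

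The main obstacle will be to pin down the algebraic second moment at the sharp rate $O(n^2(1/(dp)+1/d^{3/2}))$: unlike in Lemma~\ref{lem:offdiag1} the bracketed concentration $R_\nu$ involves \emph{two} concurrent laws of large numbers---one in the hidden width $p$ from averaging the $(a_i^*)^2$ factors, and one in the input dimension $d$ from the concentration of $\|\bX_\nu\|^2/d$. The parity of $\varphi'$, inherited from $\varphi$ being odd, must be exploited to upgrade the naive estimate $|\E R_\nu|\lesssim 1/\sqrt d$ obtained directly from~\eqref{approxphiprime} to the sharper $O(1/d)$, via a Taylor expansion of $x\mapsto\E\varphi'(\sqrt{x}Z)$ around $x=1$ combined with $\E[\|\bX_\nu\|^2/d-1]=0$. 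Assembling these estimates together with the variance factor $O(1/d+1/n)$ and taking square roots then yields the announced $O\big(\sqrt{(1+n/d)(n/(dp)+n/d^{3/2})}\big)$.
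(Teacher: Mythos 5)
Your proposal is correct and follows essentially the same route as the paper: center $f_n$, apply Cauchy--Schwartz twice, kill the quadruple sum down to paired indices via the conditional centering of $U_{\mu\nu}$ from Lemma \ref{lem:propertiesPout}, extract the $1/d$ from $\E[(\bX_\mu^\intercal\bX_\nu/d)^2\mid\bX_\nu]$, control $R_\nu$ through Lemma \ref{APPROX_LEMMA} and the concentration of $\Vert\ba^*\Vert^2/p$ and $\Vert\bX_\nu\Vert^2/d$, and finish with Theorem \ref{th:Z_concentr}. Your closing concern about upgrading $|\E R_\nu|$ from $O(d^{-1/2})$ to $O(d^{-1})$ via parity is superfluous, since only the second moment $\E R_\nu^2$ enters the Cauchy--Schwartz bound and $\E(\Vert\bX_\nu\Vert^2/d-1)^2=O(1/d)$ already gives $\E R_\nu^2=O(1/p+1/d)$ directly (in fact slightly sharper than the paper's $O(d^{-1/2})$ remainder).
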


\begin{proof}
Conditional on $\ba^*$ define
\begin{align}
    C:=\frac{1}{n}\E_{\setminus\ba^*}\log \cZ_t\sum_{\mu,\nu=1}^{n}U_{\mu\nu}\frac{\bX_\mu^\intercal\bX_\nu }{d}\Big[\frac{\mathbf{a}^{*\intercal}(\mathbf{a}^*\circ\varphi'(\boldsymbol{\alpha}_\nu))}{p}-{\rho}\Big]\,.
\end{align}
As before, thanks to Lemma \ref{lem:propertiesPout}, we can center the random variable $\log \cZ_t$ with its expectation $\E_{\setminus \ba^*}\log \cZ_t$, without affecting $C$. We can thus use Cauchy-Schwartz's inequality, obtaining
\begin{align}
    C^2\leq  \mathbb{V}_{\setminus\ba^*}[f_n]\E_{\setminus\ba^*}\sum_{\mu,\nu=1}^{n}\sum_{\lambda,\eta=1}^{n}U_{\mu\nu}U_{\lambda\eta}\frac{\bX_\mu^\intercal\bX_\nu }{d}\Big[\frac{\mathbf{a}^{*\intercal}(\mathbf{a}^*\circ\varphi'(\boldsymbol{\alpha}_\nu))}{p}-{\rho}\Big]\frac{\bX_\lambda^\intercal\bX_\eta }{d}\Big[\frac{\mathbf{a}^{*\intercal}(\mathbf{a}^*\circ\varphi'(\boldsymbol{\alpha}_\eta))}{p}-{\rho}\Big]
\end{align}
for a given $\ba^*$. Thanks again to Lemma \ref{lem:propertiesPout} the only terms that survive in the above quadruple sum are those with $\mu=\nu=\lambda=\eta$, and $\mu\neq \nu$, $\lambda\neq \eta$ but with $\mu=\lambda$, $\nu=\eta$ or vice versa. Up to constants everything can be summed up as follows:
\begin{align}
    C^2\leq  K \mathbb{V}_{\setminus\ba^*}[f_n]\E_{\setminus\ba^*}\sum_{\mu,\nu=1}^{n}\Big(\frac{\bX_\mu^\intercal\bX_\nu }{d}\Big)^2\Big[\frac{\mathbf{a}^{*\intercal}(\mathbf{a}^*\circ\varphi'(\boldsymbol{\alpha}_\nu))}{p}-{\rho}\Big]^2
\end{align}where we used again Lemma \ref{lem:propertiesPout}. Now, expanding the square and computing the $\bW^*$ average of $\varphi'$ via Lemma \eqref{APPROX_LEMMA} we readily get
\begin{align}
    C^2\leq K' \mathbb{V}_{\setminus\ba^*}[f_n]\sum_{\mu,\nu=1}^{n}\E_{\bX_\mu,\bX_\nu}\Big(\frac{\bX_\mu^\intercal\bX_\nu }{d}\Big)^2\Big[\Big(\frac{\Vert\ba^*\Vert^2}{p }-1\Big)^2+O\Big(\frac{\Vert\bX_\nu\Vert^2}{d}-1\Big) \Big(\frac{\Vert\ba^*\Vert^4}{p^2}+\frac{\Vert\ba^*\Vert^2}{p}\Big)\Big]
\end{align}where $K'$ is a suitable positive constant.
Denoting the double sum by $D$ we have
\begin{align}
    |A_{2}-A_{12}|\leq  K''\E_{\ba^*}\sqrt{\mathbb{V}_{\setminus\ba^*}[f_n] }\sqrt D\leq K''\sqrt{\E_{\ba^*}\mathbb{V}_{\setminus\ba^*}[f_n]\,\E_{\ba^*}D}=O\Big(\sqrt{\Big(\frac 1n+\frac 1d\Big)\Big(\frac{n^2}{dp}+\frac{n^2}{d^{3/2}}\Big)} \Big)\,.
\end{align}
\end{proof}

Putting the results of all the Lemmas in this section together we get that the time derivative of the interpolating free entropy is bounded by
\begin{align}\label{eq:gathering_orders}
    \frac d{dt}{\bar f}_n(t)&=\underbrace{O\Big(\sqrt{\Big(1+\frac{n}{d}\Big)\Big(\frac{n}{p}+\frac{n}{d^{3/2}}\Big)}\Big)}_{A_{11}^{\rm off}}+ \underbrace{O\Big(\sqrt{\Big(1+\frac n d \Big)\Big(\frac1p + \frac1{\sqrt d}\Big)}\Big)}_{A_3-A_{11}^{\rm diag}}+\underbrace{O\Big(\sqrt{\Big(1 +\frac nd\Big)\Big(\frac{n}{dp}+\frac{n}{d^{3/2}}\Big)} \Big)}_{ A_{12}-A_2}\\
    &=O\Big(\sqrt{\Big(1+\frac{n}{d}\Big)\Big(\frac{n}{p}+\frac{n}{d^{3/2}}+\frac1{\sqrt d}\Big)}\Big)\,.
\end{align}
All the bounds in this section are uniform in $t\in[0,1]$. This finishes the proof of Theorem \ref{thm:equivalence}. 

\section{Concentration of the free entropy}\label{sec:concentration}

Here we prove that the free entropy of the interpolating model 
concentrates, i.e., Theorem~\ref{th:Z_concentr}. To simplify the notations we use $C(f,\varphi)$ for a generic non-negative
constant depending only on $f$ and $\varphi$. We recall that the partition function is defined  as 
\begin{align}
    \cZ_t=\int D\mathbf{a} D\mathbf{v}D\mathbf{W}\prod_{\mu=1}^{n}D\boldsymbol{\xi}_{\mu}\exp\Big[\sum_{\mu=1}^{n}\log P_{\rm out}(Y_{t\mu}|s_{t\mu})\Big]\,,
\end{align}
where
\begin{align}
    Y_{t\mu}&=f(S_{t\mu};\bA_\mu)+\sqrt{\Delta}Z_\mu\,,\\
 S_{t\mu}&=\sqrt{1-t}\frac{\mathbf{a}^{*\intercal}}{\sqrt{p}}\varphi\Big(\frac{\mathbf{W}^*\mathbf{X}_\mu}{\sqrt{d}}\Big)+\sqrt{t\rho}\frac{\mathbf{v}^{*\intercal}\mathbf{X}_\mu}{\sqrt{d}}+\sqrt{t\epsilon}\xi_\mu^*\,,\\
    s_{t\mu}&=\sqrt{1-t}\frac{\mathbf{a}^\intercal}{\sqrt{p}}\varphi\Big(\frac{\mathbf{W}\mathbf{X}_\mu}{\sqrt{d}}\Big)+\sqrt{t\rho}\frac{\mathbf{v}^\intercal\mathbf{X}_\mu}{\sqrt{d}}+\sqrt{t\epsilon}\xi_\mu\,.
\end{align}

We  prove  Theorem~\ref{th:Z_concentr} in several steps, first we show concentration   with respect only to $\{Z_\mu\}_{\mu}$, $\{\xi^*_\mu\}_{\mu}$ and $\{\mathbf{X}_\mu\}_{\mu}$ using  classical Poincare-Nash inequality, then with respect to $\{\bA_\mu\}_{\mu}$ using the corollary of Efron-Stein inequality, and then finally with respect to $\mathbf{W}^*$ with $\bv^*$. For this we rewrite
\begin{multline}
     \mathbb{E}\Big(\frac{1}{n}\log \cZ_t-\mathbb{E}_{\bv^*,\mathbf{W}^*,\mathbf{X},\boldsymbol{\xi}^*,\bA,\bZ}\frac{1}{n}\log \cZ_t\Big)^2=\\
    = \mathbb{E}\Big(\frac{1}{n}\log \cZ_t-\mathbb{E}_{\mathbf{X},\boldsymbol{\xi}^*,\bZ}\frac{1}{n}\log \cZ_t\Big)^2
    +\mathbb{E}\Big(\frac{1}{n}\E_{\mathbf{X},\boldsymbol{\xi}^*,\bZ}\log \cZ_t-\E_{\bA}\mathbb{E}_{\mathbf{X},\boldsymbol{\xi}^*,\bZ}\frac{1}{n}\log \cZ_t\Big)^2+\\
    +\mathbb{E}\Big(\mathbb E_{\psi}\frac{1}{n}\log \cZ_t-\mathbb{E}_{\mathbf{v}^*,\mathbf{W}^*}\mathbb E_{\psi}\frac{1}{n}\log \cZ_t\Big)^2
\end{multline}
where by $\mathbb{E}_{\psi}$ we denoted the joint expectation with respect to $\bZ$, $\boldsymbol{\xi}^*$, $\mathbf{X}$, and $\bA$. Also for brevity, in what follows,  by writing $\bZ$, $\boldsymbol{\xi}^*$, $\mathbf{X}$, and $\bA$ we mean the sets $\{Z_\mu\}_{\mu}$, etc.
We recall two classical concentration results, whose proofs can be found in \cite{Boucheron2004}, Chapter 3.
\begin{proposition}[Poincaré-Nash inequality]
  \label{prop:poincare}
	Let $\xi=[\xi_1,\ldots,\xi_K]^\intercal$ be a real Gaussian standard random vector. If $g:\R^K\rightarrow \R$ is a continuously  differentiable function, then
\begin{align}\label{ineq:p-n}
	\V g(\xi)\le\E\|\nabla g(\xi)\|^2\,.
\end{align}
\end{proposition}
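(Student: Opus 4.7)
The plan is to prove the Gaussian Poincaré inequality by the classical two-step strategy: first reduce to dimension one by tensorization of the variance along the independent coordinates, then establish the one-dimensional bound using the Hermite polynomial basis.

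For the tensorization step, I would use the martingale decomposition
\begin{align*}
g(\xi) - \E g(\xi) = \sum_{i=1}^{K} \Delta_i, \qquad \Delta_i := \E[g \mid \xi_1,\ldots,\xi_i] - \E[g\mid \xi_1,\ldots,\xi_{i-1}].
\end{align*}
The $\Delta_i$ are orthogonal in $L^2$, so $\V g = \sum_i \E[\Delta_i^2]$. A direct Jensen estimate (writing $\Delta_i$ as a conditional expectation of $g$ minus its average over $\xi_i$ alone) yields $\E[\Delta_i^2] \leq \E[\V_{\xi_i} g]$, where $\V_{\xi_i}$ denotes variance with respect to $\xi_i$ with the other coordinates fixed. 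This reduces the $K$-dimensional bound to a sum of one-dimensional variances of the same function, each coordinate by coordinate.

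For the one-dimensional case, which is the substantive step, I would expand $g \in L^2(\gamma)$ in the Hermite orthogonal basis $\{H_k\}_{k \geq 0}$ normalized so that $\E[H_k(\xi)^2] = k!$. Writing $g(\xi) = \sum_k c_k H_k(\xi)$ with $c_k = \E[g(\xi) H_k(\xi)]/k!$, Parseval gives $\V g = \sum_{k \geq 1} c_k^2 \, k!$; the derivative relation $H_k' = k H_{k-1}$ together with Parseval applied to $g'$ yields $\E[g'(\xi)^2] = \sum_{k \geq 1} c_k^2 \, k \cdot k!$. Since $k \geq 1$ in every surviving summand, the first series is bounded termwise by the second, which is the 1D Poincaré inequality $\V g(\xi) \leq \E[g'(\xi)^2]$. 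Inserting this bound into each $\E[\V_{\xi_i} g]$ and summing delivers $\V g(\xi) \leq \sum_i \E[(\partial_{\xi_i} g)^2] = \E\|\nabla g(\xi)\|^2$.

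The only technical nuisance, and the one point I expect to require care, is justifying the Hermite expansion and its termwise differentiation when $g$ is merely continuously differentiable rather than Schwartz. The standard remedy is to first establish the inequality on the dense subclass of smooth rapidly decaying (or compactly supported) functions, for which the expansion converges in both $L^2(\gamma)$ and the corresponding weighted gradient norm, and then extend by a continuity argument in the weighted Sobolev space $W^{1,2}(\gamma)$; the inequality is trivial whenever $\E\|\nabla g\|^2 = \infty$, so one never needs more regularity than finite right-hand side.
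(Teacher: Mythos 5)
Your proof is correct. Note, however, that the paper does not actually prove this proposition: it is stated as a recalled classical result with a pointer to \cite{Boucheron2004}, Chapter~3, so there is no in-paper argument to compare against. Your route --- tensorization of the variance via the martingale decomposition plus the one-dimensional Hermite computation --- is one of the standard proofs and is complete as sketched; the bookkeeping ($\V g=\sum_i\E[\Delta_i^2]$, the conditional Jensen step giving $\E[\Delta_i^2]\le\E[\V_{\xi_i}g]$, and the termwise comparison $\sum_{k\ge1}c_k^2\,k!\le\sum_{k\ge1}c_k^2\,k\cdot k!$) all checks out. For what it is worth, the cited reference obtains the same tensorization reduction from the Efron--Stein inequality but handles the remaining one-dimensional (in fact, the Gaussian) step by a central-limit-theorem approximation with Rademacher sums rather than by Hermite expansion; your spectral argument trades that limiting procedure for the density/termwise-differentiation issue you correctly flag at the end, which is resolved in the standard way by first proving the inequality on a dense smooth subclass of $W^{1,2}(\gamma)$ and passing to the limit, the inequality being vacuous when the right-hand side is infinite.
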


\begin{proposition}[Bounded difference]
 \label{prop:efron-stein}   
 Let $\xi=[\xi_1,\ldots,\xi_K]^\intercal$ be a random vector with i.i.d. elements taking values in some space $\mathcal{A}$. If function $g:\mathcal{A}^K\rightarrow \R$ satisfies 
 \begin{align}
     \sup_{1\leq i\leq K}\sup_{x_1,\ldots,x_K, x\prime_i\in \mathcal{A}}|g(x_1,\ldots,x_i,\ldots, x_K)-g(x_1,\ldots,x_i^\prime,\ldots, x_K)|
     \leq C
 \end{align} for some $C>0$, then 
\begin{align}\label{ineq:e-s}
	\V\{g(\xi)\}\le\dfrac{1}{4}KC^2.
\end{align}
\end{proposition}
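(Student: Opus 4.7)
The plan is to prove this classical bounded-difference variance inequality via the Doob martingale decomposition, which is the route that yields the sharp constant $1/4$ (the symmetric Efron--Stein inequality, applied with i.i.d.\ copies, only produces $KC^2/2$, which is off by a factor of two). First I would introduce the filtration $\mathcal{F}_i := \sigma(\xi_1, \ldots, \xi_i)$ with $\mathcal{F}_0$ trivial, and define the Doob martingale $M_i := \E[g(\xi) \mid \mathcal{F}_i]$, so that $M_0 = \E g(\xi)$ and $M_K = g(\xi)$. Then $g(\xi) - \E g(\xi) = \sum_{i=1}^K D_i$ with martingale differences $D_i := M_i - M_{i-1}$. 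By the tower property, $\E[D_i D_j] = 0$ whenever $j < i$, and hence $\V\{g(\xi)\} = \sum_{i=1}^K \E D_i^2$, reducing the problem to a uniform bound on each $\E D_i^2$.

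Second, I would control the conditional range of $M_i$ given $\mathcal{F}_{i-1}$. Using independence of $\xi_i$ from $\mathcal{F}_{i-1}$ together with Fubini, one may write $M_i = h_i(\xi_1, \ldots, \xi_i)$ where
\begin{align*}
h_i(x_1, \ldots, x_i) := \E\, g(x_1, \ldots, x_i, \xi_{i+1}, \ldots, \xi_K).
\end{align*}
Pushing the bounded-difference hypothesis through this outer expectation yields $|h_i(x_1, \ldots, x_{i-1}, x) - h_i(x_1, \ldots, x_{i-1}, x')| \le C$ uniformly in $x, x' \in \mathcal{A}$. Consequently, conditionally on $\mathcal{F}_{i-1}$, the random variable $M_i$ takes values in an interval of length at most $C$, and since $M_{i-1}$ is $\mathcal{F}_{i-1}$-measurable, so does $D_i$ (after a translation by $-M_{i-1}$).

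Third, I invoke Popoviciu's elementary variance-of-bounded-random-variable inequality: any random variable $X$ supported in an interval of length $C$ satisfies $\V X \le C^2/4$ (with equality for the two-point distribution concentrated at the endpoints). Applied to $D_i$ conditionally on $\mathcal{F}_{i-1}$, and recalling that $\E[D_i \mid \mathcal{F}_{i-1}] = 0$ by the martingale property, this gives $\E[D_i^2 \mid \mathcal{F}_{i-1}] \le C^2/4$. Taking expectations and summing over $i$ delivers the desired bound $\V\{g(\xi)\} \le KC^2/4$.

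The only mildly delicate step is the conditional-range bound in the second paragraph, which is where the bounded-difference hypothesis enters in its full strength; everything else is routine martingale bookkeeping together with the one-line Popoviciu estimate. I would note that the i.i.d.\ assumption in the statement is in fact stronger than necessary — mere independence of the $\xi_i$'s (possibly with distinct laws) suffices, since the argument never uses identical distribution — but as stated the result is exactly what is needed later to concentrate $n^{-1}\log\mathcal{Z}_t$ with respect to the readout noise variables $\bA_\mu$, each of which contributes a bounded-difference increment controlled by Assumption~\nameref{Assum:f}.
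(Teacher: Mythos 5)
Your proof is correct. The paper itself does not prove this statement: it is quoted as a classical result with a pointer to Boucheron et al., Chapter 3, where it appears as a corollary of the Efron--Stein inequality in its conditional-variance form, $\V g(\xi)\le\sum_{i=1}^K\E\big[\V^{(i)}g(\xi)\big]$, with $\V^{(i)}$ the variance over $\xi_i$ alone given the other coordinates; combining that with Popoviciu's bound on the conditional range gives exactly $KC^2/4$. Your route instead runs the Doob martingale $M_i=\E[g(\xi)\mid\mathcal{F}_i]$ directly, bounds the conditional range of $M_i$ given $\mathcal{F}_{i-1}$ by $C$ (the Fubini step, which is where independence is genuinely used), and applies Popoviciu conditionally to each increment. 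The two arguments are close cousins --- the standard proof of Efron--Stein itself passes through the same Doob decomposition and Jensen --- but yours bypasses the intermediate inequality, which is a perfectly legitimate and slightly more self-contained derivation. One small correction to your opening parenthetical: it is only the \emph{symmetrized} Efron--Stein bound $\V g\le\frac12\sum_i\E(g-g_i')^2$, estimated naively by $|g-g_i'|\le C$, that yields the weaker constant $KC^2/2$; the conditional-variance form recovers the sharp $1/4$ once Popoviciu is applied, so the martingale detour is not actually needed for the constant. Your closing observations --- that identical distribution is never used (independence suffices) and that the result is invoked in the paper to handle the coordinates $\bA_\mu$ under Assumption \nameref{Assum:f} --- are both accurate.
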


In what follows we will denote $P^y(y|x):=\frac{\partial P_{\rm out}(y|x)}{\partial y}$ and $P^x(y|x):=\frac{\partial P_{\rm out}(y|x)}{\partial x}$.
\begin{lemma}
  There exists a non-negative constant $C(f,\varphi$) such that
  \begin{align*}
      \mathbb{E}\Big(\frac{1}{n}\log \cZ_t-\mathbb{E}_{\mathbf{X},\boldsymbol{\xi}^*,\bZ}\frac{1}{n}\log \cZ_t\Big)^2\leq \dfrac{C(f,\varphi)}{n}\,.
  \end{align*}
\end{lemma}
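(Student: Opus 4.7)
The plan is to apply the Poincaré-Nash inequality (Proposition~\ref{prop:poincare}) to $\frac{1}{n}\log\cZ_t$ viewed as a function of the i.i.d.\ standard Gaussian vector $(\bX,\boldsymbol{\xi}^*,\bZ)$. This bounds the variance by
\[
n^{-2}\,\E\Big[\sum_\mu(\partial_{Z_\mu}\log\cZ_t)^2+\sum_\mu(\partial_{\xi^*_\mu}\log\cZ_t)^2+\sum_{\mu,i}(\partial_{X_{\mu i}}\log\cZ_t)^2\Big],
\]
so it suffices to show the bracket on the right is $O(n)$. The three gradients are obtained by the chain rule: since $Y_{t\mu}$ depends on $Z_\mu$ only through the additive noise while $s_{t\mu}$ does not depend on $Z_\mu$, one has $\partial_{Z_\mu}\log\cZ_t=\sqrt{\Delta}\,\langle\partial_y u_{Y_{t\mu}}(s_{t\mu})\rangle$; similarly $\partial_{\xi^*_\mu}\log\cZ_t=\sqrt{t\epsilon}\,f'(S_{t\mu};\bA_\mu)\langle\partial_y u_{Y_{t\mu}}(s_{t\mu})\rangle$. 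The input derivative hits both the student side and the data, giving $\partial_{X_{\mu i}}\log\cZ_t = \langle u'_{Y_{t\mu}}(s_{t\mu})\,\partial_{X_{\mu i}}s_{t\mu}\rangle + f'(S_{t\mu};\bA_\mu)\,\partial_{X_{\mu i}}S_{t\mu}\,\langle\partial_y u_{Y_{t\mu}}(s_{t\mu})\rangle$, where $\partial_{X_{\mu i}}s_{t\mu}=\sqrt{1-t}(pd)^{-1/2}\sum_j a_j W_{ji}\varphi'(\alpha_{\mu j})+\sqrt{t}\,\rho\,v_i d^{-1/2}$ and analogously for $S_{t\mu}$.

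The $Z_\mu$ and $\xi^*_\mu$ contributions are easy: Jensen gives $\langle\partial_y u\rangle^2\le\langle(\partial_y u)^2\rangle$, the Nishimori identity transfers the bracket to $\E(\partial_y u_{Y_{t\mu}}(S_{t\mu}))^2$, and this conditional Fisher information is bounded via the representation $\partial_y u_y(x)=-\Delta^{-1}\E_A[(y-f(x;A))\mid y,x]$ combined with \nameref{Assum:f} and $\E Y_{t\mu}^2<\infty$. Using also $|f'|\le\|f'\|_\infty$, this yields $\sum_\mu\E[(\partial_{Z_\mu}\log\cZ_t)^2+(\partial_{\xi^*_\mu}\log\cZ_t)^2]\le n\,C(f)$.

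The $X_{\mu i}$ term is the delicate one. I would square using $(a+b)^2\le 2a^2+2b^2$, apply Cauchy-Schwarz inside the bracket, and sum over $i$ to get
\[
\sum_i(\partial_{X_{\mu i}}\log\cZ_t)^2\le 2\langle (u'_{Y_{t\mu}}(s_{t\mu}))^2\rangle\,\langle\|\nabla_{X_\mu}s_{t\mu}\|^2\rangle+2\|f'\|_\infty^2\,\langle\partial_y u\rangle^2\,\|\nabla_{X_\mu}S_{t\mu}\|^2.
\]
The key non-trivial estimate is $\E\|\nabla_{X_\mu}S_{t\mu}\|^2=O(1)$ and $\E\langle\|\nabla_{X_\mu}s_{t\mu}\|^2\rangle=O(1)$: expanding the squared sum in the teacher version, the off-diagonal terms $j\ne k$ vanish because the $a_j^*$ are centred and independent of $\bW^*$, while the diagonal is bounded using $|\varphi'|\le c$ from \nameref{Assum:phi}, leaving $\E\sum_i(\sum_j a_j^* W_{ji}^*\varphi'(\alpha_{\mu j}^*))^2\le c^2 pd$ and cancelling the $(pd)^{-1}$ prefactor; the student version follows by Nishimori. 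A further Cauchy-Schwarz in the outer expectation decouples the two bracketed factors, using Lemma~\ref{lem:propertiesPout} (extended to fourth moments, which still follow from \nameref{Assum:phi}--\nameref{Assum:f}), and yields $\sum_{\mu,i}\E(\partial_{X_{\mu i}}\log\cZ_t)^2\le n\,C(f,\varphi)$.

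The main obstacle is precisely this $X_{\mu i}$ term: there are $nd$ summands and a naive bound would give $O(nd)$. Only the $d^{-1/2}$ scaling of each $\partial_{X_{\mu i}}s_{t\mu}$, combined with the cancellation of cross-terms coming from the independence and zero mean of the $a_j$'s, compresses the sum back to $O(n)$. Gathering the three contributions and dividing by $n^2$ delivers the announced $C(f,\varphi)/n$ bound.
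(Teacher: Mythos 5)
Your proof is correct and follows essentially the same route as the paper: Poincaré--Nash in $(\bX,\boldsymbol{\xi}^*,\bZ)$, the same chain-rule decomposition of the gradient, Jensen plus the Nishimori identity to move the brackets onto the teacher side, the pointwise bound on the ratios of derivatives of $P_{\rm out}$, and the crucial cancellation of the $j\neq k$ cross-terms in $\E\Vert\ba^*\circ\varphi'(\bW^*\bX_\mu/\sqrt d)\,\bW^*\Vert^2$ that turns the naive $O(nd)$ into $O(n)$. The only cosmetic difference is that you decouple $\langle (u')^2\rangle$ from $\langle\Vert\nabla_{\bX_\mu}s_{t\mu}\Vert^2\rangle$ by an outer Cauchy--Schwarz (hence needing fourth moments of the norm, which you correctly note are still controlled), whereas the paper exploits the independence of $Z_\mu$ from the weights to factorize directly and get away with second moments.
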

\begin{proof}
    Since $\xi_{\mu}^*$, $Z_\mu$ and all elements of vectors $\mathbf{X}_{\mu}$ are jointly independent for all $\mu$,  we have thanks to Proposition~\ref{prop:poincare}
    \begin{multline*}
        \E\V_{\mathbf{X},\boldsymbol{\xi}^*,\bZ}\Big(\frac{1}{n}\log \cZ_t\Big)\leq \frac{1}{n^2}\E\|\nabla \log \cZ_t\|^2\\
        =\frac{1}{n^2}\sum_{\mu=1}^{n}\E\Big(\frac{\partial \log \cZ_t}{\partial\xi^*_{\mu}}\Big)^2
        +\frac{1}{n^2}\sum_{\mu=1}^{n}\sum_{i=1}^{d}\E\Big(\frac{\partial \log \cZ_t}{\partial X^i_{\mu}}\Big)^2
        +\frac{1}{n^2}\sum_{\mu=1}^{n}\E\Big(\frac{\partial \log \mathcal{Z}_t}{\partial Z_{\mu}}\Big)^2=:I_1+I_2+I_3\,.
    \end{multline*}
For the sake of brevity we drop index $\rm out$ and write $P_{\mu}=P_{\rm out}(Y_{t,\mu}|s_{t,\mu})$ in what follows.  Gibbs brackets $\langle\cdot\rangle$ are defined as  in (\ref{def:gibbs_brack}).
After taking derivative in the first term we obtain
\begin{align*}
    \Big|\frac{\partial \log \cZ_t}{\partial\xi^*_{\mu}}\Big|=\Big|\Big\langle\frac{P_{\mu}^y}{P_{\mu}}\Big\rangle f^\prime(S_{t,\mu};\bA_\mu)\sqrt{t\epsilon}\Big|
\leq c\sqrt{t\epsilon}C(f)(|Z_\mu|^2+1)\,.
\end{align*}
    Last inequality is due to boundedness of $f^\prime$  and Lemma~\ref{lem:bound_P_out_der}. One can see that the only randomness left is in $Z_\mu$. Since it is Gaussian, the average of polynomial is bounded by some uniform constant $c$. We obtained that each term in $I_1$ is bounded by constant, the number of terms is $n$, from this it follows immediately that $I_1\leq C(f)/n$.
    
The second type of partial derivative will give us
\begin{multline*}
    \frac{\partial \log \cZ_t}{\partial X^i_{\mu}}=\Big\langle\frac{P_{\mu}^y}{P_{\mu}}\Big\rangle f^\prime(S_{t,\mu};\bA_\mu)\Big(\sqrt{1-t}\frac{\big(\mathbf{a}^*\circ\varphi^\prime(\frac{\mathbf{W}^*\mathbf{X}_{\mu}}{\sqrt{d}})\mathbf{W}^*\big)_i}{\sqrt{pd}}+\sqrt{t\rho}\frac{v^*_i}{\sqrt{d}}\Big)\\
    +\Big\langle\frac{P_{\mu}^x}{P_{\mu}} \frac{\big(\mathbf{a}\circ\varphi^\prime(\frac{\mathbf{W}\mathbf{X}_{\mu}}{\sqrt{d}})\mathbf{W}\big)_i}{\sqrt{pd}}\Big\rangle \sqrt{1-t}
    +\Big\langle\frac{P_{\mu}^x}{P_{\mu}} \frac{v_i}{\sqrt{d}}\Big\rangle \sqrt{t\rho}\,.
\end{multline*}
   Plugging this into $I_2$ and using the simple inequality $\E(a+b)^2\leq2(\E a^2+\E b^2)$ in order to square each term of the r.h.s. separately, we notice that it appeared the terms  $$\E\Big\langle\frac{P_{\mu}^x}{P_{\mu}} \frac{\big(\mathbf{a}\circ\varphi^\prime(\frac{\mathbf{W}\mathbf{X}_{\mu}}{\sqrt{d}})\mathbf{W}\big)_i}{\sqrt{pd}}\Big\rangle^2\,, \qquad \E   \Big\langle\frac{P_{\mu}^x}{P_{\mu}} \frac{v_i}{\sqrt{d}}\Big\rangle^2\,,$$ which depend only on $Y_{t\mu}, S_{t\mu}$ and $s_{t\mu}$. This allow us to use Nishimori identity by removing the brackets and adding $*$ to $\ba$, $\bW$ and $\bv$. Then evaluating each ratio with $P_{\rm out}$ using Lemma~\ref{lem:bound_P_out_der} we get 
    \begin{align}
        I_2\leq \frac{C(f) (1-t)}{n^2}\sum_{\mu=1}^{n}\E\Big((|Z_\mu|^2+1)^2\frac{\|\mathbf{a}^*\circ\varphi^\prime\Big(\frac{\mathbf{W}^*\mathbf{X}_{\mu}}{\sqrt{d}}\Big)\mathbf{W}^*\|^2}{pd}\Big)
        +\dfrac{C(f)t\rho}{n}\E\Big((|Z_\mu|^2+1)^2\dfrac{\|\bv^*\|^2}{d}\Big)\,.
    \end{align}
     after what we notice that factors $(|Z_\mu|^2+1)^2$ are independent of others and its expectation can be bounded with positive constant. In the end we obtain
        \begin{align}
        I_2\leq \frac{C( f)}{n^2}\sum_{\mu=1}^{n}\E\frac{\|\mathbf{a}^*\circ\varphi^\prime\big(\frac{\mathbf{W}^*\mathbf{X}_{\mu}}{\sqrt{d}}\big)\mathbf{W}^*\|^2}{pd}
        +\dfrac{C( f)}{n}\,.
    \end{align}
   To finish the proof we notice that when we expand the norm appearing above all non zero terms will contain only squares, e.g., $\E(a^*_i\varphi^\prime_iW^*_{ij})^2$,  and so be positive. This gives as opportunity to bound $\varphi^\prime$ with $C(\varphi)$ in each term and calculate its expectation which is simply $C(\varphi)$. The number of such terms is exactly $pd$, this gives us simple bound 
  $I_2\leq C(f,\varphi)/n$.
 
  Finally, the derivatives with respect to $Z_\mu$ are of the form
  \begin{align}
 \frac{\partial \log \cZ_t}{\partial Z_{\mu}}=\sqrt{\Delta}\Big\langle\frac{P^y_\mu}{P_\mu}\Big\rangle\,.
  \end{align}
  Similarly to what done above we bound $I_3$ with the help of Lemma~\ref{lem:bound_P_out_der}
  \begin{align}
      I_3\leq \frac{C(f)}{n}\,.
  \end{align}
  
\end{proof}

Next step would be to prove the concentration of function $\E_{\mathbf{X},\bZ,\boldsymbol{\xi}^*}\log \cZ_t/n$ with respect to  $\bA$ using Proposition~\ref{prop:efron-stein} while keeping $\mathbf{W}^*$ and $\mathbf{v}^*$ fixed.

\begin{lemma}
      There exists a constant $C(f,\varphi)>0$ such that 
    \begin{align}
        \E\Big(\frac{1}{n}\E_{\mathbf{X},\bZ,\boldsymbol{\xi}^*}\log \cZ_t-\E_{\bA}\frac{1}{n}\E_{\mathbf{X},\bZ,\boldsymbol{\xi}^*}\log \cZ_t\Big)^2\leq \frac{C(f,\varphi)}{n}\,.
    \end{align}  
\end{lemma}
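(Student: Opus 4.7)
The strategy is to apply the bounded difference inequality, Proposition~\ref{prop:efron-stein}, to the function $F(\bA) := \E_{\mathbf{X},\bZ,\boldsymbol{\xi}^*} \log \cZ_t$ viewed as a function of the $n$ i.i.d.\ coordinates $\bA = (\bA_1,\dots,\bA_n) \in (\R^k)^n$, while keeping $\bW^*$ and $\bv^*$ fixed. Once a uniform bound $\sup_{\bA,\bA'_\mu} |F(\bA) - F(\bA^{(\mu)})| \leq C(f,\Delta)$ is established (where $\bA^{(\mu)}$ denotes the vector obtained by replacing the $\mu$-th coordinate by $\bA'_\mu$), Proposition~\ref{prop:efron-stein} applied to $F/n$ immediately yields $\V_{\bA}(F/n) \leq C(f,\Delta)^2/(4n)$, and taking the outer $\E_{\bW^*,\bv^*}$ expectation, under which the bound is uniform, closes the argument.

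To obtain the bounded difference I note that replacing $\bA_\mu$ by $\bA'_\mu$ modifies only the $\mu$-th data point, namely $Y_{t\mu} \mapsto Y'_{t\mu} := f(S_{t\mu};\bA'_\mu) + \sqrt{\Delta}\, Z_\mu$, and by Assumption~(\nameref{Assum:f}) $|Y'_{t\mu}-Y_{t\mu}| \leq 2\|f\|_\infty$ almost surely. Interpolating $Y_{t\mu}(u) := (1-u)Y_{t\mu} + u Y'_{t\mu}$ and differentiating the log-partition function in $u$ gives
\begin{align*}
    \log \cZ_t - \log \cZ_t^{(\mu)} = \int_0^1 (Y_{t\mu} - Y'_{t\mu}) \Big\langle \frac{P^y_{\rm out}(Y_{t\mu}(u)\mid s_{t\mu})}{P_{\rm out}(Y_{t\mu}(u)\mid s_{t\mu})}\Big\rangle_u \d u,
\end{align*}
where $\langle \cdot \rangle_u$ is the Gibbs average with $Y_{t\mu}$ replaced by $Y_{t\mu}(u)$ in the $\mu$-th factor. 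From the explicit form \eqref{eq:Pout} one checks that $|P^y_{\rm out}(y\mid x)/P_{\rm out}(y\mid x)| \leq (|y|+\|f\|_\infty)/\Delta$ since this ratio is a weighted average of $-(y-f(x;\bA))/\Delta$ against a probability measure on $\bA$. Combining these inputs,
\begin{align*}
    |\log \cZ_t - \log \cZ_t^{(\mu)}| \leq \frac{2\|f\|_\infty}{\Delta}\,\big(|Y_{t\mu}|+|Y'_{t\mu}| + \|f\|_\infty\big).
\end{align*}

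The main obstacle is that this pathwise estimate is \emph{not} uniform in $\bA,\bA'_\mu$ because of the $|Z_\mu|$ contributions hidden in $|Y_{t\mu}|$ and $|Y'_{t\mu}|$; the ratio $P^y_{\rm out}/P_{\rm out}$ is only polynomially bounded in $y$. The resolution is to use Jensen's inequality $|F(\bA) - F(\bA^{(\mu)})| \leq \E_{\mathbf{X},\bZ,\boldsymbol{\xi}^*}|\log\cZ_t - \log\cZ_t^{(\mu)}|$ and to perform the $Z_\mu$-average \emph{before} taking the supremum over the $\bA$-coordinates: since $|f(S_{t\mu};\cdot)| \leq \|f\|_\infty$ holds uniformly in all its arguments and $\E|Z_\mu| = \sqrt{2/\pi}$, one obtains $\E_{\bZ}|Y_{t\mu}|, \E_{\bZ}|Y'_{t\mu}| \leq \|f\|_\infty + \sqrt{2\Delta/\pi}$ uniformly in $\mathbf{X}, \boldsymbol{\xi}^*, \bW^*, \bv^*$ and in the values of $\bA,\bA'_\mu$. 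Hence $|F(\bA) - F(\bA^{(\mu)})| \leq C(f,\Delta)$ uniformly, and feeding this into Proposition~\ref{prop:efron-stein} yields $\E_{\bW^*,\bv^*}\V_{\bA}(F/n) \leq C(f,\Delta)^2/(4n)$, as claimed.
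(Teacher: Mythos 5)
Your proposal is correct and follows essentially the same route as the paper: apply the bounded-difference inequality (Proposition~\ref{prop:efron-stein}) coordinate-wise in $\bA$, bound the single-coordinate change of $\log\cZ_t$ using the boundedness of $f$, and crucially take the $Z_\mu$-expectation (via Jensen) before the supremum so that the polynomial-in-$|Z_\mu|$ factors are absorbed into a uniform constant. The only cosmetic difference is that you control the one-coordinate difference by interpolating $Y_{t\mu}(u)$ and integrating the ratio $P^y_{\rm out}/P_{\rm out}$, whereas the paper uses the standard convexity sandwich $\langle H-H'\rangle_{H}\le \log\cZ_t-\log\cZ_t'\le\langle H-H'\rangle_{H'}$; both yield the same $O(1)$ bounded difference and hence the $C(f,\varphi)/n$ variance bound.
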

\begin{proof}
    We consider $h(\bA)=\E_{\mathbf{X},\bZ,\boldsymbol{\xi}^*}\log \cZ_t/n$ a function of all the elements $\bA_{\mu,i}$ of $\bA_\mu$ for $1\le i\le k$ and $1\leq\mu\leq n$. We  denote by $\bA^\prime$ a vector such that $\bA^\prime_{\mu,i}=\bA_{\mu,i}$ for $\mu\neq\nu$, $i\neq j$ and $\bA^\prime_{\nu,j}$ is a random variable with distribution $P_A$, independent of all others.
    According to Proposition~\ref{prop:efron-stein} it is sufficient to prove that 
    \begin{align}\label{ineq:bound_dif_h}
        |h(\bA^\prime)-h(\bA)|<\frac{C(f,\varphi)}n\,.
    \end{align}
    If we denote by $H$ (and $H^\prime$) the Hamiltonian corresponding to $\cZ_t$ (and $\cZ_t$ with $\bA^\prime$) one can see that
    \begin{align}
        h(\bA^\prime)-h(\bA)
        =\frac{1}{n}\E_{\mathbf{X},\bZ,\boldsymbol{\xi}^*}\log \Big\langle e^{H-H^\prime}\Big\rangle_{H}
        \geq \frac{1}{n}\E_{\mathbf{X},\bZ,\boldsymbol{\xi}^*} \Big\langle H-H^\prime\Big\rangle_{H}\,,
    \end{align}
    the last step being true due to Jensen inequality. On the other hand $h(\bA^\prime)-h(\bA)
                \leq \E_{\mathbf{X},\bZ,\boldsymbol{\xi}^*} \langle H-H^\prime\rangle_{H^\prime}/n$.
  We recall the definition (\ref{eq:Pout}) of $P_{\rm out}(Y_{t\mu}, s_{t\mu}) $ and similarly we obtain
  \begin{align}
     H-H^\prime\geq\frac{1}{2\Delta}\Big\langle (f(S_{t\nu};\bA_\nu^\prime)-f(s_{t\nu};\tilde{\bA})+\sqrt{\Delta}Z_\nu)^2-(f(S_{t\nu};\bA_\nu)-f(s_{t\nu};\tilde{\bA})+\sqrt{\Delta}Z_\nu)^2\Big\rangle_{G^\prime}
\end{align}
and
  \begin{align}
     H-H^\prime\leq\frac{1}{2\Delta}\Big\langle (f(S_{t\nu};\bA_\nu^\prime)-f(s_{t\nu};\tilde{\bA})+\sqrt{\Delta}Z_\nu)^2-(f(S_{t\nu};\bA_\nu)-f(s_{t\nu};\tilde{\bA})+\sqrt{\Delta}Z_\nu)^2\Big\rangle_{G}\,,
\end{align}
where $\langle\cdot\rangle_G$ (or with $G^\prime$) defined as
\begin{align}
   \langle\cdot\rangle_G=\dfrac{\int P_A(d\tilde{\bA})e^{-\frac{1}{2\Delta}(Y_{t\nu}-f(s_{t\nu};\tilde{\bA}))^2}(\cdot)}{\int P_A(d\tilde{\bA})e^{-\frac{1}{2\Delta}(Y_{t\nu}-f(s_{t\nu};\tilde{\bA}))^2}} 
\end{align}
or with $Y^\prime_{t\nu}$ where $\bA_\nu$ is changed to $\bA^\prime_\nu$. Since $f$ is bounded we immediately obtain $|H-H^\prime|\leq C(f)(|Z_\mu|^2+1)$ and (\ref{ineq:bound_dif_h}). Now, due to the Proposition~\ref{prop:efron-stein}, the statement of the Lemma is proved.

\end{proof}

The last part is to prove the concentration 
of function $g=\E_{\psi}\log \cZ_t/n$ with respect to $\mathbf{W}^*,\bv^*$.

\begin{lemma}
    There exists a constant $C(f,\varphi)>0$ such that 
    \begin{align}
        \E(g-\E_{\mathbf{W}^*,\bv^*}g)^2\leq \frac{C(f,\varphi)}{d}\,.
    \end{align}
\end{lemma}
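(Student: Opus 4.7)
The strategy is to apply the Poincar\'e--Nash inequality (Proposition~\ref{prop:poincare}) to $g$ viewed as a smooth function of the standard Gaussian vector $(\bW^*,\bv^*)\in\R^{p\times d}\times\R^d$, yielding
\begin{align*}
\V(g)\leq \sum_{i\leq p,\,j\leq d}\E(\partial_{W^*_{ij}}g)^2+\sum_{j\leq d}\E(\partial_{v^*_j}g)^2.
\end{align*}
Since $g=\E_\psi\log\cZ_t/n$ and differentiation commutes with $\E_\psi$ under our regularity assumptions, Jensen's inequality gives $(\partial g)^2\leq \E_\psi(\partial\log\cZ_t)^2/n^2$ for each partial, reducing the task to bounding $\E\sum_{ij}(\partial_{W^*_{ij}}\log\cZ_t)^2$ and $\E\sum_j(\partial_{v^*_j}\log\cZ_t)^2$.

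Next I would compute the partial derivatives via the chain rule, using that $\cZ_t$ depends on $(\bW^*,\bv^*)$ only through $Y_{t\mu}=f(S_{t\mu};\bA_\mu)+\sqrt{\Delta}Z_\mu$. A direct computation yields
\begin{align*}
\partial_{W^*_{ij}}\log\cZ_t=\sqrt{1-t}\sum_\mu\Big\langle\frac{P^y_\mu}{P_\mu}\Big\rangle f'(S_{t\mu};\bA_\mu)\,\frac{a_i^*\varphi'(\alpha_{\mu i})}{\sqrt p}\,\frac{X_{\mu j}}{\sqrt d},
\end{align*}
with an analogous expression for $\partial_{v^*_j}\log\cZ_t$ in which the prefactor $\sqrt{1-t}\,a_i^*\varphi'(\alpha_{\mu i})/\sqrt p$ is replaced by $\sqrt t\,\rho$. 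Squaring and summing over $(i,j)$ produces bilinear sums over pairs $(\mu,\nu)$ with geometric factors $\sum_i(a_i^*)^2\varphi'(\alpha_{\mu i})\varphi'(\alpha_{\nu i})$ and $\bX_\mu^\intercal\bX_\nu$. The diagonal ($\mu=\nu$) contribution is handled using Assumptions \nameref{Assum:phi}--\nameref{Assum:f}, the polynomial-in-$Z_\mu$ bound on $|P^y_\mu/P_\mu|$ already employed earlier in this section, and the moment identities $\E\|\ba^*\|^2=p$, $\E\|\bX_\mu\|^2=d$; after division by $n^2$ this part delivers a contribution of the desired order.

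The main obstacle is the off-diagonal ($\mu\neq\nu$) contribution, for which a naive Cauchy--Schwarz estimate only yields $C/\sqrt d$ and is therefore insufficient. To sharpen it I would invoke the Nishimori identity from Appendix~\ref{sec:nishiID}: writing $\langle P^y_\mu/P_\mu\rangle\langle P^y_\nu/P_\nu\rangle$ as a two-replica Gibbs bracket and exchanging one replica with the teacher generates the factor $P^y(Y_\mu\mid S^*_\mu)/P(Y_\mu\mid S^*_\mu)$, whose conditional expectation in $Y_\mu$ vanishes by Lemma~\ref{lem:propertiesPout}. The residual dependence of the surviving Gibbs bracket on $Y_\mu$ is treated through a Gaussian integration by parts in $Z_\mu$, producing differentiated brackets that carry an extra $1/\sqrt p$ or $1/\sqrt d$ factor from the chain rule on the posterior weights. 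Combined with the leading-order approximation $\E_{\bW^*}\varphi'(\alpha_{\mu i})\varphi'(\alpha_{\nu i})\approx\rho^2$ from Lemma~\ref{APPROX_LEMMA} and the concentration $\bX_\mu^\intercal\bX_\nu=O(\sqrt d)$ between independent inputs, this suppresses the off-diagonal part below the diagonal one. Summing the diagonal estimate with the refined off-diagonal estimate, and accounting for the $1/n^2$ prefactor together with the $pd$ and $d$ partial derivatives, gives the announced bound $C(f,\varphi)/d$.
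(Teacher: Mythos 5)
Your opening move --- Poincar\'e--Nash in $(\bW^*,\bv^*)$ and the chain-rule formula for $\partial_{W^*_{ij}}\log\cZ_t$ --- matches the paper, but the Jensen step $(\partial_{W^*_{ij}}g)^2\le \E_\psi(\partial_{W^*_{ij}}\log\cZ_t)^2/n^2$ is where the argument breaks. That inequality throws away exactly the cancellation that makes the lemma true. After Jensen, the diagonal $\mu=\nu$ part of $\sum_{i,j}\E(\partial_{W^*_{ij}}\log\cZ_t)^2/n^2$ is of order $\frac{1}{n^2}\sum_\mu \E\big[(\cdots)^2\,\frac{\|\ba^*\circ\varphi'(\boldsymbol\alpha_\mu)\|^2}{p}\,\frac{\|\bX_\mu\|^2}{d}\big]=O(1/n)$, not $O(1/d)$: each of the $n$ summands contributes $O(1)$ once you sum over the $pd$ coordinates, because the factor $X_{\mu j}$, once squared, no longer brings any smallness. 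So your claim that the diagonal part "delivers a contribution of the desired order" is false whenever $n\ll d$, and the lemma is stated unconditionally as $C(f,\varphi)/d$. The off-diagonal repair you sketch (Nishimori swap plus integration by parts in $Z_\mu$) would not cure this, since the problem is already in the diagonal.

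The paper's proof avoids squaring $\partial\log\cZ_t$ altogether. Since $\partial_{W^*_{ij}}g=\frac1n\sum_\mu\E_\psi[\cdots]$ and all $n$ summands have the same value under $\E_\psi$ (the $(\bX_\mu,\bA_\mu,Z_\mu,\xi_\mu)$ are i.i.d.), the $1/n$ cancels the sum and one is left with a \emph{single} term $\E_\psi\big[\langle P^y_\mu/P_\mu\rangle f'(S_{t\mu};\bA_\mu)\frac{a_i^*\varphi'_i X_\mu^j}{\sqrt{pd}}\big]$. The crucial step is then a Gaussian integration by parts with respect to $X_\mu^j$ \emph{inside} $\E_\psi$, before any squaring: the bare factor $X_\mu^j$ is traded for derivatives of $S_{t\mu}$ and $s_{t\mu}$ in $X_\mu^j$, each of which carries an extra $1/\sqrt d$. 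This gives $|\partial_{W^*_{ij}}g|\le \frac{C}{\sqrt d}\,\E_\psi|\frac{a_i^*}{\sqrt p}(\cdots)_j|$ with $(\cdots)_j$ summable to $O(1)$ after normalization, so that $\sum_{i,j}\E|\partial_{W^*_{ij}}g|^2\le C(f,\varphi)/d$ with no $\mu\neq\nu$ cross terms ever appearing. If you want to salvage your route, you must keep $\E_\psi$ outside the square and perform this integration by parts in the inputs; the Jensen/Cauchy--Schwarz shortcut is structurally too lossy here.
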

\begin{proof}
    Due to Poincare-Nash inequality we have 
    \begin{align}\label{ineq:PN_E_psi}
         \E(g-\E_{\mathbf{W}^*,\bv^*}g)^2\leq \sum_{i,j}^{p,d}\E\Big(\frac{\partial g}{\partial W^*_{ij}}\Big)^2
         +\sum_{i}^{d}\E\Big(\frac{\partial g}{\partial v^*_{i}}\Big)^2\,.
\end{align}
Let us first deal with the partial derivatives with respect to $W^*_{ij}$
        \begin{align} 
         \frac{\partial g}{\partial W^*_{ij}}
         =\frac{1}{n}\sum_{\mu}^{n}\E_{\psi}\Big(\Big\langle\frac{P_{\mu}^y}{P_{\mu}}\Big\rangle f^\prime(S_{t\mu},\bA_\mu)\frac{\sqrt{1-t}a_i^*\varphi^\prime_iX_{\mu}^j}{\sqrt{pd}}\Big)\,,
    \end{align}
    where $\varphi^\prime_i=\varphi^\prime(\mathbf{W}^*_i\mathbf{X}_{\mu}/\sqrt{d})$. 
    Before integrating by parts with respect to $X_{\mu}^j$  let us notice that in the sum over $\mu$ all terms are the same since we are taking the expectation over all i.i.d. vectors $\mathbf{X_{\mu}}$, $\bA_\mu$, $Z_\mu$ and $\xi_{\mu}$, it means that we can disregard the sum and just multiply by $n$ directly.
    Then we have
    \begin{multline}
         \frac{\partial g}{\partial W^*_{ij}}
         =\E_{\psi}\Big[\frac{\partial S_{t\mu}}{\partial X^j_{\mu}}\frac{\sqrt{1-t}a_i^*\varphi^\prime_{i}}{\sqrt{pd}}\Big(-\Big\langle\frac{P_{\mu}^y}{P_{\mu}}\Big\rangle^2f^\prime(S_{t\mu};\bA_\mu)^2+\Big\langle\frac{P_{\mu}^{yy}}{P_{\mu}}\Big\rangle f^\prime(S_{t\mu};\bA_\mu)^2+\Big\langle\frac{P_{\mu}^y}{P_{\mu}}\Big\rangle f^{\prime\prime}(S_{t\mu};\bA_\mu)\Big)\Big]\\
        +\E_{\psi}\Big[\frac{\sqrt{1-t}a_i^*\varphi^\prime_{i}}{\sqrt{pd}}\Big(-\Big\langle\frac{P_{\mu}^y}{P_{\mu}}\Big\rangle \Big\langle\frac{P_{\mu}^{x}}{P_{\mu}}\frac{\partial s_{t\mu}}{\partial X^j_{\mu}}\Big\rangle +\Big\langle\frac{P_{\mu}^{yx}}{P_{\mu}}\frac{\partial s_{t\mu}}{\partial X^j_{\mu}}\Big\rangle \Big)f^\prime(S_{t\mu};\bA_\mu)\Big]
        +\E_{\psi}\Big[\frac{\sqrt{1-t}a_i^*\varphi^{\prime\prime}_{i}W^*_{ij}}{\sqrt{p}d}f^\prime(S_{t\mu};\bA_\mu)\Big]\,.\nonumber
    \end{multline}
    Due to the Lemma~\ref{lem:bound_P_out_der}  absolute values of ratios of $P$'s are bounded with $C(f)(|Z_\mu|^2+1)$. In the first term of expression above one can easily get rid of $Z_\mu$ since  $\E_\psi\langle(1+|Z_\mu|^2)\rangle<C$. On the other hand   derivatives of $f$ and $\varphi$, in view of (\nameref{Assum:f}, remain bounded with non-negative constant $C(f,\varphi)$, so combining all above and plugging into latter expression along with derivatives of $S_{t\mu}$ and $s_{t\mu}$ we obtain
    \begin{multline}
        \Big| \frac{\partial g}{\partial W^*_{ij}}\Big|
        \leq \frac{C(f,\varphi) }{\sqrt{d}}\E_{\psi}\Big|\frac{\big(\mathbf{a}^*\circ\varphi^\prime(\frac{\mathbf{W}^*\mathbf{X}_{\mu}}{\sqrt{d}})\mathbf{W}^*\big)_j}{\sqrt{pd}}\frac{a_i^*}{\sqrt{p}}\Big|
        +\frac{C(f,\varphi) }{\sqrt{d}}\E_{\psi}\Big|\frac{v^*_j}{\sqrt{d}}\frac{a_i^*}{\sqrt{p}}\Big|\\
        +\frac{C(f,\varphi) }{\sqrt{d}}\E_{\psi}\Big|(|Z_\mu|^2+1)\frac{a_i^*}{\sqrt{p}}\Big\langle\frac{\big(\mathbf{a}\circ\varphi^\prime(\frac{\mathbf{W}\mathbf{X}_{\mu}}{\sqrt{d}})\mathbf{W}\big)_j}{\sqrt{pd}}\Big\rangle\Big|
        +\frac{C(f,\varphi) }{\sqrt{d}}\E_{\psi}\Big|(|Z_\mu|^2+1)\frac{a_i^*}{\sqrt{p}}\Big\langle\frac{v_j}{\sqrt{d}}\Big\rangle\Big|
        +\frac{C(f,\varphi)}{\sqrt{d}}\E_{\psi}\Big|\frac{a_i^*W^*_{ij}}{\sqrt{pd}}\Big|\,.\nonumber
    \end{multline}
After  using repeatedly $\E(a+b)^2\leq 2\E a^2+2\E b^2$ along with Jensen inequality one can show
\begin{multline}\label{ineq:sum_part_W}
    \sum_{i,j}^{p,d}\E \Big| \frac{\partial g}{\partial W^*_{ij}}\Big|^2
        \leq \frac{C(f,\varphi) }{d}\Big(\E\Big[\frac{\|\mathbf{a}^*\circ\varphi^\prime(\frac{\mathbf{W}^*\mathbf{X}_{\mu}}{\sqrt{d}})\mathbf{W}^*\|^2}{pd}\frac{\|\ba^*\|^2}{p}\Big]\\
        +\E\Big[\frac{(|Z_\mu|^2+1)^2\|\ba^*\|^2}{p}\Big\langle\frac{\|\mathbf{a}\circ\varphi^\prime(\frac{\mathbf{W}\mathbf{X}_{\mu}}{\sqrt{d}})\mathbf{W}\|^2}{pd}\Big\rangle\Big]
        +\E\Big[\frac{(|Z_\mu|^2+1)^2\|\ba^*\|^2}{p}\Big\langle\frac{\|\bv\|^2}{d}\Big\rangle\Big]
        +2\Big)\,.
\end{multline}
Two terms of the form $\E[b\langle c\rangle]$ we bound by using Cauchy-Schwartz inequality, Jensen's inequality and Nishimori identity consecutively:
\begin{align}
    \E[b\langle c\rangle]\leq \E^{1/2}[b^2]\E^{1/2}[\langle c\rangle^2]
    \leq \E^{1/2}[b^2]\E^{1/2}[\langle c^2\rangle]
    \leq \E^{1/2}[b^2]\E^{1/2}[ c^2]\,.
\end{align}
This allows us to rewrite (\ref{ineq:sum_part_W}) as
\begin{align}
    \sum_{i,j}^{p,d}\E \Big| \frac{\partial g}{\partial W^*_{ij}}\Big|^2
        \leq \frac{C(f,\varphi) }{d}\Big(\E^{1/2}\Big[\frac{\|\mathbf{a}^*\circ\varphi^\prime(\frac{\mathbf{W}^*\mathbf{X}_{\mu}}{\sqrt{d}})\mathbf{W}^*\|^4}{p^2d^2}\Big]
        +C\Big)\,.
\end{align}
What is left is  to notice that in $\E\|\mathbf{a}^*\circ\varphi^\prime(\mathbf{W}^*\mathbf{X}_{\mu}/ {\sqrt{d}})\mathbf{W}^*\|^4/(p^2d^2)$ all non zero terms will have only even powers so we can bound $\varphi^\prime$ with a constant in all of them, which gives immediately 
   \begin{align}\label{ineq:final_bound_W}
   \sum_{i,j}^{p,d}\E \Big| \frac{\partial g}{\partial W^*_{ij}}\Big|^2
        \leq \frac{C(f,\varphi) }{d}\,.
\end{align}

Now we consider the partial derivative with respect to $v_i^*$
\begin{align}
    \frac{\partial g}{\partial v^*_i}=\frac{1}{n}\sum_\mu^{n}\E_\psi\Big[\Big\langle\frac{P^y_\mu}{P_\mu}\Big\rangle f^\prime(S_{t,\mu};\bA_\mu)\frac{\sqrt{t\rho} X_\mu^i}{\sqrt{d}}\Big]\,.
\end{align}
As in the case with $W_{ij}^*$ it is necessary to integrate by parts also with respect to $X_\mu^i$ since blind bounds will not give us the desired order. The result will be very similar to the previous calculation just this time we don't have factors $\varphi^\prime_i$ and $a^*_i$. After similar simplification (bounds on ratios of $P$'s, etc.) we obtain
 \begin{multline}
        \Big| \frac{\partial g}{\partial v^*_{i}}\Big|
        \leq \frac{C(f,\varphi) }{\sqrt{d}}\E_{\psi}\Big|\frac{(\mathbf{a}^*\circ\varphi^\prime(\frac{\mathbf{W}^*\mathbf{X}_{\mu}}{\sqrt{d}})\mathbf{W}^*)_i}{\sqrt{pd}}\Big|
        +\frac{C(f,\varphi) }{\sqrt{d}}\E_{\psi}\Big|\frac{v^*_i}{\sqrt{d}}\Big|\\
        +\frac{C(f,\varphi) }{\sqrt{d}}\E_{\psi}\Big|(|Z_\mu|^2+1)\Big\langle\frac{(\mathbf{a}\circ\varphi^\prime(\frac{\mathbf{W}\mathbf{X}_{\mu}}{\sqrt{d}})\mathbf{W})_i}{\sqrt{pd}}\Big\rangle\Big|
        +\frac{C(f,\varphi) }{\sqrt{d}}\E_{\psi}\Big|(|Z_\mu|^2+1)\Big\langle\frac{v_i}{\sqrt{d}}\Big\rangle\Big|\,.
    \end{multline}
Similarly to the previous case it is easy to see that

 \begin{multline}
        \sum_i^{d}\E\Big| \frac{\partial g}{\partial v^*_{i}}\Big|^2
        \leq \frac{C(f,\varphi) }{d}\Big(\E\Big[\frac{\|\mathbf{a}^*\circ\varphi^\prime(\frac{\mathbf{W}^*\mathbf{X}_{\mu}}{\sqrt{d}})\mathbf{W}^*\|^2}{pd}\Big]
        +\E\Big[\frac{\|\bv^*\|^2}{d}\Big]\\
+\E\Big[(|Z_\mu|^2+1)^2\Big\langle\frac{\|\mathbf{a}\circ\varphi^\prime(\frac{\mathbf{W}\mathbf{X}_{\mu}}{\sqrt{d}})\mathbf{W}\|^2}{pd}\Big\rangle\Big]
        +\E\Big[(|Z_\mu|^2+1)^2\Big\langle\frac{\|\bv\|^2}{d}\Big\rangle\Big]\Big)\,.
    \end{multline}
    In order to be able to apply Nishimori identity to the last two terms we have first to  use Cauchy-Shchwarz inequality to separate factors with noise $Z_\mu$ from the Gibbs bracket.  Due to the fact that $\varphi^\prime$ is bounded we obtain  
 \begin{align}
        \sum_i^{d}\E\Big| \frac{\partial g}{\partial v^*_{i}}\Big|^2
        \leq \frac{C(f,\varphi) }{d}\,.
    \end{align}
    This combined with (\ref{ineq:final_bound_W}) finishes the proof. 
\end{proof}

\section{Proof of Proposition \ref{prop:gen_error}}\label{sec:gen_error}
The proof is implied by that of Theorem \ref{thm:equivalence}. We introduce a further set of data $\tilde{\mathcal{D}}_n:=\{(\bX_\nu,\tilde{Y}_{\nu})\}_{n+1\leq\nu\leq n(1+\varepsilon)}$ with responses
\begin{align}\label{eq:sideinfo}
    \tilde Y_{\nu}=\sqrt{\lambda}Y'_{\nu}+\tilde Z'_\nu\,,\quad  Y'_{\nu}\sim P_{\rm out}(\cdot\mid S_{\nu})
\end{align}
where $\lambda\geq 0$ and $n+1\leq\nu\leq n(1+\varepsilon)$ for some $\varepsilon\geq 0$, $\tilde Z_\nu'$ are i.i.d. Gaussian variables independent of everything else, and where $S_\nu$ is defined as in \eqref{7} but for the new inputs, with same teacher as used to generate $\mathcal{D}_n$. We now define a proxy for the Bayes-optimal generalization error given the original and new data:
\begin{align}
    \mathcal{E}_n(\lambda,\varepsilon):=\frac{1}{n\varepsilon}\sum_{\nu=n+1}^{n(1+\varepsilon)}\E\big[( Y'_{\nu}-\E[ Y'_{\nu}\mid \mathcal{D}_n\cup\tilde{\mathcal{D}}_n])^2\big]\,.
\end{align}
This would recover the true definition of generalization if we set $\lambda=0$ in it. The quantity $\mathcal{E}_n(\lambda,\varepsilon)$ can be obtained through the I-MMSE relation \cite{guo2005mutual}
\begin{align}
    \frac{1}{n}\frac{\partial}{\partial\lambda}I_n(\bY';\sqrt{\lambda}\bY'+\bZ'\mid \bY,(\bX_\mu)_{\mu\leq n(1+\varepsilon)})=\frac{\varepsilon}{2}\mathcal{E}_n(\lambda,\varepsilon)\,.
\end{align}
Following the general arguments in \cite{JeanGLM_PNAS}, the mutual information on the l.h.s. is concave in $\lambda$. Moreover the proof of Theorem \ref{thm:equivalence} can be readily extended to take into account the additional side information \eqref{eq:sideinfo}: indeed, the proof is exactly the same as before except that the ``channel'' \eqref{eq:sideinfo} generating the $n\varepsilon$ responses is slightly different than the original one $P_{\rm out}$. The new channel is equivalent to the original one once we rescale the readout $f$ by $\sqrt{\lambda} f$ and the noise variance $\Delta$ as $\lambda \Delta+1$ in \eqref{eq:channel}. This channel still verifies the same hypotheses. One then just need to keep track of the indices with responses generated according to the basic model \eqref{eq:channel} and those from the rescaled channel \eqref{eq:sideinfo}. In particular, it is possible to show the asymptotic equivalence of the quantities $I_n(\bY';\sqrt{\lambda}\bY'+\bZ'\mid \bY,(\bX_\mu)_{\mu\leq n(1+\varepsilon)})/n$ and $I^\circ_n(\bY';\sqrt{\lambda}\bY'+\bZ'\mid \bY,(\bX_\mu)_{\mu\leq n(1+\varepsilon)})/n$, where the second one refers to the equivalent GLM, with $S_\nu^\circ=\rho\bv^{*\intercal}\bX_\nu/{\sqrt{d}}+\sqrt{\epsilon}\xi_\nu^*$. Then by concavity one can use Griffiths' Lemma (see for instance \cite[Lemma IV.6.3]{ellis_book_largedev}) to exchange the derivative with the limit $n\to\infty$ almost everywhere:
\begin{align}
    \lim_{n\to\infty}\frac{1}{n}\frac{\partial}{\partial\lambda}I_n(\bY';\sqrt{\lambda}\bY'+\bZ'\mid \bY,(\bX_\mu)_{\mu\leq n(1+\varepsilon)})=\frac{\partial}{\partial\lambda}\lim_{n\to\infty}\frac{1}{n}I_n^\circ(\bY';\sqrt{\lambda}\bY'+\bZ'\mid \bY,(\bX_\mu)_{\mu\leq n(1+\varepsilon)})\,.
\end{align}
The exchange fails when the model on the r.h.s., i.e. the GLM, presents a phase transition. Then one has to send both $\lambda,\varepsilon\to0$. The authors of \cite{JeanGLM_PNAS} showed that these limits commute with the $n\to\infty$ limit for the GLM. So the r.h.s. yields the optimal generalization error for the GLM proved in \cite{JeanGLM_PNAS} that is therefore shared by the Bayesian two-layer neural network under study.

\appendix

\section{Nishimori identity}\label{sec:nishiID}
The Nishimori identities are a very general set of symmetries arising in inference in the Bayes-optimal setting as a consequence of Bayes rule. They were initially discovered in the context of the gauge theory of spin glasses \cite{nishimori01}, which possess a sub-region of their phase space, called \emph{Nishimori line}, where the most relevant thermodynamic quantities can be exactly computed, and we can generally count on replica symmetry, namely the concentration of order parameters \cite{barbier2022strong}.

To introduce them, consider a generic inference problem where a Bayes-optimal statistician observes $\bY$ that is a random function of some ground truth signal $\bX^*$: $\bY\sim P_{Y|X}(\bX^*)$. Then the following holds:
\begin{proposition}[Nishimori identity]
For any bounded function $f$ of the signal $\bX^*$, the data $\bY$ and of conditionally i.i.d. samples from the posterior $\bx^j\sim P_{X\mid Y}(\,\cdot \mid \bY)$, $j=1,2,\ldots,n$, we have that
\begin{align}
    \EE\langle f(\bY,\bX^*,\bx^2,\ldots,\bx^{n})\rangle=\EE\langle f(\bY,\bx^1,\bx^2,\ldots,\bx^{n})\rangle
\end{align}
where the bracket notation $\langle \,\cdot\,\rangle$ is used for the joint expectation over the posterior samples $(\bx^j)_{j\le n}$, $\EE$ is over the signal $\bX^*$ and data $\bY$.
\end{proposition}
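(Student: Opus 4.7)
The plan is to prove the Nishimori identity by exploiting the Bayes-optimal assumption, which ensures that the joint distribution of the signal and the data coincides with the joint distribution of a posterior sample and the data. The core observation is the elementary Bayes rule identity
\begin{equation}
P(\bX^*, \bY) \;=\; P(\bX^*)\,P_{Y\mid X}(\bY\mid \bX^*) \;=\; P(\bY)\,P_{X\mid Y}(\bX^*\mid \bY),
\end{equation}
which tells us that, conditionally on $\bY$, the ground-truth $\bX^*$ is distributed exactly as a posterior sample $\bx \sim P_{X\mid Y}(\cdot\mid \bY)$. This is the only nontrivial input; the rest of the argument is a bookkeeping exercise.

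Concretely, I would first unwind the left-hand side using the definition of the quenched bracket together with the joint law of $(\bX^*, \bY)$:
\begin{equation}
\EE\langle f(\bY,\bX^*,\bx^2,\ldots,\bx^n)\rangle \;=\; \int P(\bX^*)\,P_{Y\mid X}(\bY\mid \bX^*)\prod_{j=2}^{n}P_{X\mid Y}(\bx^j\mid \bY)\,f(\bY,\bX^*,\bx^2,\ldots,\bx^n)\,d\bY\,d\bX^*\prod_{j=2}^{n}d\bx^j.
\end{equation}
Next, I would apply the Bayes identity to rewrite $P(\bX^*)P_{Y\mid X}(\bY\mid \bX^*) = P(\bY)P_{X\mid Y}(\bX^*\mid \bY)$, and then relabel the dummy variable $\bX^* \to \bx^1$. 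After this relabelling the integrand factorises as $P(\bY)\prod_{j=1}^{n} P_{X\mid Y}(\bx^j\mid \bY)$, which is precisely the density appearing in $\EE\langle f(\bY,\bx^1,\bx^2,\ldots,\bx^n)\rangle$. This yields the claim.

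The only subtlety to mention is the boundedness assumption on $f$, which is needed to legitimize Fubini when swapping expectations, and the fact that $\bx^1, \ldots, \bx^n$ are conditionally i.i.d.\ given $\bY$ so that no additional coupling between replicas is introduced by the substitution. There is no genuine obstacle here: the proof is a direct consequence of Bayes' rule and the definition of the replicated Gibbs measure, and it is precisely this simplicity that makes the Nishimori identity such a robust tool in the subsequent interpolation argument.
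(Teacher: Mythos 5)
Your proof is correct and is exactly the argument the paper invokes: the paper's proof consists of the single remark that the identity ``follows directly from Bayes' rule'' (with a pointer to the literature), and your computation — rewriting $P(\bX^*)P_{Y\mid X}(\bY\mid\bX^*)=P(\bY)P_{X\mid Y}(\bX^*\mid\bY)$ and relabelling $\bX^*\to\bx^1$ so the integrand becomes the replicated posterior density — is precisely that elementary proof written out. Nothing is missing.
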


\begin{proof}
    The proof follows directly from Bayes' rule. An elementary proof can be found in \cite{Lelarge2017FundamentalLO}.
\end{proof}

\section{Proof of Lemma~\ref{lem:propertiesPout}}

Let us start with proving the auxiliary Lemma where we combine all necessary bounds concerning derivatives of $P_{\rm out}(y|x)$. In what below $C(f)$ is a general constant that depends on $f$ and also might depend on $\Delta$. Below, upper indices represent partial derivatices, e.g., $P_{\rm out}^x(y|x)=\partial_x P_{\rm out}(y|x)$ and $P_{\rm out}^{xx}(y|x)=\partial_x\partial_x P_{\rm out}(y|x)$.
\begin{lemma}\label{lem:bound_P_out_der}
Let $y=Y_{t\mu}=f(S_{t\mu};\bA_\mu)+\sqrt{\Delta}Z_\mu$. Under assumption (\nameref{Assum:f} there exists constant $C(f)$ such that
\begin{align}
    \max\Big\{\Big|\dfrac{P_{\rm out}^y(y|x)}{P_{\rm out}(y|x)}\Big|,
    \Big|\dfrac{P_{\rm out}^x(y|x)}{P_{\rm out}(y|x)}\Big|,
    \Big|\dfrac{P_{\rm out}^{yy}(y|x)}{P_{\rm out}(y|x)}\Big|,
    \Big|\dfrac{P_{\rm out}^{yx}(y|x)}{P_{\rm out}(y|x)}\Big|,
    \Big|\dfrac{P_{\rm out}^{xx}(y|x)}{P_{\rm out}(y|x)}\Big|\Big\}<C(f)(|Z_\mu|^2+1)\,.
\end{align}
\end{lemma}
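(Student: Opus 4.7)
The plan is to compute each of the five ratios explicitly by differentiating under the integral sign in the definition
\[
P_{\rm out}(y\mid x)=\int P_A(d\bA)\frac{1}{\sqrt{2\pi\Delta}}\exp\Big(-\frac{1}{2\Delta}(y-f(x;\bA))^2\Big)
\]
and then expressing each ratio $P_{\rm out}^\bullet/P_{\rm out}$ as an expectation with respect to the normalised measure on $\bA$ whose density with respect to $P_A$ is $\exp(-(y-f(x;\bA))^2/(2\Delta))/(\sqrt{2\pi\Delta}P_{\rm out}(y\mid x))$. Denoting this conditional expectation by $\langle\,\cdot\,\rangle_{\bA\mid y,x}$, a direct calculation gives
\[
\frac{P_{\rm out}^y}{P_{\rm out}}=-\frac{1}{\Delta}\langle y-f(x;\bA)\rangle,\qquad \frac{P_{\rm out}^x}{P_{\rm out}}=\frac{1}{\Delta}\langle (y-f(x;\bA))f'(x;\bA)\rangle,
\]
and analogous (slightly longer) expressions for the three second derivatives, each being a conditional expectation of a polynomial of degree at most two in $(y-f(x;\bA))$ whose coefficients are products of $f,f',f''$ evaluated at $(x;\bA)$.

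Next, I would use \nameref{Assum:f} to bound, $P_A$-almost surely and uniformly in $x$, the quantities $|f(x;\bA)|,|f'(x;\bA)|,|f''(x;\bA)|$ by an absolute constant depending only on $f$. This controls all the coefficient factors. For the $(y-f(x;\bA))$ factors, I would use that on the event we care about $y=f(S_{t\mu};\bA_\mu)+\sqrt\Delta Z_\mu$, so
\[
|y-f(x;\bA)|\le 2\|f\|_\infty+\sqrt\Delta\,|Z_\mu|\le C(f)(|Z_\mu|+1).
\]
Pulling these pointwise bounds out of the conditional expectation $\langle\,\cdot\,\rangle_{\bA\mid y,x}$ (which preserves inequalities) yields $|P_{\rm out}^y/P_{\rm out}|,|P_{\rm out}^x/P_{\rm out}|\le C(f)(|Z_\mu|+1)$.

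For the three second derivatives, the same scheme applies but produces at worst a quadratic factor $(y-f(x;\bA))^2\le C(f)(|Z_\mu|+1)^2\le 2C(f)(|Z_\mu|^2+1)$, plus linear and constant terms that are already controlled. Collecting all five bounds into a single constant (taking the maximum) gives the claimed inequality with the prefactor $|Z_\mu|^2+1$, which absorbs both the linear and quadratic contributions.

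There is no real obstacle here: the lemma is essentially a routine but careful exercise in differentiating a Gaussian convolution of a bounded likelihood, followed by the observation that boundedness of $f,f',f''$ together with the explicit form of $y$ converts every occurrence of $y-f(x;\bA)$ into a polynomial in $|Z_\mu|$. The only point that requires mild care is making sure that the differentiation under the integral is justified (dominated convergence with the Gaussian weight as dominating function, which is automatic since $f,f',f''$ are bounded) and that the constants picked up in each of the five ratios are consolidated into one $C(f)$ at the end.
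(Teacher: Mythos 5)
Your proposal is correct and follows essentially the same route as the paper: both rewrite each ratio $P_{\rm out}^\bullet/P_{\rm out}$ as a conditional (Gibbs-type) expectation over $\bA$ of a polynomial of degree at most two in $y-f(x;\bA)$ with coefficients built from $f,f',f''$, and then invoke \nameref{Assum:f} together with $y=f(S_{t\mu};\bA_\mu)+\sqrt{\Delta}Z_\mu$ to bound everything by $C(f)(|Z_\mu|^2+1)$. No gaps.
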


\begin{proof}
    For convenience we recall here the definition of $P_{\rm out}(y|x)$
   \begin{align}
       P_{\rm out}(y\mid x)=\int dP_A(\bA)\frac{1}{\sqrt{2\pi\Delta}}\exp\Big(
    -\frac{1}{2\Delta}(y-f(x;\bA))^2
    \Big)\,.
   \end{align}
   It is easy to see that the ratio of any of these derivatives of $P_{\rm out}$ with $P_{\rm out}$ can be rewritten using an average 
   \begin{align}
     \langle\cdot\rangle_{\bA}:=\frac{\int dP_A(\bA)(\cdot) e^{
    -\frac{1}{2\Delta}(y-f(x;\bA))^2}}{\int dP_A(\bA)e^{
    -\frac{1}{2\Delta}(y-f(x;\bA))^2}} \, .
   \end{align}
    After some algebra we get
   \begin{align}
       \frac{P_{\rm out}^y(y|x)}{P_{\rm out}(y|x)}&=\Big\langle -\frac{1}{\Delta}(y-f(x;\bA))\Big\rangle_{\bA}\,,\\
     \frac{P_{\rm out}^x(y|x)}{P_{\rm out}(y|x)}&=\Big\langle \frac{1}{\Delta}(y-f(x;\bA))f^\prime(x;\bA)\Big\rangle_{\bA}\,,\\
     \frac{P_{\rm out}^{yy}(y|x)}{P_{\rm out}(y|x)}&=\Big\langle \frac{1}{\Delta^2}(y-f(x;\bA))^2\Big\rangle_{\bA}-\frac{1}{\Delta}\,,\\
     \frac{P_{\rm out}^{yx}(y|x)}{P_{\rm out}(y|x)}&=\Big\langle -\frac{1}{\Delta^2}(y-f(x;\bA))^2f^\prime(x;\bA)+\frac{1}{\Delta}f^\prime(x;\bA)\Big\rangle_{\bA}\,,\\
      \frac{P_{\rm out}^{xx}(y|x)}{P_{\rm out}(y|x)}&=\Big\langle \Big(\frac{1}{\Delta^2}(y-f(x;\bA))^2-\frac 1\Delta\Big)f^\prime(x;\bA)^2+\frac{1}{\Delta}(y-f(x;\bA))f^{\prime\prime}(x;\bA)\Big\rangle_{\bA}\,.
   \end{align}
   Since all expressions have a similar form we will treat only the last one, all others can be bounded in the same way. We have
   \begin{align}
       \Big|\frac{P_{\rm out}^{xx}(y|x)}{P_{\rm out}(y|x)}\Big|\le
       \Big\langle \Big(\frac{1}{\Delta^2}(y-f(x;\bA))^2+\frac1\Delta\Big)f^\prime(x;\bA)^2\Big\rangle_{\bA}+\Big\langle\frac{1}{\Delta}\Big|y-f(x;\bA)\Big|\Big|f^{\prime\prime}(x;\bA)\Big|\Big\rangle_{\bA}\,.
   \end{align}
When $y=f(S_{t\mu};\bA_\mu)+\sqrt{\Delta}Z_\mu$, since $f$ is bounded along with its first two derivatives (see \nameref{Assum:f}  we obtain immediately
\begin{align}
    \Big|\frac{P_{\rm out}^{xx}(y|x)}{P_{\rm out}(y|x)}\Big|\le
       C(f)(|Z_\mu|^2+1)\,.
\end{align}
\end{proof}
\begin{remark}
    With such bound one can see that after averaging such ratios (with or without $\langle\cdot\rangle$ as in (\ref{def:gibbs_brack})) with respect to $Z_\mu$, we simply obtain a uniform bound $C(f)$. 
\end{remark}

Now let us return to the proof of Lemma~\ref{lem:propertiesPout}.

\begin{proof}[Proof of Lemma~\ref{lem:propertiesPout}]
    By definition one has $\E[u'_{Y_{t\mu}}(S_{t\mu})\mid S_{t\mu}]=\int dy P^x_{\rm out}(y\mid  S_{t\mu})=0$. For $U_{\mu\nu}$ instead, we first need to realize that $U_{\mu\mu}= P^{xx}_{\rm out}(Y_{t\mu}\mid S_{t\mu})/ P_{\rm out}(Y_{t\mu}\mid S_{t\mu})$ which implies $\E[U_{\mu\mu}\mid S_{t\mu}]=\int dy P^{xx}_{\rm out}(y\mid  S_{t\mu})=0$. Concerning the off-diagonal instead, conditionally on $S_{t\mu},\,S_{t\nu}$ the remaining disorder in the $Y$'s is independent, so for $\mu\neq\nu$ we have $\E[U_{\mu\nu}\mid S_{t\mu},S_{t\nu}]=\int dy P^x_{\rm out}(y\mid  S_{t\mu})\int dy' P^x_{\rm out}(y'\mid  S_{t\nu})=0$. For the boundedness of the derivatives it is sufficient to notice that 
    \begin{align}
        (u'_{Y_{t\mu}}(S_{t\mu}))^2=\Big(\frac{P^{y}_{\rm out}(Y_{t\mu}|S_{t\mu})}{P_{\rm out}(Y_{t\mu}|S_{t\mu})}\Big)^2\le C(f)(|Z_\mu|^4+1)\,,
    \end{align}
    the last inequality being true due to  Lemma~\ref{lem:bound_P_out_der} and the fact that $Y_{t\mu}=f(S_{t\mu};\bA_\mu)+\sqrt{\Delta}Z_\mu$. Now it is immediate that after taking the expectation conditioned on $S_{t\mu}$ we obtain a bound $C(f)$. In order to deal with the last quantity $U^2_{\mu\nu}$ we rewrite
    \begin{align*}
U^2_{\mu\nu}=\Big(\delta_{\mu\nu}\Big(\frac{P^{xx}_{\rm out}(Y_{t\mu}|S_{t\mu})}{P_{\rm out}(Y_{t\mu}|S_{t\mu})}-\Big(\frac{P^x_{\rm out}(Y_{t\mu}|S_{t\mu})}{P_{\rm out}(Y_{t\mu}|S_{t\mu})}\Big)^2\Big)+\frac{P^x_{\rm out}(Y_{t\mu}|S_{t\mu})}{P_{\rm out}(Y_{t\mu}|S_{t\mu})}\frac{P^x_{\rm out}(Y_{t\nu}|S_{t\nu})}{P_{\rm out}(Y_{t\nu}|S_{t\nu})}\Big)^2\,.
    \end{align*}
       With the help of Lemma~\ref{lem:bound_P_out_der} one can see immediately that $U^2_{\mu\nu}\le C(f)P(Z_{t\mu},Z_{t\nu})$, where $P$ is some polynomial with even degrees. Once again, after taking expectation we get a bound by a positive constant $C(f)$.
  
\end{proof}

\section{Proof of Lemma~\ref{APPROX_LEMMA}}
For the reader convenience we repeat the statement of the Lemma below.
\begin{lemma}[Approximations]
Recall $\rho:=\E_{\mathcal{N}(0,1)} \varphi'$ and $\epsilon^2:=\E_{\mathcal{N}(0,1)}\varphi^2-\rho^2$. Let $\tilde\varphi$ be either $\varphi$ or the identity function. Under assumptions (\nameref{Assum:phi} and (\nameref{Assum:f} the following estimates hold:
    \begin{align}
    \label{approxphiprime_appen}
        &\E_{\bW^*}\varphi'(\alpha_{\mu i})= \rho+O\Big(\frac{\Vert\bX_\mu\Vert^2}{d}-1\Big)\,,\\
        \label{approxphi_square_appen}
        &\E_{\bW^*}\varphi^2(\alpha_{\mu i})=\E_{\mathcal{N}(0,1)}\varphi^2+O\Big(\frac{\Vert\bX_\mu\Vert^2}{d}-1\Big)\,,\\
        \label{approxphi_phi_appen}
        &\E_{\bW^*}\varphi(\alpha_{\mu i}) \tilde\varphi(\alpha_{\nu i})=\rho\E_{\mathcal{N}(0,1)}\tilde\varphi'\frac{\bX_\mu^\intercal\bX_\nu}{d}+O\Big(\Big(\frac{\bX_\mu^\intercal\bX_\nu}{\Vert\bX_\nu\Vert^2}\Big)^2\Big)+O\Big(\frac{(\bX_\mu^\intercal\bX_\nu)^2}{\Vert\bX_\nu\Vert^2d}\Big)+O\Big(\frac{\bX_\mu^\intercal\bX_\nu}{d}\Big(\frac{\Vert\bX_\mu\Vert^2}{d}-1\Big)\Big)\,,\\
        \label{approxphiprime_phiprime_appen}
        &\E_{\bW^*}\varphi'(\alpha_{\mu i})\varphi'(\alpha_{\nu i})= \rho^2+O\Big(\frac{\Vert\bX_\mu\Vert^2}{d}-1\Big)+O\Big(\frac{\bX_\mu^\intercal\bX_\nu }{\Vert\bX_\nu\Vert^2}\Big)\,,\\
        \label{approxphi_square_phi_square_appen}
        &\E_{\bW^*}\varphi^2(\alpha_{\mu i})\tilde \varphi^2(\alpha_{\nu i})=\E_{\mathcal{N}(0,1)} \varphi^2\E_{\mathcal{N}(0,1)}\tilde\varphi^2
        +O\Big(\frac{\Vert\bX_\mu\Vert^2}{d}-1\Big)+O\Big(\frac{\bX_\mu^\intercal\bX_\nu }{\Vert\bX_\nu\Vert^2}\Big)\,.
    \end{align}
\end{lemma}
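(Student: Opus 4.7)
The key observation driving the whole lemma is that, conditionally on the inputs $\bX_\mu$ and $\bX_\nu$, the scalar pre-activations $\alpha_{\mu i}$ and $\alpha_{\nu i}$ for a fixed $i$ form a bivariate centered Gaussian vector with variances $\|\bX_\mu\|^2/d$ and $\|\bX_\nu\|^2/d$ and covariance $\bX_\mu^\intercal\bX_\nu/d$. All five identities therefore reduce to expanding Gaussian integrals around unit variances and zero correlation, using the regularity provided by (\nameref{Assum:phi}).

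The univariate identities \eqref{approxphiprime_appen}--\eqref{approxphi_square_appen} follow by writing $\alpha_{\mu i}\distreq\sigma Z$ with $\sigma:=\sqrt{\|\bX_\mu\|^2/d}$ and $Z\sim\mathcal N(0,1)$, then using
\begin{align*}
\E_{\bW^*}F(\alpha_{\mu i})-\E F(Z)=\int_1^{\sigma}\E[F'(sZ)\,Z]\,ds.
\end{align*}
Under (\nameref{Assum:phi}), the integrand is uniformly bounded in $s$ on any bounded interval: directly for $F=\varphi'$ (as $\varphi''$ is bounded), and via $|\varphi(sZ)\,\varphi'(sZ)\,Z|\le c^2 s\,Z^2$ for $F=\varphi^2$ (using Lipschitzness of $\varphi$ and boundedness of $\varphi'$). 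Hence the left-hand side is $O(|\sigma-1|)\le O(|\sigma^2-1|)$, which is the announced bound.

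For the three bivariate identities I would use the orthogonal decomposition $\bX_\mu=c\,\bX_\nu+\bX_\mu^\perp$ with $c:=\bX_\mu^\intercal\bX_\nu/\|\bX_\nu\|^2$ and $\bX_\mu^\perp\perp\bX_\nu$. Since each row $\bW^*_i$ is standard Gaussian, this gives $\alpha_{\mu i}=c\,\alpha_{\nu i}+\beta_i$ with $\beta_i\sim\mathcal N(0,\|\bX_\mu^\perp\|^2/d)$ independent (over $\bW^*$) of $\alpha_{\nu i}$, and the crucial algebraic identity
\begin{align*}
\frac{\|\bX_\mu^\perp\|^2}{d}-1=\Big(\frac{\|\bX_\mu\|^2}{d}-1\Big)-\frac{(\bX_\mu^\intercal\bX_\nu)^2}{\|\bX_\nu\|^2\,d}.
\end{align*}
Taylor-expand $F(c\,\alpha_{\nu i}+\beta_i)$ to third order in $c$ (remainder bounded since $\varphi'''$ is bounded), multiply by $G(\alpha_{\nu i})$, and take expectation using independence of $\beta_i$ and $\alpha_{\nu i}$. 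The oddness of $\varphi$ (hence of $\varphi''$, and of $\varphi\varphi'$) provides two essential cancellations. In \eqref{approxphi_phi_appen}, it kills the zeroth- and second-order contributions, leaving only the first order $c\,\E\varphi'(\beta_i)\cdot\E[\alpha_{\nu i}\tilde\varphi(\alpha_{\nu i})]$; Stein's lemma $\E[\alpha_{\nu i}\tilde\varphi(\alpha_{\nu i})]=(\|\bX_\nu\|^2/d)\,\E\tilde\varphi'(\alpha_{\nu i})$ then cancels $c$ against $\|\bX_\nu\|^2$ to produce exactly the prefactor $\bX_\mu^\intercal\bX_\nu/d$, and substituting the univariate expansions for $\E\varphi'(\beta_i)$ (using the algebraic identity to split its variance gap) and $\E\tilde\varphi'(\alpha_{\nu i})$ yields the three advertised error scales. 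Identities \eqref{approxphiprime_phiprime_appen} and \eqref{approxphi_square_phi_square_appen} are easier: their zeroth order survives and provides the leading constants $\rho^2$ and $\E\varphi^2\,\E\tilde\varphi^2$; their first order vanishes by oddness of $\varphi''$ (resp.\ $\varphi\varphi'$); and the $O(c)$ Taylor remainder delivers the announced $O(\bX_\mu^\intercal\bX_\nu/\|\bX_\nu\|^2)$ correction.

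The main obstacle is really only bookkeeping in \eqref{approxphi_phi_appen}: one must carefully match the three error sources (the variance gap of $\beta_i$, the variance gap of $\alpha_{\nu i}$, and the Taylor remainder in $c$), and use the above algebraic identity to recombine them after multiplication by the prefactor $c\,\|\bX_\nu\|^2/d=\bX_\mu^\intercal\bX_\nu/d$. Beyond this, nothing in the argument is delicate: the entire lemma is driven by the oddness of $\varphi$, Stein's lemma, and the boundedness of the first three derivatives of $\varphi$ guaranteed by (\nameref{Assum:phi}).
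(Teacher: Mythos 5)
Your proposal is correct and follows essentially the same route as the paper: the same interpolation-in-variance argument for the two univariate estimates (bounding the integrand via the bounded second derivative and the Lipschitz/odd properties of $\varphi$), and for the bivariate ones the same orthogonal decomposition (your $\beta_i$ is exactly the paper's $\alpha_{\mu\perp\nu}$) combined with Taylor expansion, Stein's lemma and the oddness of $\varphi$ and $\varphi''$. The only cosmetic difference is in the bookkeeping of the variance gap: where you propagate $\Vert\bX_\mu^\perp\Vert^2/d-1$ through the Pythagorean identity, the paper instead Taylor-expands $\E_{\bW^*}\varphi'(\alpha_{\mu\perp\nu})$ back around $\alpha_{\mu i}$ at a cost of $O(\bX_\mu^\intercal\bX_\nu/\Vert\bX_\nu\Vert^2)$ before invoking the univariate estimate, which is what reproduces the error terms exactly as stated.
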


\begin{proof}
    Starting from \eqref{approxphiprime_appen}, using the fundamental theorem of integral calculus we get
    \begin{align}
        |\E_{\bW^*}\varphi'(\alpha_{\mu i})-\rho|=|\E\varphi'\Big(z\sqrt{\frac{\Vert\bX_\mu\Vert^2}{d}}\Big)-\rho| \leq \int_0^1 ds \E\frac{|z|}{2}\Big|\varphi''\Big(z\sqrt{s\frac{\Vert\bX_\mu\Vert^2}{d}+1-s}\Big)\Big|\frac{\big|\frac{\Vert\bX_\mu\Vert^2}{d}-1\big|}{\sqrt{s\frac{\Vert\bX_\mu\Vert^2}{d}+1-s}}
    \end{align}
    where the average is only over $z\sim\mathcal{N}(0,1)$. If we use the bound on the second derivative $\varphi''\leq \bar{K}$ we decouple completely $s$ from $z$ and we can compute both the average ($\E|z|\leq\sqrt{\E z^2}=1$) and the integral, obtaining
    \begin{align}
        |\E_{\bW^*}\varphi'(\alpha_{\mu i})-\rho|\leq \bar{K}\frac{\big|\frac{\Vert\bX_\mu\Vert^2}{d}-1\big|}{\frac{\Vert\bX_\mu\Vert}{\sqrt{d}}+1}\leq \bar{K}\Big|\frac{\Vert\bX_\mu\Vert^2}{d}-1\Big|\,.
    \end{align}

    Let us now focus on \eqref{approxphi_square_appen}. In the same spirit as the previous point:
    \begin{align}\label{controlphi_squareapprox}
        |\E_{\bW^*}\tilde\varphi^2(\alpha_{\mu i})-\E_{\mathcal{N}(0,1)}\tilde\varphi^2| =|\E\tilde\varphi^2\Big(z\sqrt{\frac{\Vert\bX_\mu\Vert^2}{d}}\Big)-\E_{\mathcal{N}(0,1)}\tilde\varphi^2| \leq  \bar{K}\int_0^1 ds \E |z\tilde\varphi(\cdots)|\frac{\big|\frac{\Vert\bX_\mu\Vert^2}{d}-1\big|}{\sqrt{s\frac{\Vert\bX_\mu\Vert^2}{d}+1-s}}
    \end{align}
    with $\tilde\varphi'\leq \bar{K}$ and the argument of $\tilde\varphi$ is $z(s(\frac{\Vert\bX_\mu\Vert^2}{d}-1)+1)^{1/2}$.
    Here, before being able to integrate over $s$ we need to bound the expectation $\E |z\tilde\varphi(\cdots)|$. Recall that $\varphi$ is Lipschitz, thus a simple bound is given by $|\tilde\varphi(\cdots)|\leq \bar{K}|(\cdots)|$ using $\tilde\varphi'\leq \bar{K}$ and that $\tilde \varphi(0)=0$ as it is odd. 
    This yields immediately:
    \begin{align} 
        |\E_{\bW^*}\tilde\varphi^2(\alpha_{\mu i})-\E_{\mathcal{N}(0,1)}\tilde\varphi^2|  \leq  \bar{K}^2\Big|\frac{\Vert\bX_\mu\Vert^2}{d}-1\Big|\,.
    \end{align}

    Consider now \eqref{approxphi_phi_appen}. In what follows we drop the $i$-index for brevity. Let $$\alpha_{\mu\perp\nu}:=\alpha_{\mu}-\alpha_{\nu}\frac{\EE \alpha_{\mu } \alpha_{\nu }}{\EE^2 \alpha_{\nu }}=\alpha_{\mu}-\alpha_{\nu}\frac{\bX_\mu^\intercal\bX_\nu}{\Vert\bX_\nu\Vert^2}\,,$$ that is independent of $\alpha_\nu$. Now we expand $\varphi$ around $\alpha_{\mu\perp\nu}$:
    \begin{align}
        \begin{split}
            \E_{\bW^*}\varphi(\alpha_{\mu })\tilde \varphi(\alpha_{\nu })&
            =\E_{\bW^*}\varphi'(\alpha_{\mu\perp\nu})\E_{\bW^*}\tilde\varphi'(\alpha_\nu)\frac{\bX_\mu^\intercal\bX_\nu }{d}+\frac{1}{2}\E_{\bW^*}\varphi''(p)\tilde\varphi(\alpha_\nu)\alpha_\nu^2\Big(\frac{\bX_\mu^\intercal\bX_\nu}{\Vert\bX_\nu\Vert^2}\Big)^2\,.
        \end{split}
    \end{align}
    The zero-th order is zero because $\varphi$ is odd, and we have performed Gaussian integration by parts in the first order. $p$ is a point in between $\alpha_{\mu\perp\nu}$ and $\alpha_\nu$. Now we expand again $ \varphi'(\alpha_{\mu\perp\nu})$ around the initial point $\alpha_\mu$:
    \begin{align}\label{addingbackz}
        \E_{\bW^*}\varphi'(\alpha_{\mu\perp\nu})=\E_{\bW^*}\varphi'(\alpha_{\mu})-\E_{\bW^*} \varphi''(p)\alpha_\nu\frac{\bX_\mu^\intercal\bX_\nu}{\Vert\bX_\nu\Vert^2}=\E_{\bW^*}\varphi'(\alpha_{\mu})+O\Big(\frac{\bX_\mu^\intercal\bX_\nu}{\Vert\bX_\nu\Vert^2}\Big)
    \end{align}
    where $p$ has the same meaning as before, and we used $ \varphi''\leq\bar{K}$. At this point it suffices to use \eqref{approxphiprime_appen} on $\E_{\bW^*} \varphi'(\alpha_\mu)$ and $\E_{\bW^*} \tilde \varphi'(\alpha_\nu)$ and the estimate is proved.

    Let us move to \eqref{approxphiprime_phiprime_appen}. As for the previous point, we follow the orthogonalization procedure:
    \begin{align}
    \E_{\bW^*}\varphi'(\alpha_\mu)\varphi'(\alpha_\nu)&=\E_{\bW^*}\varphi'(\alpha_{\mu\perp\nu})\E_{\bW^*}\varphi'(\alpha_\nu)+
    \E_{\bW^*}\varphi''(p)\varphi'(\alpha_\nu)\alpha_\nu\frac{\bX_\mu^\intercal\bX_\nu }{\Vert\bX_\nu\Vert^2}\\
    &=\E_{\bW^*}\varphi'(\alpha_{\mu\perp\nu})\E_{\bW^*}\varphi'(\alpha_\nu)+O\Big(\frac{\bX_\mu^\intercal\bX_\nu }{\Vert\bX_\nu\Vert^2}\Big)\,.
    \end{align}Now we use \eqref{addingbackz} and \eqref{approxphiprime_appen} to conclude:
    \begin{align}
        \E_{\bW^*}\varphi'(\alpha_\mu)\varphi'(\alpha_\nu)=\rho^2+O\Big(\frac{\Vert\bX_\mu\Vert^2}{d}-1\Big)+O\Big(\frac{\bX_\mu^\intercal\bX_\nu }{\Vert\bX_\nu\Vert^2}\Big)
    \end{align}as in the statement.

    Now we move to \eqref{approxphi_square_phi_square_appen}. As before, we expand $\varphi^2(\alpha_\mu)$ around $\alpha_{\mu\perp\nu}$:
    \begin{align}
    \begin{split}
    E_{\bW^*}\varphi^2(\alpha_\mu)\tilde\varphi^2(\alpha_\nu)=\E_{\bW^*}\varphi^2(\alpha_{\mu\perp\nu})\tilde\varphi^2(\alpha_{\nu})+2\E_{\bW^*}\int_0^1 ds\varphi(\alpha_{\mu,\nu}(s))\varphi'(\alpha_{\mu,\nu}(s)) \tilde\varphi^2(\alpha_\nu)\alpha_\nu\frac{\bX_\mu^\intercal\bX_\nu }{\Vert\bX_\nu\Vert^2}
    \end{split}
    \end{align}where $\alpha_{\mu,\nu}(s)=\alpha_{\mu\perp\nu}+s\alpha_\nu \bX_\mu^\intercal\bX_\nu/\Vert\bX_\nu\Vert^2$. The integral on the r.h.s. can be bounded in different ways. For instance, one can first integrate the $\alpha_\nu$ by part, recalling that $\alpha_{\mu\perp\nu}$ is independent of it, and then exploit the fact that both $\varphi$ and $\tilde\varphi$ are Lipschitz. This yields the $O(\bX_\mu^\intercal\bX_\nu /\Vert\bX_\nu\Vert^2)$ in the statement.

    The leading term $\E_{\bW^*}\varphi^2(\alpha_{\mu\perp\nu})\tilde\varphi^2(\alpha_{\nu})$ can be split into $\E_{\bW^*}\varphi^2(\alpha_{\mu\perp\nu})\E_{\bW^*}\tilde\varphi^2(\alpha_{\nu})$ thanks to the orthogonalization. Expanding $\varphi^2(\alpha_{\mu\perp\nu})$ around $\alpha_\mu$ we get
    \begin{align}
        \E_{\bW^*} \varphi^2(\alpha_{\mu\perp\nu})=\E_{\bW^*} \varphi^2(\alpha_{\mu})-2\int_0^1ds\,\E_{\bW^*}\varphi(\alpha_{\mu,\nu}(s))\varphi'(\alpha_{\mu,\nu}(s))\alpha_\nu\frac{\bX_\mu^\intercal\bX_\nu}{\Vert\bX_\nu\Vert^2}
    \end{align}
    with $\alpha_{\mu,\nu}(s)$ as above. The integral contributes again with the same order as the one above, therefore
    \begin{align}
        \E_{\bW^*}\varphi^2(\alpha_\mu)\tilde\varphi^2(\alpha_\nu)=\E_{\bW^*}\varphi^2(\alpha_\mu)\E_{\bW^*}\tilde\varphi^2(\alpha_\nu)+O\Big(\frac{\bX_\mu^\intercal\bX_\nu}{\Vert\bX_\nu\Vert^2}\Big)\,.
    \end{align}
    Finally, it only remains to apply \eqref{approxphi_square} to both the factors in the leading contribution on the r.h.s., which yields the missing remainder $O({\Vert\bX_\mu\Vert^2}/{d}-1)$ in the statement.

\end{proof}

\printbibliography

\end{document}